\newtcolorbox{outcome}{floatplacement=ht, float}
\newcommand*{\affmark}[1][*]{\textsuperscript{#1}}
\newcommand*\samethanks[1][\value{footnote}]{\footnotemark[#1]}
\newtheorem{theorem}{Theorem}
\newtheorem{prop}{Proposition}
\newtheorem{lemma}{Lemma}
\newtheorem{assumption}{Assumption}
\newtheorem{corollary}{Corollary}
\newtheorem{define}{Definition}
\newcommand{\dtv}{d_{\text{TV}}}
\newcommand{\distp}{\mathcal{P}}
\newcommand{\distq}{\mathcal{Q}}
\newcommand{\defeq}{\vcentcolon=}
\title{Online Continual Adaptation with Active Self-Training}
\author{
\parbox{\linewidth}{\centering
Shiji Zhou\textsuperscript{\rm 1},
Han Zhao\textsuperscript{\rm 2},
Shanghang Zhang\thanks{Corresponding Authors} \textsuperscript{\rm 3},
Lianzhe Wang\textsuperscript{\rm 1},
Heng Chang\textsuperscript{\rm 1},
Zhi Wang\textsuperscript{\rm 4},
Wenwu Zhu\samethanks{} \textsuperscript{\rm 5}} \\
}
\begin{document}

\maketitle
\vspace{-4em}
 \begin{center}
 \affmark[1]Tsinghua-Berkeley Shenzhen Institute, \affmark[4]Tsinghua Shenzhen International Graduate School, \\ and \affmark[5]Department of Computer Science and Technology, Tsinghua University \\ 
 \affmark[2]Department of Computer Science, University of Illinois at Urbana-Champaign\\
 \affmark[3]Berkeley AI Research (BAIR), University of California, Berkeley\\
 \texttt{\{zsj17,wanglz20,changh17\}@mails.tsinghua.edu.cn, hanzhao@illinois.edu, shz@eecs.berkeley.edu, wangzhi@sz.tsinghua.edu.cn, wwzhu@tsinghua.edu.cn}
 \end{center}
 \vspace{1em}

\begin{abstract}
  Models trained with offline data often suffer from continual distribution shifts and expensive labeling in changing environments. This calls for a new online learning paradigm where the learner can continually adapt to changing environments with limited labels. In this paper, we propose a new online setting -- Online Active Continual Adaptation, where the learner aims to continually adapt to changing distributions using both unlabeled samples and active queries of limited labels. To this end, we propose Online Self-Adaptive Mirror Descent (OSAMD), which adopts an online teacher-student structure to enable online self-training from unlabeled data, and a margin-based criterion that decides whether to query the labels to track changing distributions. Theoretically, we show that, in the separable case, OSAMD has an $O({T}^{2/3})$ dynamic regret bound under mild assumptions, which is aligned with the $\Omega(T^{2/3})$ lower bound of online learning algorithms with full labels. In the general case, we show a regret bound of $O({T}^{2/3} + \alpha^* T)$, where $\alpha^*$ denotes the separability of domains and is usually small. Our theoretical results show that OSAMD can fast adapt to changing environments with active queries. Empirically, we demonstrate that OSAMD achieves favorable regrets under changing environments with limited labels on both simulated and real-world data, which corroborates our theoretical findings.
\end{abstract}

\section{Introduction}
Machine learning models, trained with data collected from closed environments, often suffer from continual distribution shift and expensive labeling in open environments. For example, a self-driving recognition system trained with data collected in the daytime may continually degrade when going towards nightfall~\citep{bobu2018adapting,wu2019ace}. The problem can be avoided by collecting and annotating sufficient training data to cover all the possible distributions at the test time. However, such data annotation is prohibitively expensive in many applications~\citep{zhang2020collaborative}. In particular, for many scenarios, the distribution shifts constantly appear over time~\citep{kumar2020understanding}, making it impossible to collect and annotate sufficient training data for a certain domain. This calls for a new online system that can continually adapt to the changing domain using limited labels.

The continual domain shift severely challenges the conventional domain adaptation methods~\citep{tzeng2014deep,ganin2015unsupervised,hoffman2018cycada}, for most of them are designed to adapt to a fixed target domain~\citep{Su_2020_WACV,prabhu2021active} (Figure~\ref{fig:Intro} bottom). Some previous works consider gradual domain shift~\citep{bobu2018adapting,wu2019ace,kumar2020understanding}, where the data distribution gradually evolves from batch to batch, but it is not realistic to model the continual shift that happens in continuous time. The adaptive online learning~\citep{besbes2015non} provides a classical theoretical framework to deal with changing environments. However, it requires the target data to be fully labeled (Figure~\ref{fig:Intro} middle), which may be infeasible. Furthermore, it remains an open problem for online learning with limited labels (online active learning) under continual domain shift~\citep{lu2016online,shuji2017budget}. Recent work~\citep{chen2021active} studies online active domain adaptation for regression problem under covariant (i.e. $P(X)$) shift, but can not deal with classification problem under joint distribution (i.e. $P(X,Y)$) shift, which is more general and realistic~\citep{long2017deep,long2018conditional}.

\begin{figure}
    \centering
    \includegraphics[width=0.6\linewidth]{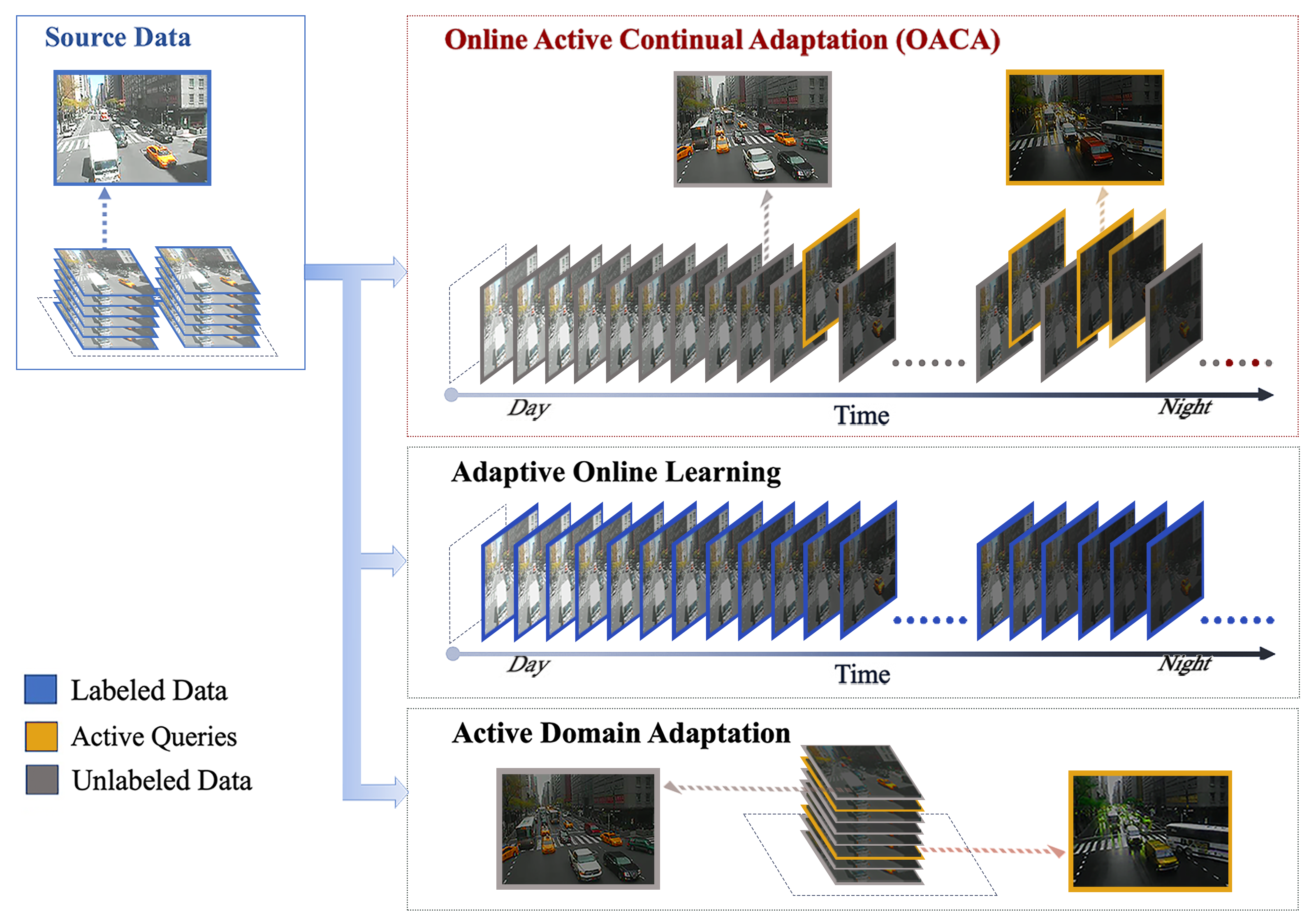}
    \caption{Illustration of OACA and previous works. The environments change continually from daytime to nightfall, introducing a shift of illumination and weather conditions. We model such a problem as an online active continual adaptation, where the online learner adapts to the environment with active queries of limited labels. In contrast, adaptive online learning requires full labels, and active domain adaptation can only adapt to a fixed domain.
    }
    \label{fig:Intro}
\end{figure}

To the best of our knowledge, no previous work has considered the online continual adaptation for classification with limited labels\footnote{Please refer to the detailed discussion in our literature view in Appendix. \ref{sec:related}}. To fill this gap, we formulate the Online Active Continual Adaptation (OACA) problem, where learners start with initial models and aim to minimize the dynamic regret caused by the distributional shifts using unlabeled data and active queries.

To resolve this problem, we propose the Online Self-Adaptive Mirror Descent (OSAMD) algorithm, which adopts an \emph{online teacher-student structure} to enable the \emph{self-adaptation} from unlabeled data: an ``aggressive'' model that updates actively using limited label queries with aggressive stepsizes, to track the max-margin classifier and provide accurate pseudolabels; a ``conservative'' model that adapts continually using the pseudolabels (taught by the aggressive model) with conservative stepsizes, to track the domain and minimize the dynamic regret. The active queries are given by a margin-based strategy that measures the confidence of the pseudolabels to query uncertain samples.

Theoretically, we show that OSAMD achieves an $O(T^{2/3})$ dynamic regret under mild assumptions in the separable case, which is aligned with the $\Omega(T^{2/3})$ lower bound of full-label online algorithms. We then extend this result to the general non-separable case, and derive a dynamic regret of $O(T^{2/3} + \alpha^* T),$ where $\alpha^*$ represents the separability of the data distribution. Since $\alpha^*$ is often a small constant by the representation ability of machine learning models~\citep{kumar2020understanding}, the bias $O(\alpha^* T)$ is small such that the regret still approximates the lower bound. The above results lead to algorithmic insights that OSAMD is competitive with the optimal model in hindsight with fast convergence. 

Empirically, we establish a simulation with changing environment to corroborate our theoretical findings. The results show that OSAMD performs accurately with limited labeled samples, and the regret aligns with Online Mirror Descent (OMD) with full labels. Furthermore, we extend OSAMD to deep learning on the real-world datasets Portraits~\citep{ginosar2015century} and Cover-Type~\citep{blackard1999comparative} to verify its practical effectiveness. OSAMD attains $93.7\%$(Portraits)/$76.8\%$(Cover-Type) accuracy using only $3.8\%$/$2.3\%$ labels, comparing with $94.0\%$/$76.8\%$ accuracy of OMD with full labels. While OMD with uniform query and online active learning baseline only obtain $91.8\%$/$75.3\%$ and $92.0\%$/$76.5\%$ with $3.8\%$/$2.3\%$ queries, respectively. Finally, our ablation study shows that both the self-adaptation and active strategy contribute to the remarkable performance. 

\paragraph{Our contributions}
\begin{enumerate}
    \item   We are the first to formulate the problem of Online Active Continual Adaptation, which models the online continual domain adaptation with limited labels under realistic assumptions; 
    \item   We propose an effective online self-adaptive algorithm named OSAMD with the novel design of the online teacher-student structure supported by strong theoretical guarantees;
    \item   We provide strict and novel dynamic regret analysis, highlighting the power of the proposed online teacher-student structure to get tight regret bound as in the regime of full-label online learning; 
    \item   We demonstrate the effectiveness of our algorithm on both simulated and real-world datasets with changing environments, corroborating our theory.
\end{enumerate}

\section{Preliminaries}\label{sec:preliminary}
We first give a formal definition of the online convex optimization (OCO) with dynamic regret, and then briefly review the classic online mirror descent (OMD) algorithm. Next, we introduce the problem setting of domain adaptation (DA) and a statistical distance that will be used in our analysis.

\subsection{Online Convex Optimization with Dynamic Regret}
The basic protocol of Online Convex Optimization (OCO)~\citep{hazan2016introduction} is: at each time step $t=1,\dots T,$ the online learner takes a decision $w_t$ in a convex set $\mathcal{K}.$ After that, the environment reveals a convex loss function $l_t: \mathcal{K} \rightarrow \mathbb{R},$ and the online learner suffers a loss $l_t(w_t).$ The dynamic regret is a theoretical metric for an online algorithm in changing environments, defined as
\begin{align*}
    &\text{D-Regret} \defeq \sum_{t=1}^T l_t(w_t) - \sum_{t=1}^T l_t(w_t^*),
\end{align*}
where $w_t^* = \arg\min_{w\in\mathcal{K}} l_t (w).$ It is well known that a sublinear dynamic regret bound is not possible unless specific constraints are made about the environments~\citep{zinkevich2003online}. The first type of such constraint is the \emph{path-length}~\citep{zinkevich2003online,hall2013dynamical}: $C_T \defeq \sum_{t=1}^{T-1} \|w_t^* - w_{t+1}^*\|,$
which measures how the optimal models change with the environment. Others include the \emph{temporal variability}~\citep{besbes2015non,campolongo2020temporal}: $V_T \defeq \
sum_{t=1}^{T-1} \sup_{w\in\mathcal{K}} \|l_t(w) - l_{t+1}(w)\|,$ which measures the variation of the loss functions. 

Online Mirror Descent (OMD)~\citep{hazan2016introduction} is a general and classic algorithm, where the decision is updated by 
\begin{equation*}
    {w}_{t+1} = \arg\min_{w\in \mathcal{K}} \eta \langle \nabla l_t({w}_{t}), w\rangle + D_{\mathcal{R}} (w,{w}_{t})
\end{equation*}
for $t=1,\ldots,T,$ where $\eta$ denotes the stepsize, and $D_{\mathcal{R}}(a,b) \defeq \mathcal{R}(a) - \mathcal{R}(b)  - \langle \nabla \mathcal{R}(b) , a - b\rangle$ denotes the Bregman divergence with regularizer ${\mathcal{R}}.$ 

\subsection{Domain Adaptation}
Domain adaptation (DA)~\citep{zhao2020review} is a typical machine learning method to learn a model from a source domain $\distp$ that can perform well on a target domain $\distq.$ Researches~\citep{wei2020theoretical} often assume that the source domain and target domain are different but measured by some discrepancies. In this paper, we utilize the celebrated total variation (TV) distance $\dtv$ to describe the similarity between two distributions over the same sample space:
\begin{define}[Total Variation]
We use $\dtv(\distp, \distq)$ to denote the total variation distance between distributions $\distp$ and $\distq$:
    \begin{equation*}
    \dtv(\distp, \distq)\defeq \sup_{E}|\distp(E) - \distq(E)|,
    \end{equation*}
where the supremum is over all the measurable events. 
\end{define}
Several variants of the TV distance have been proposed and used in the domain adaptation~\citep{ben2010theory,zhao2018multiple,zhao2019learning}. It is also worth pointing out that other metrics can and have been used in the literature, e.g., the Wasserstein infinity~\citep{kumar2020understanding} distance and the maximum mean discrepancy~\citep{long2015learning}, as described in our appendix. 

\section{The Online Active Continual Adaptation Problem}
\label{Sec:ProblemSetting}
In this section, we first formulate the online active continual adaptation (OACA) problem. We then formally introduce the assumptions used in our analysis and provide justifications for their necessities.

\subsection{Problem Formulation}
For the purpose of presentation, we consider an online binary classification task\footnote{This can be readily extended to the multi-class case, as shown in Appendix. \ref{supp:multiclass}} of sequentially predicting labels $y_t\in \{1,-1\}$ from input features $x_t\in \mathcal{X}$ for round $t=1,\dots,T$. In each round $t,$ assume that our prediction model is parameterized by a vector $w_t\in \mathcal{K}$, and it outputs a soft label prediction\footnote{The model output before the sign function, its absolute value is related to the distance to the decision boundary.} over instance $x_t$ denoted as $H_t(w_t) = H(w_t;x_t)$. The prediction result suffers a  instantaneous loss as $f_t(w_t) = f(w_t;x_t,y_t), x_t\in \mathcal{X}, y_t \in \{-1,1\}.$
We assume that $H, f$ (unrelated with $x,y$) are known by the learner. We present the interaction between the learner and the environment as Figure~\ref{fig:OACA}. Specially, we assume each data sample $(x_t, y_t)$ comes from a different distribution by the continual environmental change, which leads to different distributions at different times. 
\begin{figure}[!t]
	\centering
	\fbox{
		\begin{minipage}{0.95\linewidth}
			\begin{enumerate}[\hspace{0em}1.]
        \item The data distribution begins with $P_1 (X,Y).$
        \item The learner has enough data samples from $P_1 (X,Y),$ and chooses an online algorithm $\mathcal{A}.$
        \item The adversary chooses a sequence of data distribution $\{P_2, \dots, P_T\}.$
        \item For each $t = 1,\dots, T:$
        \begin{enumerate}
        \item The data $(x_t,y_t)$ is sampled from joint distribution $P_t (X,Y).$
        \item Instance $x_t$ is revealed to the learner.
        \item The learner then chooses action $w_t,$ incurring a loss on the domain $l_t(w_t)=\mathbb{E}_{(x,y)\sim P_t}f(w_t;x,y)$ in hindsight.
        \item The active agent decides whether to query the label. If query, true label $y_t$ are revealed.
    \end{enumerate}
\end{enumerate}
		\end{minipage}
	}
	\caption{\emph{Online Active Continual Adaptation setting.}}
	\label{fig:OACA}
\end{figure}
To measure how the learner adapts to the environment, we use the theoretical metric of expected dynamic regret to measure the adaptation performance of the online learner:
\begin{equation*}
    \text{D-Regret}^{\mathcal{A}}(\{P_t\},T) \defeq  \mathbb{E}^{\mathcal{A}}[\sum_{t=1}^T l_t(w_t)] - \sum_{t=1}^T l_t(w_t^*),
\end{equation*}
where $l_t(w_t) = \mathbb{E}_{(x,y)\sim P_t}f(w_t;x,y)$ is the expected loss, and the optimal action in hindsight is defined as $w_t^* = \arg\min_{w\in \mathcal{K}} l_t(w)$. The rest expectation $\mathbb{E}^{\mathcal{A}}$ is on the online decision $w_t$ provided by a potentially randomized algorithm $\mathcal{A}.$ 

In particular, we here use the expected loss that reflects the performance on the distribution $P_t$ not the instantaneous sample $(x_t, y_t)$. Intuitively, we study the continual domain adaptation problem, where the expected loss reflects the performance on the domain (environment), while the instantaneous loss only reflects the performance on individual samples. On the technical side, although the distributional change is continual, the change between consecutive samples could be large due to the randomness in sampling, leading to an unbounded dynamic regret~\citep{besbes2015non}.

\subsection{Assumptions}
First, we assume the domain shift is continual and bounded.
\begin{assumption}[Continual Domain Shift]\label{amp:gradualshift} 
    There exists a constant $V_T,$ s.t.$\sum_{t=1}^{T-1}d_{TV}\left(P_t, P_{t+1}\right) \leq {V_T}.$ In other words, the total domain shift is bounded.
\end{assumption}
This assumption is closed related to the temporal variability constrain~\citep{besbes2015non}, where we specify the expectation change as domain shift.

Next, we assume a niceness condition of the environments by its separability.
\begin{assumption}[Separation]\label{amp:separable}
    For each time step $t\in[T]$, the data distribution $P_t$ can be classified almost correctly with a margin $R$, i.e., there exists $v_t \in \mathcal{K}$ and a constant $\alpha^*$ such that $\mathbb{E}_{(x_t,y_t)\sim P_t} [\max\{0,R - y_t H_t(v_t)\}] \leq \alpha^*.$
    Furthermore, there exists a constant $C_T$ such that
    $\sum_{t=1}^{T-1}\|v_t-v_{t+1}\| \leq C_T, $
    i.e., the classifiers with margin $R$ change continually.
\end{assumption}
Note that the coefficient $\alpha^*$ represents the optimal hinge loss for the classifier space. It is commonly assumed to be small by previous works \citep{kumar2020understanding,wei2020theoretical}. The constraint of $\sum_{t=1}^{T-1}\|v_t-v_{t+1}\|$ is similar to the path-length regularity~\citep{zinkevich2003online} in online learning. Intuitively, $v_t$ can be viewed as a max-margin classifier, and the continual rotation is bounded by $C_T$. 

Finally, we present the following standard assumptions in online learning.
\begin{assumption}[Convexity]\label{amp:convexity}
    We assume that $f(\cdot),-y H(\cdot)$ are all convex functions.
\end{assumption}

\begin{assumption}[Smoothness]\label{amp:boundedgradient}
    We assume that $f$ and $H$ are differentialable and $G$-Lipschitz, i.e. $\|\nabla f(w;x,y)\|_*, \|\nabla H(\theta; x)\|_* \leq G, \forall x,y,w,$ where $\|\cdot\|_*$ is the dual norm of $\|\cdot\|.$ Furthermore, f is $L$-smooth, i.e. $\|\nabla f(w) - \nabla f(w')\|\leq L\|w-w'\|.$
\end{assumption}

\begin{assumption}[Bounded Decision Space and Function]\label{amp:boundeddecisionspace}
    The diameter of decision space $\mathcal{K}$ (convex set in $\mathbb{R}^n$) is bounded, i.e. there exists $D > 0$ such that $\max_{w,w'\in \mathcal{K}}\|w - w'\|\leq D.$ The function value is bounded, i.e. there exists $F > 0$ such that $f(w;x,y)\leq F, \forall w,x,y.$
\end{assumption}

\section{The Online Self-Adaptive Mirror Descent Algorithm}
\label{Sec:Algorithm}

Here we describe the proposed Online Self-Adaptive Mirror Descent (OSAMD) in Algorithm~\ref{algorithm:OAST}. To make our description easier to follow, we first introduce the following procedures.
\begin{enumerate}[\hspace{0em}1.]
    \item \textbf{Pseudolabel:} Pseudolabel the example $x_t$ by an aggressive model (parameterized by $\theta$).
    \item \textbf{Self-adaptation:} Before making the decision, the learner trusts the pseudolabel and self adapts the conservative model (parameterized by $w,\hat{w}$) by implicit mirror descent. 
    \item \textbf{Active query:} The active agent decides whether to query the label based on the margin measured by $|H_t (\theta)|.$ If query, update the aggressive model by mirror descent with the true label by adaptive stepsizes. The conservative model is updated with pseudolabel or queried label by mirror descent.
\end{enumerate}
\begin{algorithm*}[tb]
   \caption{Online Self-Adaptive Mirror Descent}
   \label{algorithm:OAST}
\begin{algorithmic}
   \STATE {\bfseries Input:} Active probability controller $\sigma,$ aggressive step size $\tau_t,$ conservative step size $\eta,$ initial data.
   \STATE {\bfseries Initial:} Learn from initial data, get aggressive model $\theta_1$ and conservative model $\hat{w}_1 $.
   \FOR {$t = 1,\dots, T$ }
   \STATE observe data sample $x_t$ 
   \STATE \textbf{pseudolabel:} 
   \STATE \quad give the pseudolabel provided by the aggressive model $\hat{y}_t = sign(H_t(\theta_t))$
   \STATE \textbf{self-adaptation:} 
   \STATE \quad adapt the conservative model $w_t = \arg\min_{w\in \mathcal{K}} \eta f(w;x_t,\hat{y}_t) + D_{\mathcal{R}} (w,\hat{w}_{t})$, and then make the decision
   \STATE \textbf{active query:} 
   \STATE \quad draw a Bernoulli random variable with probability $Z_t\sim Bernoulli(\sigma / (\sigma + |H_t(\theta_t) |))$
   \IF {$Z_t = 1$}
   \STATE query label $y_t,$ and let $\tilde{y}_t = {y}_t$
   \STATE update the aggressive model by  $\theta_{t+1} = \arg\min_{\theta\in \mathcal{K}} - \tau_t \langle y_t \nabla H_t(\theta_t), \theta\rangle + D_{\mathcal{R}} (\theta,\theta_t)$
   \ELSE 
   \STATE let $\theta_{t+1} = \theta_{t}$ and $\tilde{y}_t = \hat{y}_t$
   \ENDIF
   \STATE update the conservative model by $\hat{w}_{t+1} = \arg\min_{w\in \mathcal{K}} \eta \langle \nabla f({w}_{t};x_t, \tilde{y}_t), w\rangle + D_{\mathcal{R}} (w,\hat{w}_{t})$ 
   \ENDFOR
\end{algorithmic}
\end{algorithm*}

\paragraph{Intuitive description}
At a high level, we design an ``aggressive'' model to track the max-margin classifier in order to produce correct pseudolabels. On the other hand, the conservative model is updated with pseudolabels with the goal of minimizing the dynamic regret. Intuitively, the aggressive model is updated with an aggressive stepsize, thus can track the max-margin classifier by limited updates with the true labels and provide trustful pseudolabels, although its regret might be large. The trustful pseudolabels enable the learner to ``look ahead'' with the incoming label and self-adapt the conservative model before making the final decision, leading to a lower regret. Our active query agent measures the uncertainty of pseudolabels by the margin between the data samples and the decision boundary, i.e., $|H_t (\theta)|,$ and tends to query the uncertain samples. Then update the aggressive model with the active queries in time. Finally, the conservative model is updated with highly confident pseudolabels or queried real labels by a fixed conservative stepsize, and thus keeps tracking the continual domain shift.

\paragraph{Online teacher-student structure}
Our novel design of running two models $\theta$ and $w,$ where the aggressive model $\theta$ teaches the adaptation of the conservative model $w$, is motivated by the special property of continual domain shifts. Specifically, the max-margin classifier shifts continually and will not ``cross over'' the margin in a short time, thus the aggressive model does not need to update frequently, since the old model still can give correct pseudolabels. When the max-margin classifier ``crosses over'' the margin, the active agent will detect the uncertainty, then the aggressive model $\theta$ needs to track the max-margin classifier shift with an ``aggressive'' stepsize. On the other hand, as the continual domain shift leads to the continual change of the expected loss (on the distribution) $l_t,$ the optimal minimizer $w_t^*$ evolves continually, leading to the need for a conservative model $w$ that frequently adapts to the shift using a ``conservative'' stepsize.

\paragraph{Remark} Our pseudolabel and self-adaptation in the online setting, where we adopt a teacher-student structure, are different from the existing works in the offline setting~\citep{zhao2020review}. Also, there is no previous online learning algorithm that uses the same ideas to the best of our knowledge. In fact, this is an open question for online active learning under non-stationary scenarios~\citep{lu2016online,shuji2017budget}.

\section{Theoretical Analysis}
\label{Sec:Analysis}
In this section, we analyze the dynamic regret bound of the proposed algorithm. We first begin with the separable case, and theoretically show the necessity of active query and the efficacy of the online teacher-student structure. Finally, we generalize the results to the non-separable case. 
\subsection{Separable Case}
We begin with analyzing the regret bound in the separable case, i.e., $\alpha^*=0.$ Since the pseudolabel errors are highly related to the regret bound, we first present the following lemma:

\begin{lemma}[Pseudolabel Errors]\label{thm:labelError}
  Let the regularizer $\mathcal{R}: \mathcal{K} \mapsto \mathbb{R}$ be a 1-strongly convex function on $\mathcal{K}$ with respect to a norm $\|\cdot\|.$ Assume that $D_{\mathcal{R}}(\cdot, \cdot)$ satisfies $D_{\mathcal{R}} (x,z)-D_{\mathcal{R}}(y,z) \leq \gamma \|x-y\|, \forall x,y,z \in \mathcal{K}.$ Set 
  $
      \tau_t = \frac{\max\{0,\sigma-y_t H_t(\theta_t)\}}{\| \nabla H_t(\theta_t)\|^2_*},\text{ and }\sigma \leq R.
  $
  If $\alpha^*=0,$ the expected number of pseudolabel errors made by Algorithm~\ref{algorithm:OAST} is bounded by
  \begin{align*}
        \mathbb{E}[\sum_{t=1}^T M_t] \leq  \frac{2G^2}{\sigma^2} (\gamma C_T + \epsilon_v) ,
    \end{align*}
    where $M_t = \mathbf{1}_{\hat{y}_t \neq y_t}$ is the instantaneous mistake indicator, and $\epsilon_v = D_{\mathcal{R}}(\theta_1, v_1)$.
\end{lemma}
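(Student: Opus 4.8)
The plan is to run a potential-function argument on the aggressive model $\theta_t$, using the Bregman divergence $D_{\mathcal{R}}(\cdot, v_t)$ toward the (time-varying) max-margin classifier $v_t$ as the potential, in the spirit of the classical perceptron/mirror-descent mistake bounds but adapted to (i) the adaptive stepsize $\tau_t$, (ii) the drift of $v_t$ controlled by $C_T$, and (iii) the randomized querying. The key observation is that a pseudolabel error at time $t$ (i.e.\ $M_t = 1$) forces $y_t H_t(\theta_t) \le 0$, hence $\tau_t = \sigma / \|\nabla H_t(\theta_t)\|_*^2 \ge \sigma/G^2$ is bounded below; so each mistake triggers an update with a non-negligible stepsize. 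I will therefore relate $\sum_t M_t$ to the cumulative per-step drop in the potential.

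First I would write out one step of the aggressive update. On rounds where $Z_t = 1$, $\theta_{t+1}$ is the mirror-descent step on the linearized hinge surrogate $-\tau_t \langle y_t \nabla H_t(\theta_t), \theta\rangle$. Using 1-strong convexity of $\mathcal{R}$ and the standard three-point identity for Bregman divergences, I get the one-step inequality
\begin{align*}
D_{\mathcal{R}}(v_t, \theta_{t+1}) \le D_{\mathcal{R}}(v_t, \theta_t) - \tau_t \langle y_t \nabla H_t(\theta_t), v_t - \theta_t\rangle + \tfrac{\tau_t^2}{2}\|\nabla H_t(\theta_t)\|_*^2 .
\end{align*}
By convexity of $-y H(\cdot)$ (Assumption~\ref{amp:convexity}), $-\langle y_t \nabla H_t(\theta_t), v_t - \theta_t\rangle \le y_t H_t(\theta_t) - y_t H_t(v_t)$; in the separable case $\alpha^* = 0$ gives $y_t H_t(v_t) \ge R \ge \sigma$ (at least in expectation over $(x_t,y_t)\sim P_t$, which is how I will ultimately take expectations), so the linear term is controlled by $-(\sigma - y_t H_t(\theta_t))$. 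Plugging in $\tau_t = \max\{0, \sigma - y_t H_t(\theta_t)\}/\|\nabla H_t(\theta_t)\|_*^2$ makes the right-hand side telescope into $-\tfrac12 \tau_t \max\{0,\sigma - y_t H_t(\theta_t)\} = -\tfrac{\max\{0,\sigma-y_tH_t(\theta_t)\}^2}{2\|\nabla H_t(\theta_t)\|_*^2}$, so that on update rounds the potential decreases by at least $\tfrac{\sigma^2}{2G^2}$ whenever $M_t = 1$ (since then $\sigma - y_t H_t(\theta_t) \ge \sigma$ and $\|\nabla H_t\|_* \le G$).

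Next I would handle the drift and the randomization. The comparator changes from $v_t$ to $v_{t+1}$ between rounds; the assumed regularity $D_{\mathcal{R}}(x,z) - D_{\mathcal{R}}(y,z) \le \gamma\|x-y\|$ bounds the extra cost of swapping comparators by $\gamma\|v_{t+1}-v_t\|$, which sums to $\gamma C_T$ over the horizon. On non-query rounds $\theta_{t+1} = \theta_t$ and the potential is unchanged (modulo the drift term). Summing the one-step inequalities from $t=1$ to $T$, telescoping, using $D_{\mathcal{R}}(\theta_1, v_1) \le \epsilon_v$ (Assumption~\ref{amp:intial}) and $D_{\mathcal{R}} \ge 0$, I obtain $\tfrac{\sigma^2}{2G^2}\sum_{t} \mathbf{1}_{M_t = 1, Z_t = 1} \le \epsilon_v + \gamma C_T$. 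The last gap is that a mistake round need not be a query round, so I must take the expectation over the Bernoulli draws: conditioned on a mistake ($y_t H_t(\theta_t)\le 0$, hence $|H_t(\theta_t)|$ small — actually the cleanest route is that the query probability $\sigma/(\sigma + |H_t(\theta_t)|) \ge 1/2$ when $|H_t(\theta_t)|\le\sigma$, and a mistake with margin $\sigma$ forces $|H_t(\theta_t)|\le\sigma$), so $\Exp[Z_t \mid M_t = 1] \ge 1/2$, giving $\Exp[\sum_t M_t] \le 2\,\Exp[\sum_t \mathbf{1}_{M_t=1, Z_t=1}] \le \tfrac{4G^2}{\sigma^2}(\epsilon_v + \gamma C_T)$. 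I expect the stated constant $\tfrac{2G^2}{\sigma^2}$ comes from a slightly sharper bookkeeping (e.g.\ not losing the factor $2$ twice, or using the implicit/proximal form of the update which improves the quadratic term), which I would tighten at the end.

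The main obstacle I anticipate is the interplay between taking expectations over the data draw $(x_t,y_t)\sim P_t$ and over the algorithm's randomness: the separation Assumption~\ref{amp:separable} bounds the hinge loss of $v_t$ only in expectation over $P_t$, whereas the mistake-driven lower bound on $\tau_t$ and the telescoping are naturally per-sample statements. Reconciling these — presumably by taking conditional expectations in the right order, noting $x_t$ is revealed before $w_t$ but the true label only on query, and keeping the potential $D_{\mathcal{R}}(v_t,\theta_t)$ adapted to the filtration — is the delicate bookkeeping step. The convexity and smoothness assumptions, strong convexity of $\mathcal{R}$, and the comparator-swap regularity are the routine ingredients that make each individual inequality go through.
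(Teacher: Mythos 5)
Your overall strategy (a Bregman-divergence potential toward the drifting comparator $v_t$, a per-round mistake-driven drop, the comparator-swap cost $\gamma\|v_{t+1}-v_t\|$ summing to $\gamma C_T$, and telescoping against $\epsilon_v$) is the same as the paper's, and most of the individual inequalities you write are fine. The genuine gap is in how you handle the randomized query. You claim that ``a mistake with margin $\sigma$ forces $|H_t(\theta_t)|\le\sigma$,'' hence $\Exp[Z_t \mid M_t=1]\ge 1/2$. This is false: a mistake only means $y_t H_t(\theta_t)\le 0$, so the aggressive model can be \emph{confidently wrong} with $|H_t(\theta_t)|$ arbitrarily large, in which case the query probability $\sigma/(\sigma+|H_t(\theta_t)|)$ is arbitrarily small and no constant lower bound on $\Exp[Z_t\mid M_t=1]$ exists. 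Since your telescoping only charges mistake-\emph{and}-query rounds, the unqueried confident mistakes are not paid for by any potential drop, and the argument as written does not bound $\Exp[\sum_t M_t]$ at all (not even with a worse constant).

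The fix — and this is exactly what the paper does, following the selective-sampling trick of Cesa-Bianchi et al. — is to \emph{not} discard the margin factor when lower-bounding the potential drop. On a mistake-and-query round the drop is at least $\frac{\sigma}{2G^2}\,M_t Z_t\,(\sigma+|H_t(\theta_t)|)$ (since $\tau_t\ge \sigma/G^2$ and the hinge value is $\sigma+|H_t(\theta_t)|$), and the query probability is chosen so that $\Exp\bigl[Z_t(\sigma+|H_t(\theta_t)|)\,\big|\,\mathcal{F}_t\bigr]=\sigma$ exactly; thus rare queries on confidently-wrong rounds are compensated by proportionally larger per-query drops, giving $\Exp[M_t Z_t(\sigma+|H_t(\theta_t)|)]=\sigma\,\Exp[M_t]$ and the stated constant $2G^2/\sigma^2$ with no factor-of-two loss. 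Two smaller points: the expectation-versus-pointwise issue you flag about Assumption~\ref{amp:separable} is benign in the separable case, since $\alpha^*=0$ and nonnegativity of the hinge imply $y_tH_t(v_t)\ge R$ almost surely, so the per-sample inequality you need does hold; and you should also account for the rounds where the prediction is correct but inside the margin (the paper's $L_t$ indicator), whose contribution is nonnegative and can simply be dropped, but which must appear when you multiply the one-step inequality by indicators to justify that non-update rounds cost nothing.
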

We provide detailed proof in the appendix, where we refer to and generalize the technique of online active learning~\citep{cesa2006worst,lu2016online} in stationary settings with $l_2$ regularizer. Our results hold for non-stationary scenarios with any regularizers $\mathcal{R},$ which solves the open question proposed in~\citep{lu2016online,shuji2017budget}.

As illustrated in Lemma~\ref{thm:labelError}, we have three observations: 1) \emph{Higher query rate leads to lower errors bound.} From the upper bound, the expected mistakes bound is inversely proportional to query probability controller $\sigma$; 2) The \emph{pseudolabel mistakes are bounded by the classifier shift} $C_T.$ This is aligned with the intuition that if the max-margin classifier shifts severely, then the pseudolabel agent is more likely to make mistakes; 3) \emph{Better initialization implies fewer errors.} Better initialization leads to small $\epsilon_v,$ and thus implies a tighter upper bound, which shows the importance of the pre-trained model. As we assume both $C_T$ and $\epsilon_v$ are constants, the expected errors are small and controllable. 

It should be noted that $\mathbb{E}[f(w_t;x_t,y_t)|w_t]\neq l_t (w_t),$ since $w_t$ depends on $x_t,$ such that fixing $w_t$ changes the distribution of $(x_t,y_t).$ This leads to biased gradients that complicate the regret analysis. Thus, before presenting the regret, we first measure the impact of the bias brought by this dependency.

\begin{lemma}\label{lem:noiseGradient}
  For algorithm~\ref{algorithm:OAST}. We have the following inequality for $u_t\in\mathcal{K},t= 1, \dots, T $
  \begin{align*}
    \mathbb{E} [ l_t (w_t) - l_t(u_t)] \leq \mathbb{E} [\langle \nabla f(w_t;x_t,y_t), w_t - u_t \rangle] + \mathbb{E}[2(LD+G)\|w_t - \hat{w}_t\|].
  \end{align*}
\end{lemma}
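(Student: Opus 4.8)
The plan is to control the gradient bias caused by the self-adaptation step, in which $w_t = \arg\min_{w\in\mathcal{K}} \eta f(w;x_t,\hat y_t) + D_{\mathcal{R}}(w,\hat w_t)$ is itself a function of the fresh sample $x_t$, by routing the argument through the \emph{conservative} iterate $\hat w_t$. The key observation is that, unlike $w_t$, the iterate $\hat w_t$ is measurable with respect to the history $\mathcal{F}_{t-1}$ of the first $t-1$ rounds (it is fixed before $x_t$ is revealed), so that $\mathbb{E}[f(\hat w_t;x_t,y_t)\mid\mathcal{F}_{t-1}] = l_t(\hat w_t)$ and, for the (deterministic, or $\mathcal{F}_{t-1}$-measurable) comparator $u_t$, $\mathbb{E}[f(u_t;x_t,y_t)\mid\mathcal{F}_{t-1}] = l_t(u_t)$. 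This is precisely the independence-from-the-sample that $w_t$ lacks and that makes $\mathbb{E}[f(w_t;x_t,y_t)\mid w_t]\neq l_t(w_t)$; restoring it is the heart of the argument.

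Concretely, I would split $l_t(w_t)-l_t(u_t) = \bigl(l_t(w_t)-l_t(\hat w_t)\bigr) + \bigl(l_t(\hat w_t)-l_t(u_t)\bigr)$. For the second summand, take conditional expectation given $\mathcal{F}_{t-1}$ to rewrite it as $\mathbb{E}[f(\hat w_t;x_t,y_t)-f(u_t;x_t,y_t)\mid\mathcal{F}_{t-1}]$, then apply convexity of $f$ (Assumption~\ref{amp:convexity}) at the point $\hat w_t$ to bound it above by $\mathbb{E}[\langle \nabla f(\hat w_t;x_t,y_t),\hat w_t - u_t\rangle\mid\mathcal{F}_{t-1}]$. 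It remains to trade the linearization point $\hat w_t$ for $w_t$: using the identity $\langle \nabla f(\hat w_t;x_t,y_t),\hat w_t-u_t\rangle - \langle \nabla f(w_t;x_t,y_t),w_t-u_t\rangle = \langle \nabla f(\hat w_t;x_t,y_t)-\nabla f(w_t;x_t,y_t),\hat w_t-u_t\rangle + \langle\nabla f(w_t;x_t,y_t),\hat w_t-w_t\rangle$, the first inner product is at most $LD\|w_t-\hat w_t\|$ by $L$-smoothness of $f$ and the bounded diameter $D$ (Assumptions~\ref{amp:boundedgradient},~\ref{amp:boundeddecisionspace}), and the second is at most $G\|w_t-\hat w_t\|$ by the gradient bound. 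This yields the $(LD+G)\|w_t-\hat w_t\|$ contribution.

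For the first summand, $l_t(w_t)-l_t(\hat w_t)$, note that $l_t = \mathbb{E}_{P_t} f(\cdot;x,y)$ inherits the $G$-Lipschitz gradient and $L$-smoothness of $f$ (since $\nabla l_t=\mathbb{E}_{P_t}\nabla f$), so $l_t(w_t)-l_t(\hat w_t)\le \langle\nabla l_t(\hat w_t),w_t-\hat w_t\rangle + \tfrac{L}{2}\|w_t-\hat w_t\|^2 \le \bigl(G+\tfrac{LD}{2}\bigr)\|w_t-\hat w_t\| \le (LD+G)\|w_t-\hat w_t\|$. Adding the two summands and taking a full expectation gives the claim with constant $2(LD+G)$. (A plain Lipschitz swap, $|l_t(w_t)-l_t(\hat w_t)|\le G\|w_t-\hat w_t\|$ together with $|f(w_t;x_t,y_t)-f(\hat w_t;x_t,y_t)|\le G\|w_t-\hat w_t\|$, already suffices and even improves the constant to $2G$, but linearizing at $w_t$ keeps the right-hand side in the exact form needed for the downstream OMD-type telescoping.)

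I expect the main difficulty to be conceptual rather than computational: seeing that the dependence of $w_t$ on $x_t$ is what forces the correction term, and that the cure is to decouple through $\hat w_t$ and pay only the distance $\|w_t-\hat w_t\|$ --- which is itself small and controlled by the conservative stepsize $\eta$ in the self-adaptation step. The only remaining care is the measurability bookkeeping (ensuring $\hat w_t$ and $u_t$ are $\mathcal{F}_{t-1}$-measurable so the conditional-expectation step and the tower rule are legitimate) and arranging the decomposition so that both error terms collapse neatly to $(LD+G)\|w_t-\hat w_t\|$.
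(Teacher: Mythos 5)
Your proof is correct and reaches the stated bound with the same constant $2(LD+G)$, but it organizes the argument differently from the paper. The paper first linearizes the expected loss at the random point $w_t$, writes $\mathbb{E}[\langle \nabla l_t(w_t), w_t-u_t\rangle]$ as $\mathbb{E}[\langle \nabla f(w_t;x_t,y_t), w_t-u_t\rangle]$ plus a gradient-bias term $\mathbb{E}[\langle \nabla l_t(w_t)-\mathbb{E}[\nabla f_t(w_t)\mid w_t], w_t-u_t\rangle]$, and then controls that bias by routing it through $\hat w_t$ in three pieces (an $LD$ term from smoothness of $l_t$, a $2G$ term from the martingale-type cancellation at $\hat w_t$, and another $LD$ term from smoothness of $f$). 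You instead decompose at the function-value level, $l_t(w_t)-l_t(u_t)=(l_t(w_t)-l_t(\hat w_t))+(l_t(\hat w_t)-l_t(u_t))$, exploit the $\mathcal{F}_{t-1}$-measurability of $\hat w_t$ to replace the expected loss by the instantaneous loss there, apply convexity of $f$ at $\hat w_t$, and then swap the linearization point back to $w_t$ via an exact algebraic identity costing $(LD+G)\|w_t-\hat w_t\|$. The two proofs rest on the same decoupling insight (pay only $\|w_t-\hat w_t\|$, which the stepsize $\eta$ controls), but yours avoids manipulating the conditional expectation $\mathbb{E}[\nabla f_t(w_t)\mid w_t]$ at the sample-dependent point, which is the most delicate object in the paper's version; the paper's version, by contrast, delivers the bound directly in the linearized-at-$w_t$ form without your re-centering identity. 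One small remark on your parenthetical: the $2G$ variant can in fact be made to keep the exact stated right-hand side --- bound $f(\hat w_t;x_t,y_t)\le f(w_t;x_t,y_t)+G\|w_t-\hat w_t\|$ pointwise and then apply convexity at $w_t$ --- so that route even sharpens the constant; your only other obligation, which you already flag, is that $u_t$ (and $\hat w_t$) be $\mathcal{F}_{t-1}$-measurable, which holds since the comparators are fixed in hindsight.
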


Lemma~\ref{lem:noiseGradient} shows that the impact of the gradient bias is bounded by $\|w_t - \hat{w}_t\|$, which is controlled by the choice of the stepsize $\eta$. Now, we are ready to analyze the dynamic regret bound.

\begin{theorem}[Dynamic Regret]\label{thm:ExpectedRegretNoiseGradient}
  Under the same conditions and parameters in Lemma~\ref{thm:labelError}, Algorithm~\ref{algorithm:OAST} achieves the following dynamic regret bound
    \begin{align*}
      {\text{\rm D-Regret}}^{\text{OSAMD}}(\{P_t\},T) \leq  \frac{4(\eta G^4 + G^3 D)}{\sigma^2} (\gamma C_T + \epsilon_v) + 2(LD+G)^2\eta T + \frac{\epsilon_w + \gamma D}{\eta} + 4\sqrt{\frac{\gamma D T F V_T }{\eta}},
    \end{align*}
    where $\epsilon_v = D_{\mathcal{R}}(\theta_1, v_1),\epsilon_w = D_{\mathcal{R}}(\hat{w}_1,w_1^*)$.
\end{theorem}
\begin{proof}[Proof Sketch]
By Lemma~\ref{lem:noiseGradient}, the path-length version of instantaneous regret can be decomposed as 
\begin{align*}
     \mathbb{E} [l_t(w_t) - l_t(u_t)]  & \leq \mathbb{E} [\langle \nabla f(w_t;x_t,{y}_t), w_t - u_t \rangle  + 2(LD+G)\|w_t - \hat{w}_t\|]\\
    & = \mathbb{E} [ \underbrace{\langle \nabla f(w_t;x_t,{y}_t) - \nabla f(w_t;x_t,\hat{y}_t), w_t - \hat{w}_{t+1}\rangle}_{\text {term } \mathrm{A}} ]  + \mathbb{E} [ \underbrace{\langle \nabla f(w_t;x_t,\hat{y}_t) , w_{t} - \hat{w}_{t+1} \rangle}_{\text {term } \mathrm{B}}]  \\
    &\quad  + \mathbb{E} [\underbrace{\langle \nabla f(w_t;x_t,\tilde{y}_t), \hat{w}_{t+1} - u_t \rangle}_{\text {term } \mathrm{C}} ]   + \mathbb{E} [\underbrace{\langle \nabla f(w_t;x_t,{y}_t) -  \nabla f(w_t;x_t,\tilde{y}_t), \hat{w}_{t+1} - u_t \rangle}_{\text {term } \mathrm{D}} ] \\
    &\quad  + \mathbb{E}[2(LD+G)\|w_t - \hat{w}_t\|],
\end{align*}
where $u_1,\dots,u_T \in \mathcal{K}.$ Since the pseudolabel errors are bounded by Theorem~\ref{thm:labelError}, we know the bias $\|\hat{y}_t - {y}_t\| = 2 M_t.$ From the definition of algorithm, the bias $\|\tilde{y}_t - {y}_t\|$ is not larger than $\|\hat{y}_t - {y}_t\|$, we then have 
$\sum_{t=1}^T \|\tilde{y}_t - {y}_t\| \leq \sum_{t=1}^T \|\hat{y}_t - {y}_t\|\leq  \sum_{t=1}^T 2M_t.$
From this, as we assume the gradient and decision space are bounded, term A and term D can be bounded in terms of the errors bound, which is small and controllable by Lemma~\ref{thm:labelError}. Term C is bounded in terms of a recursive term by a proposition of mirror descent~\citep{beck2003mirror}. We prove that the implicit gradient update rule has a similar property with explicit gradient mirror descent, then bound term B in terms of a recursive term. By adding all the terms up and setting a suitable stepsize, we obtain a sum of recursion and get a path-length version of dynamic regret bound. Finally, using the similar technique with~\citep{besbes2015non,zhang2020simple}, we could transfer from path-length bound to temporal variability bound, i.e., bound the regret in terms of the continual domain shift.
\end{proof}
We then carefully choose the parameters to obtain the detailed result.
\begin{corollary}\label{cor:regret}
    In Theorem~\ref{thm:ExpectedRegretNoiseGradient}, set the stepsize $\eta = V_T^{1/3} T^{-1/3}.$ We have for $\sigma\leq R$
      \begin{align*}
        & \text{\rm D-Regret}^{\text{OSAMD}}(\{P_t\},T) \leq O(\sigma^{-2}C_T + V_T^{1/3} T^{2/3}).
      \end{align*}
\end{corollary}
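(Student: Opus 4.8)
\textbf{Proof proposal for Corollary~\ref{cor:regret}.}

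The plan is to take the explicit bound from Theorem~\ref{thm:ExpectedRegretNoiseGradient} and substitute the chosen stepsize $\eta = \frac{1}{4(LD+G)}$, then collapse all problem-dependent constants ($G$, $D$, $L$, $\gamma$, $F$, $\epsilon_w$, $\epsilon_v$) into the $O(\cdot)$ notation, keeping only the quantities that genuinely grow with the horizon or the non-stationarity, namely $C_T$, $V_T$, and $T$. First I would look at the three summands of the Theorem bound in turn. The first term $\frac{4(\eta G^4 + G^3 D)}{\sigma^2}(\gamma C_T + \epsilon_v)$: once $\eta$ is a fixed constant, the prefactor $4(\eta G^4 + G^3 D)$ is a constant, and $\gamma C_T + \epsilon_v = O(C_T)$ since $\epsilon_v$ is an absolute constant (and $C_T \ge 0$; if one wants to be careful about the degenerate case $C_T = 0$ one absorbs $\epsilon_v$ into the additive constant, which is itself $O(1) \subseteq O(V_T^{1/2}T^{1/2})$ whenever $V_T \ge$ some constant, or simply note the whole bound is then $O(1)$). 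This gives the $O(\sigma^{-2} C_T)$ contribution. The second term $\frac{\epsilon_w + \gamma D}{\eta}$ is, with $\eta$ fixed, a pure constant $O(1)$, which is dominated by the third term for any nontrivial regime and in any case does not exceed $O(V_T^{1/2}T^{1/2})$ once we are not in the trivial $V_T = 0$, $T$-bounded situation.

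The third term $4\sqrt{\frac{\gamma D T F V_T}{\eta}}$ is the one carrying the $T$-dependence: with $\eta$, $\gamma$, $D$, $F$ all constants, this is $\sqrt{\text{const} \cdot T V_T} = O(V_T^{1/2} T^{1/2})$, exactly the second term of the claimed bound. Adding the three pieces and discarding the dominated constant term yields $\text{D-Regret}^{\text{OSAMD}}(\{P_t\},T) \le O(\sigma^{-2} C_T + V_T^{1/2} T^{1/2})$, which is the statement. The condition $\sigma \le R$ is inherited verbatim from Lemma~\ref{thm:labelError} / Theorem~\ref{thm:ExpectedRegretNoiseGradient} (it is needed so that $\sigma$ is a valid margin-probability controller and so that the pseudolabel-error bound applies), so nothing new has to be checked there.

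Honestly, there is no real obstacle here — this corollary is purely a matter of bookkeeping: fixing the free parameter $\eta$ at its stated value and reading off which terms survive asymptotically. The only mild subtlety worth a sentence in the write-up is making explicit that $\eta = \frac{1}{4(LD+G)}$ indeed satisfies the constraint $\eta \le \frac{1}{4(LD+G)}$ required by Theorem~\ref{thm:ExpectedRegretNoiseGradient} (it does, with equality), so the Theorem applies; and that all of $G,D,L,\gamma,F,\epsilon_w,\epsilon_v,R$ are treated as $O(1)$ constants per Assumptions~\ref{amp:intial}, \ref{amp:convexity}, \ref{amp:boundedgradient}, and \ref{amp:boundeddecisionspace}, so that only $C_T$, $V_T$, and $T$ remain inside the $O(\cdot)$. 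I would also remark that this $O(\sigma^{-2}C_T + V_T^{1/2}T^{1/2})$ is then further optimized/interpreted in the running text (e.g. choosing $\sigma$ as large as $R$ allows to minimize the first term, giving the advertised $O(T^{1/2})$ rate when $C_T, V_T$ are constants), but that discussion is separate from proving the corollary as stated.
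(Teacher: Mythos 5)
Your proposal is correct and matches the paper's (implicit) argument: the corollary is obtained exactly by plugging $\eta = \frac{1}{4(LD+G)}$ into Theorem~\ref{thm:ExpectedRegretNoiseGradient} and absorbing the problem-dependent constants $G,D,L,\gamma,F,\epsilon_w,\epsilon_v$ into the $O(\cdot)$, so that the three terms become $O(\sigma^{-2}C_T)$, $O(1)$, and $O(V_T^{1/2}T^{1/2})$ respectively. Your added remarks (that $\eta$ meets the constraint with equality and that $\sigma\le R$ is inherited from the theorem) are exactly the right bookkeeping and nothing more is needed.
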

Corollary~\ref{cor:regret} implies that the dynamic regret of OSAMD is controlled by active probability controller $\sigma,$ classifier shift $C_T,$ and continual domain shift $V_T$. Since we assume $C_T,V_T$ are constants, the regret is dominated by $O(T^{2/3}),$ i.e., $\text{D-Regret}(\{P_t\},T)\leq O(T^{2/3}),$ leading to the insight that OSAMD can compete with the optimal competitor in hindsight with fast convergence.

\paragraph{Remark} 
We do not provide the regret bound for $\sigma>R.$ Since in the worst case (e.g., all the samples are on the margin, then $|H(v_t)|=R$), the algorithm would query at least half of the labels in expectation, which is contrary to the assumption of limited labels.

We then begin to discuss the advantage of our design from a theoretical review.

\paragraph{The necessity of query}
We have the following lower bound for any unsupervised (i.e., without query) online algorithms.
\begin{theorem}\label{thm:NecessaryQuery}
    Assume $\alpha^* = 0.$ For any unsupervised online algorithm $\mathcal{A},$ there exists a sequence of $\{P_t\}$ satisfying our assumptions such that
    \begin{equation*}
        \text{\rm D-Regret}^{\mathcal{A}}(\{P_t\},T) \geq \Omega(T).
    \end{equation*}
\end{theorem}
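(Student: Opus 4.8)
The plan is to construct an explicit adversarial sequence of distributions $\{P_t\}$ on which any query-free online learner must incur linear regret, while still satisfying Assumptions~\ref{amp:gradualshift}--\ref{amp:boundeddecisionspace} with $\alpha^*=0$. The core idea is an information-theoretic indistinguishability argument: because an unsupervised algorithm never sees any label, two sequences of distributions that agree on all marginals $P_t(X)$ but differ in the labeling rule $P_t(Y\mid X)$ produce the exact same view for the learner, so its decisions $w_t$ are identical on both. If the two labeling rules have disjoint optimal classifiers and each individually is perfectly separable with the required margin, then on at least one of the two worlds the learner is far from $w_t^*$ at a constant fraction of the rounds.

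Concretely, I would fix a single feature distribution, e.g. $x_t$ supported on two (or finitely many) points of $\mathcal{X}$ that are linearly separable with margin $R$, and keep $P_t(X)$ constant in $t$ so that the path length of the marginals is zero and Assumption~\ref{amp:gradualshift} holds with $V_T$ bounded (indeed $V_T$ can be taken as small as we like, or even zero, since $\dtv(P_t,P_{t+1})$ only needs to account for the labeling flip, which I will make happen slowly). Then I pick two labelings $y=+1$ everywhere versus $y=-1$ everywhere on the support (or, to keep the problem nontrivially ``classification,'' two labelings that disagree on one point), realized by two max-margin classifiers $v$ and $v'$ lying in $\mathcal{K}$ with $\|v-v'\|$ bounded, so that $C_T$ is a constant and Assumption~\ref{amp:separable} holds with $\alpha^*=0$ (each world is exactly separable with margin $R$). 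I would let the adversary choose one of the two worlds — or interpolate between them over the horizon so the TV shift is spread out and $V_T$ is controlled — uniformly at random; then by Yao-type / averaging reasoning, $\mathbb{E}_{\text{world}}\,\mathbb{E}^{\mathcal{A}}[\sum_t l_t(w_t) - l_t(w_t^*)] \geq \Omega(T)$, because in each round the learner's fixed (world-independent) $w_t$ is suboptimal by a constant loss gap $F'>0$ in at least one of the two worlds, and the expected per-round regret over the random world choice is at least $\tfrac{1}{2}F'$. Since the expectation over worlds is $\Omega(T)$, there exists a specific sequence $\{P_t\}$ with $\text{D-Regret}^{\mathcal{A}} \geq \Omega(T)$, and that sequence satisfies all assumptions by construction.

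The main obstacle — and the part that needs care rather than cleverness — is making the lower-bound instance simultaneously satisfy \emph{every} assumption with the constants treated as genuinely $O(1)$: in particular, reconciling the ``continual'' shift bound $\sum_t \dtv(P_t,P_{t+1})\le V_T$ and the classifier path-length bound $\sum_t\|v_t-v_{t+1}\|\le C_T$ with the requirement that the two worlds be \emph{distinguishable enough} to force a constant loss gap. If the two worlds differ only infinitesimally, the forced regret per round shrinks; if they differ by a constant, one must check that spreading the transition over $\Theta(T)$ steps keeps both $V_T$ and $C_T$ bounded while still leaving a constant fraction of rounds where the label rule is ``committed'' to one side. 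A clean way to handle this is to have the adversary commit to one world for the entire horizon (no interpolation needed), in which case $\dtv(P_t,P_{t+1})=0$ and $\|v_t-v_{t+1}\|=0$ for all $t$, trivially satisfying Assumptions~\ref{amp:gradualshift} and~\ref{amp:separable}; the only subtlety left is Assumption~\ref{amp:intial}, which I would address by noting the learner's initial data comes from $P_1$, and choosing the construction so the world ambiguity is already present at $t=1$ — i.e. the ``source'' itself does not disambiguate the labeling rule relevant to the regret, or alternatively by having $P_1$ be the ambiguous common marginal and letting the first label reveal arrive only at $t\ge 2$. I would also double-check the convexity/smoothness/boundedness assumptions (\ref{amp:convexity}--\ref{amp:boundeddecisionspace}) are inherited from the standard hinge-type or logistic $f,H$ used elsewhere in the paper, so no new verification burden arises there.

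Finally, to turn the expectation-over-worlds bound into the ``there exists $\{P_t\}$'' statement as written, I would invoke the probabilistic method: an average of $\Omega(T)$ over a finite family implies some member of the family achieves at least $\Omega(T)$; since the family is adversary-chosen and each member obeys the assumptions, this is exactly the claimed lower bound. The whole argument is short and the only quantitative content is exhibiting one explicit pair of separable labeling rules with a constant expected-loss gap, which I expect to be a one-line computation with the model's own $f$ (e.g., $f$ evaluated at a classifier that is correct on one world is $\Theta(F)$ wrong on the other).
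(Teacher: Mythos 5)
Your high-level strategy -- fix the $x$-marginal, exploit that an unsupervised learner's trajectory is label-independent, and use a two-world averaging argument to force a constant per-round gap in at least one world -- is the same indistinguishability idea behind the paper's proof. But the concrete instantiation you settle on (the ``clean way'' where the adversary commits to a single time-constant world, so $\dtv(P_t,P_{t+1})=0$ and $\|v_t-v_{t+1}\|=0$) does not work in this setting, and the fixes you gesture at cannot rescue it. The protocol and Assumption~\ref{amp:intial} give the learner \emph{labeled} data from $P_1$; if $P_t\equiv P_1$ for all $t$, that source data already identifies the labeling rule, so the two worlds are not indistinguishable to the learner, and the trivial algorithm that learns the (near-)optimal classifier from the source and never moves has vanishing regret -- no $\Omega(T)$ bound can hold against it. You cannot make ``the source not disambiguate the labeling rule'' while the labeling rule is constant in time and revealed by labeled $P_1$ samples; the ambiguity has to live in the \emph{future} part of the sequence, i.e.\ the shift must actually occur inside the horizon.

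The correct construction is the one you brush against but do not carry out: keep the $x$-marginal fixed, let the labeling agree with the source for the first half, and flip it at $t=T/2$. This is exactly the paper's example: two points $(\pm 1,0)$ each with probability $1/2$, hinge loss, unit-ball decision space; the labels swap at mid-horizon. Your worry that a constant-size change forces you to spread the transition over $\Theta(T)$ steps is unfounded -- Assumption~\ref{amp:gradualshift} and Assumption~\ref{amp:separable} only bound the \emph{sums} $\sum_t \dtv(P_t,P_{t+1})$ and $\sum_t\|v_t-v_{t+1}\|$, so a single abrupt flip contributes $V_T\le 2$, $C_T\le 2$ and is perfectly admissible. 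With that flip in place, your own averaging argument (flip world vs.\ no-flip world, identical unlabeled views, hence identically distributed decisions) goes through and yields a constant per-round gap on $T/2$ rounds in at least one world, which is the paper's $\Omega(T)$ conclusion; the rest of your write-up (Yao-style averaging, probabilistic method, checking Assumptions~\ref{amp:convexity}--\ref{amp:boundeddecisionspace} for hinge loss) is fine.
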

We provide detailed proof in the appendix, where we create a special example. Theorem~\ref{thm:NecessaryQuery} implies that it is impossible for unsupervised algorithms to keep adaptive if the environment is changing over time, which shows the necessity of active query.

\paragraph{The efficacy of the online teacher-student structure}
It should be noted that the online teacher-student structure leads to tight regret bound, which is even aligned with the lower bound for online algorithms with full labels.
\begin{theorem}\label{thm:NecessarySelf}
    (\citet{besbes2015non}) Assume $\alpha^* = 0.$ Even if we have all the labels during the process, for any online learning algorithm $\mathcal{A}$, there exists a sequence of $\{P_t\}$ satisfying our assumptions such that
    \begin{align*}
        \text{\rm D-Regret}^{\mathcal{A}}(\{P_t\},T) \geq \Omega(V_T^{1/3}T^{2/3}).
    \end{align*}
\end{theorem}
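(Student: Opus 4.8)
This is the dynamic-regret lower bound of \citet{besbes2015non}, and the plan is to verify that it survives inside the OACA framework with $\alpha^*=0$. An algorithm ``without self-adaptation'' commits $w_t$ without the pseudolabel-driven implicit step of Algorithm~\ref{algorithm:OAST}, hence it operates in the standard online protocol with stochastic feedback that Besbes et al.\ analyze -- even when, as in the statement, it is additionally handed every true label $y_t$ after committing $w_t$ (in our construction this is redundant, since $y_t$ will be a deterministic function of $x_t$). So the only real task is to exhibit, for a prescribed budget $V_T$, a sequence $\{P_t\}$ meeting Assumptions~\ref{amp:gradualshift}--\ref{amp:boundeddecisionspace} with $\alpha^*=0$ on which this regret is forced.

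\textbf{The instance.} Partition $[T]$ into $K=T/\Delta$ equal batches and attach to batch $j$ an i.i.d.\ random bit $Z_j\in\{-1,+1\}$. Work in $\RR^2$ with $\mathcal{K}=[-1,1]^2$, $H(w;x)=\langle w,x\rangle$, and $f(w;x,y)=-yH(w;x)$; this $f$ (and $-yH$) is linear, hence convex, $O(1)$-Lipschitz, $0$-smooth, and bounded on $\mathcal{K}$, so Assumptions~\ref{amp:convexity}--\ref{amp:boundeddecisionspace} hold. Let every $P_t$ be supported on a fixed finite set of points with fixed labels that is separable with margin $R$ by the fixed classifier $v_t\equiv R e_1$ (so Assumption~\ref{amp:separable} holds with $\alpha^*=0$ and $C_T=0$), and let the \emph{mixing weights alone} carry $Z_j$, chosen so that $\mathbb{E}_{P_t}[yx]=e_1+Z_j\delta e_2$ while the masses under $Z_j=+1$ and $Z_j=-1$ differ by $\Theta(\delta)$. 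Then $l_t(w)=-\langle w,\mathbb{E}_{P_t}[yx]\rangle=-w_1-Z_j\delta w_2$, whose minimiser over $\mathcal{K}$ is the vertex $(1,Z_j)$, and $\dtv(P_t,P_{t+1})$ is $\Theta(\delta)$ at the $K$ batch boundaries and $0$ elsewhere; taking $V_T$ of order $\delta K$ gives Assumption~\ref{amp:gradualshift}, and Assumption~\ref{amp:intial} is free because the first batch is stationary.

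\textbf{Per-batch lower bound.} Fix a non-self-adaptive $\mathcal{A}$ and a batch $j$. Conditioned on the history, $\mathcal{A}$'s transcript over the batch is a function of $\Delta$ i.i.d.\ draws from $P^{(Z_j)}$, and the laws of these draws under $Z_j=+1$ versus $Z_j=-1$ differ only by an $O(\delta)$ tilt of a multinomial, so the two batch-transcript laws have KL divergence $O(\Delta\delta^2)$. Choosing $\delta=\Theta(\Delta^{-1/2})$ makes this a small enough constant; by Le Cam's two-point argument and data processing, the averaged second coordinate of $\mathcal{A}$'s iterates cannot correlate with $Z_j$, and since $l_t(w)-l_t((1,Z_j))\ge\delta\,(1-Z_j(w_t)_2)\ge 0$, the expected regret of $\mathcal{A}$ inside batch $j$ is $\Omega(\Delta\delta)=\Omega(\sqrt{\Delta})$. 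This is exactly the step that uses the absence of the within-round implicit update, and it is also where the $\Theta(\delta)$ (rather than $\Theta(\delta^2)$) per-round gap is essential.

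\textbf{Aggregation and obstacle.} Summing over the $K=T/\Delta$ batches, $\mathbb{E}[\text{D-Regret}^{\mathcal{A}}]\ge\Omega(K\sqrt{\Delta})=\Omega(T/\sqrt{\Delta})$, while $V_T$ of order $\delta K$ is of order $T\Delta^{-3/2}$; solving $\Delta=\Theta((T/V_T)^{2/3})$ yields $\mathbb{E}[\text{D-Regret}^{\mathcal{A}}]\ge\Omega(V_T^{1/3}T^{2/3})$, and averaging over the $Z_j$'s exhibits a fixed $\{P_t\}$ (still satisfying all the assumptions) realizing it. I expect the genuine obstacle to be the tension inside the construction step: exact separability ($\alpha^*=0$) pins down a zero-loss region that does not move, so a hinge or $0$--$1$ loss on $H$ cannot be used directly. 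The resolution is to decouple the loss $f$ from the separability function $H$ and use a loss whose minimiser over the \emph{bounded} set $\mathcal{K}$ sits on a face of $\mathcal{K}$ and jumps between vertices as a mixing weight crosses $1/2$; this restores the linear per-round gap needed for the $T^{2/3}$ exponent while keeping $v_t$ -- hence $C_T$ -- fixed. Checking that a single choice of support points, labels, and mixing weights achieves separability, the variation budget, convexity/smoothness, and the jumping optimum all at once is the part that needs care.
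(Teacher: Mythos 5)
The paper itself does not really prove this theorem: its ``proof'' is the one-line remark that the bound is immediate from Theorem~2 of \citet{besbes2015non}, with no verification that the hard instances of that paper can be realized under Assumptions~\ref{amp:gradualshift}--\ref{amp:boundeddecisionspace} with $\alpha^*=0$. Your attempt is therefore a genuinely different and more self-contained route: you rebuild the Besbes-style batched two-point construction inside the OACA assumptions, and the skeleton is right --- fixed support with the label a deterministic function of $x$, the hidden bit $Z_j$ carried only by the mixing weights, the linear loss $f=-yH$ on the box so the population minimizer jumps between vertices while $v_t$ (hence $C_T$) stays fixed, per-batch KL of order $\Delta\delta^2$, $\delta\asymp\Delta^{-1/2}$, $\Delta\asymp(T/V_T)^{2/3}$, giving $\Omega(V_T^{1/3}T^{2/3})$. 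You also correctly identify the one genuinely delicate point the paper's citation glosses over: with exact separability a hinge-type loss pins a non-moving zero-loss region, so the loss driving the regret must be decoupled from the margin structure. As a proof of the literal statement, this is sound modulo routine details (count the current round's sample in the batch transcript; Assumption~\ref{amp:intial} is granted by handing the learner $P_1$).

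The point you should confront, though, is that your remark that the Le Cam step ``uses the absence of the within-round implicit update'' is not true of your own construction. Since the labels are deterministic and $Z_j$ only tilts the covariate mixture by $\delta$, the current sample $x_t$ --- with or without its true label or a pseudolabel --- is just one more draw inside the same $O(\Delta\delta^2)$ KL budget, so your per-round $\Omega(\delta)$ bound applies verbatim to algorithms \emph{with} self-adaptation, OSAMD included. That proves the stated theorem a fortiori, but it proves strictly more than the paper can tolerate: with $C_T=0$ and $V_T=O(1)$ your instance forces $\Omega(T^{2/3})$ regret on every sample-based learner, whereas Corollary~\ref{cor:regret} promises $O(\sigma^{-2}C_T+V_T^{1/2}T^{1/2})$ under the very same assumptions. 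So either present the result as this stronger statement and flag the tension explicitly (your instance does satisfy the letter of the assumptions), or, if you want an instance that separates non-self-adaptive algorithms from OSAMD in the way the paper's narrative requires, you must lodge the hidden information where the one-step lookahead actually reveals it --- e.g.\ in moving support points or labels rather than in mixture weights alone --- which is precisely what your construction was engineered to avoid. As written, the mis-attribution of where ``no self-adaptation'' enters is the one substantive flaw; the rest is a correct, and more informative, replacement for the paper's citation-only proof.
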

This result is immediate from Theorem 2 of \citep{besbes2015non}, where OMD with suitable stepsize attains the lower bound. As illustrated in Theorem~\ref{thm:NecessarySelf}, all algorithms suffer from an $\Omega(V_T^{1/3}{T}^{2/3})$ dynamic regret lower bound. Recall our result of OSAMD is upper bounded by $O(V_T^{1/3}T^{2/3})$. From this, we conclude that the online teacher-student structure is effective in attaining tight regret bound, which shows that OSAMD can adapt well to the changing environments with an optimal convergence.

\paragraph{Remark} 
Note that the query strategy in Algorithm \ref{algorithm:OAST} is probabilistic, given by a Bernoulli distribution with probability $\frac{\sigma}{\sigma + |H_t (\theta_t)|}$, indicating that the query rate is relatively low when $\sigma$ is small. Notice that we also provide a regret bound for $\sigma\leq R$ that can be arbitrarily small in Corollary \ref{cor:regret}, which means the tight regret bound holds even in the limited-labels case, showing the advantage of the online teacher-student structure compared with previous works with full labels. 

\subsection{General Case}
In this subsection, we extend the results to the non-separable case, i.e., $\alpha^*>0$ but is a small or negligible constant (typical assumption in previous works~\citep{kumar2020understanding,wei2020theoretical}). We still begin with the analysis of the pseudolabel errors, as follows.
\begin{lemma}[Pseudolabel Errors]\label{thm:labelErrorGeneral}
  Under the same conditions in Lemma~\ref{thm:labelError}. Set 
  $
      \tau_t =  \min\{\frac{\sigma}{G^2},\frac{\max\{0,\sigma-y_t H_t(\theta_t)\}}{\| \nabla H_t(\theta_t)\|^2_*}\},\sigma \leq R.
  $
  The expected number of pseudolabel errors made by Algorithm~\ref{algorithm:OAST} is bounded by
  \begin{align*}
        & \mathbb{E}[\sum_{t=1}^T M_t]  \leq \frac{2 G^2}{\sigma^2}  (\gamma C_T + \epsilon_v + \frac{\sigma}{G^2} T \alpha^* ).
    \end{align*}
   where $M_t = \mathbf{1}_{\hat{y}_t \neq y_t}$ is the instantaneous mistake indicator, and $\epsilon_v = D_{\mathcal{R}}(\theta_1, v_1)$.
\end{lemma}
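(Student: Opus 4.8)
The plan is to mimic the proof of the separable case (Lemma~\ref{thm:labelError}) but carefully track the extra slack introduced by the margin violation budget $\alpha^*$. Recall the aggressive model $\theta_t$ is only updated on rounds where $Z_t=1$, via a mirror-descent step on the hinge-type surrogate $-\tau_t\langle y_t\nabla H_t(\theta_t),\theta\rangle$. The natural potential is $D_{\mathcal{R}}(v_t,\theta_t)$, comparing to the (time-varying) margin-$R$ classifier $v_t$ from Assumption~\ref{amp:separable}. First I would write the standard mirror-descent one-step inequality for an update round $t$: using 1-strong convexity of $\mathcal{R}$ and the definition of $\theta_{t+1}$, one gets
\begin{align*}
D_{\mathcal{R}}(v_t,\theta_{t+1}) - D_{\mathcal{R}}(v_t,\theta_t) \leq -\tau_t y_t\langle \nabla H_t(\theta_t), v_t-\theta_t\rangle + \tfrac{\tau_t^2}{2}\|\nabla H_t(\theta_t)\|_*^2.
\end{align*}
By convexity of $-y_t H_t(\cdot)$ (Assumption~\ref{amp:convexity}), $-y_t\langle\nabla H_t(\theta_t),v_t-\theta_t\rangle \leq y_t H_t(\theta_t) - y_t H_t(v_t)$, so the linear term is at most $\tau_t(y_t H_t(\theta_t) - y_t H_t(v_t))$.

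Next I would split into cases based on whether a mistake occurs. On a mistake round ($M_t=1$) that is also an update round, $y_t H_t(\theta_t) \leq 0$, so $\max\{0,\sigma - y_t H_t(\theta_t)\} \geq \sigma$; combined with the clipped choice $\tau_t = \min\{\sigma/G^2,\ \max\{0,\sigma-y_tH_t(\theta_t)\}/\|\nabla H_t(\theta_t)\|_*^2\}$ and the $G$-Lipschitz bound $\|\nabla H_t(\theta_t)\|_*\le G$, one gets $\tau_t \geq \sigma/G^2$ effectively — more precisely the two terms combine so that the potential drop is at least of order $\sigma^2/G^2$ on each such round, minus a term $\tau_t(-y_tH_t(v_t)) \le \tau_t\max\{0, R - y_t H_t(v_t)\}$ (since $\sigma \le R$) which is the per-round margin violation. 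Here is where $\alpha^*$ enters: taking expectations, $\mathbb{E}_{(x_t,y_t)\sim P_t}[\max\{0,R-y_tH_t(v_t)\}]\le\alpha^*$, and with $\tau_t \le \sigma/G^2$ this contributes at most $(\sigma/G^2)\alpha^*$ per round, i.e. $(\sigma/G^2)T\alpha^*$ in total. On non-update rounds ($Z_t=0$) the potential is unchanged, and on update rounds that are not mistakes the potential drop is still nonnegative up to the same $\tau_t$-scaled margin-violation slack, so they only help (or at worst contribute to the same $\alpha^*$ budget).

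Then I would telescope $D_{\mathcal{R}}(v_t,\theta_t)$ over $t=1,\dots,T$. The time-variation of the comparator costs $\sum_t [D_{\mathcal{R}}(v_{t+1},\theta_{t+1}) - D_{\mathcal{R}}(v_t,\theta_{t+1})] \le \gamma\sum_t\|v_{t+1}-v_t\| \le \gamma C_T$ by the assumed Lipschitz-in-first-argument property of $D_{\mathcal{R}}$ and Assumption~\ref{amp:separable}; the initial term is $D_{\mathcal{R}}(v_1,\theta_1)\le\epsilon_v$ by Assumption~\ref{amp:intial}; the final term is nonnegative and dropped. The key step connecting mistakes to queries is that a mistake forces a query deterministically: when $M_t=1$ we have $|H_t(\theta_t)|$ on the "wrong" side, but more to the point the algorithm's Bernoulli query probability $\sigma/(\sigma+|H_t(\theta_t)|)$ together with the fact that on a mistake $y_tH_t(\theta_t)\le 0$ means the expected potential decrease conditioned on $M_t=1$ is at least (query prob)$\times$(drop) $\gtrsim \tfrac{1}{2}\cdot\sigma^2/G^2$ — actually one shows $\mathbb{E}[\text{potential drop}] \ge \tfrac{\sigma^2}{2G^2}\mathbb{E}[M_t] - \tfrac{\sigma}{G^2}\alpha^*$, exactly as in the separable proof but with the added $\alpha^*$ term. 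Rearranging the telescoped inequality gives $\tfrac{\sigma^2}{2G^2}\mathbb{E}[\sum_t M_t] \le \gamma C_T + \epsilon_v + \tfrac{\sigma}{G^2}T\alpha^*$, i.e. $\mathbb{E}[\sum_t M_t] \le \tfrac{2G^2}{\sigma^2}(\gamma C_T + \epsilon_v + \tfrac{\sigma}{G^2}T\alpha^*)$, which is the claim.

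The main obstacle is handling the interaction between the randomized query, the margin-dependent clipping of $\tau_t$, and the $\alpha^*$ slack simultaneously: in the separable proof $\tau_t$ is unclipped and the hinge term $\max\{0,\sigma-y_tH_t(v_t)\}$ vanishes in expectation, but here the clipping is precisely what keeps $\tau_t$ bounded so that the margin-violation penalty scales like $(\sigma/G^2)\alpha^*$ rather than blowing up, and one must verify that the clipping never hurts the mistake-driven potential decrease (i.e. on a mistake round the clipped $\tau_t$ still yields a $\Theta(\sigma^2/G^2)$ drop). I would isolate this as a one-round lemma bounding $\mathbb{E}[D_{\mathcal{R}}(v_t,\theta_{t+1}) - D_{\mathcal{R}}(v_t,\theta_t) \mid \mathcal{F}_{t-1}]$ and check the arithmetic of the $\min$ in the two regimes $\max\{0,\sigma-y_tH_t(\theta_t)\}/\|\nabla H_t(\theta_t)\|_*^2 \lessgtr \sigma/G^2$ before assembling the telescoping sum.
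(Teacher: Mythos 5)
Your proposal follows essentially the same route as the paper's proof: the potential $D_{\mathcal{R}}(v_t,\theta_t)$, the one-step mirror-descent inequality combined with convexity of $-y_tH_t(\cdot)$, the telescoping cost $\gamma C_T+\epsilon_v$, the slack $\tau_t\max\{0,R-y_tH_t(v_t)\}$ contributing $(\sigma/G^2)T\alpha^*$ in expectation via $\sigma\le R$, and the cancellation of the query probability $\sigma/(\sigma+|H_t(\theta_t)|)$ against a potential drop proportional to $\sigma+|H_t(\theta_t)|$, which is exactly how the paper assembles its clipped-stepsize recursion through the identity $\mathbb{E}[M_t]=\tfrac1\sigma\mathbb{E}\bigl[M_tZ_t(\sigma+|H_t(\theta_t)|)\bigr]$. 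One caution: a mistake does \emph{not} force a query deterministically, and you must keep the per-query drop in the form $\tfrac{\sigma}{2G^2}(\sigma+|H_t(\theta_t)|)$ rather than weakening it to $\sigma^2/(2G^2)$ before multiplying by the query probability (otherwise the constant degrades when the model is confidently wrong, i.e. $|H_t(\theta_t)|$ is large); your final displayed per-round expectation inequality is the correct one, so with that bookkeeping the argument goes through as in the paper.
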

We provide detailed proof in the appendix, where we generalize the proof of the separable case. From Lemma~\ref{thm:labelErrorGeneral}, we observe that the expected pseudolabel errors are bounded by an $O(\alpha^* T )$ term, which is linear increasing. This cannot be eschewed, because any classifier would make mistakes if the data distribution is not separable. We then present the regret bound in such a case.
\begin{theorem}[Regret Bound]\label{thm:ExpectedRegretNoiseGradientGeneral}
  Under the same conditions and parameters in Lemma~\ref{thm:labelErrorGeneral}.  Algorithm~\ref{algorithm:OAST} achieves the following regret bound
    \begin{align*}
      &\text{\rm D-Regret}^{\text{OSAMD}}(\{P_t\},T)  \leq  \frac{\epsilon_w + \gamma D}{\eta} + 4\sqrt{\frac{\gamma D T F V_T }{\eta}}+ 2(LD+G)^2\eta T +  \frac{4(\eta G^4 + G^3 D)}{\sigma^2}  (\gamma C_T + \epsilon_v + \frac{\sigma}{G^2} T \alpha^* ) ,
    \end{align*}
    where $\epsilon_v = D_{\mathcal{R}}(\theta_1, v_1),\epsilon_w = D_{\mathcal{R}}(\hat{w}_1,w_1^*)$.
\end{theorem}
We provide detailed proof in the appendix, where it is a simple generation of the separable case. We next choose the parameters to obtain the detailed results.
\begin{corollary}\label{cor:regretgeneral}
    In Theorem~\ref{thm:ExpectedRegretNoiseGradientGeneral}, set the parameter $\eta=V_T^{1/3}T^{-1/3}$. We have for $\sigma\leq R$
      \begin{align*} 
        &\text{\rm D-Regret}^{\text{OSAMD}}(\{P_t\},T) \\
        &\leq O(\sigma^{-2}C_T+ V_T^{1/3} T^{2/3} + \sigma^{-1}{\alpha^*}T).
      \end{align*}
\end{corollary}

Since $C_T$ is assumed to be a constant. Therefore, from Corollary~\ref{cor:regretgeneral}, we show the regret is of $O(V_T^{1/3} T^{2/3} + {\alpha^*}T),$ suggesting that the OSAMD algorithm is comparable to the optimal in hindsight with only $O(\alpha^* T)$ bias, which is the best we can hope to achieve. Besides, recall the regret lower bound of traditional algorithms is $\Omega(V_T^{1/3}T^{2/3})$. As $\alpha^*$ is often much small and negligible, the convergent rate is still aligned with the theoretical lower bound. Thus, we can claim that OSAMD can still adapt well to the changing environment. 

\paragraph{Remark} 
The $\alpha^*$ bias can not be eschewed for any self-adaptive algorithms with limited labeled data. For instance, if all the data are on the margin of $v_t$, then the mistake probability is at least $\alpha^*/2R$. Since labeled data is limited, we could assume that number of unlabeled data is of $\Omega(T).$ Then the pseudolabel errors are of $\Omega(\alpha^* T/2R),$ which leads to $\Omega(\alpha^* T/2R)$ mistake feedbacks. Therefore, it is easy to prove that the dynamic regret suffers from an $\Omega(\alpha^* T/2R)$ term, which is aligned with the term 
of $O(\sigma^{-1}\alpha^* T)$ in Corollary~\ref{cor:regretgeneral}.

\section{Experiments}\label{sec:experiment}
In this section, we extensively evaluate OSAMD on both synthetic and real-world datasets. We first verify OSAMD on the synthesis dataset with continually changing distributions for the linear classification task. Then, we evaluate OSAMD on a deep learning model using a real-world dataset and demonstrate that the theoretical intuition can be applied to practical deep learning tasks as well. 

\begin{figure*}
		\centering
		\subfigure[Rotating Gaussian]{
    	\includegraphics[width=0.231\linewidth]{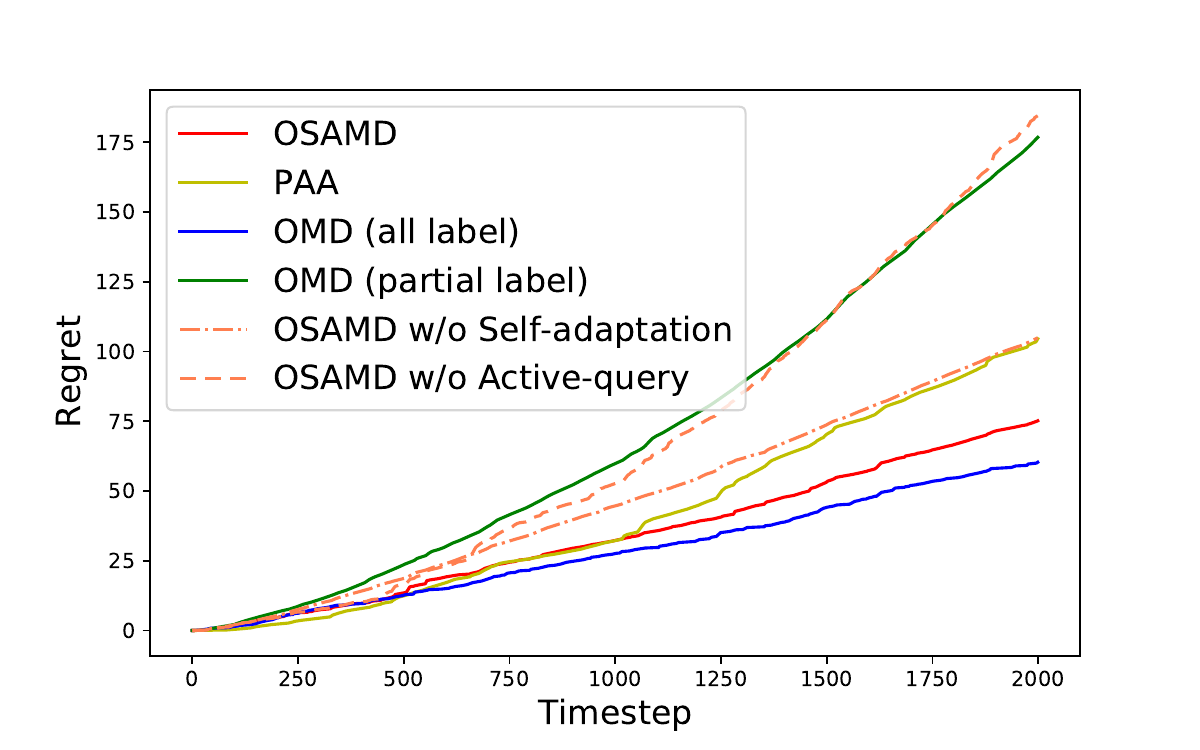}
    	\label{fig:LinearClassification}
    	}
    	\subfigure[Rotating MNIST]{
    	\includegraphics[width=0.231\linewidth]{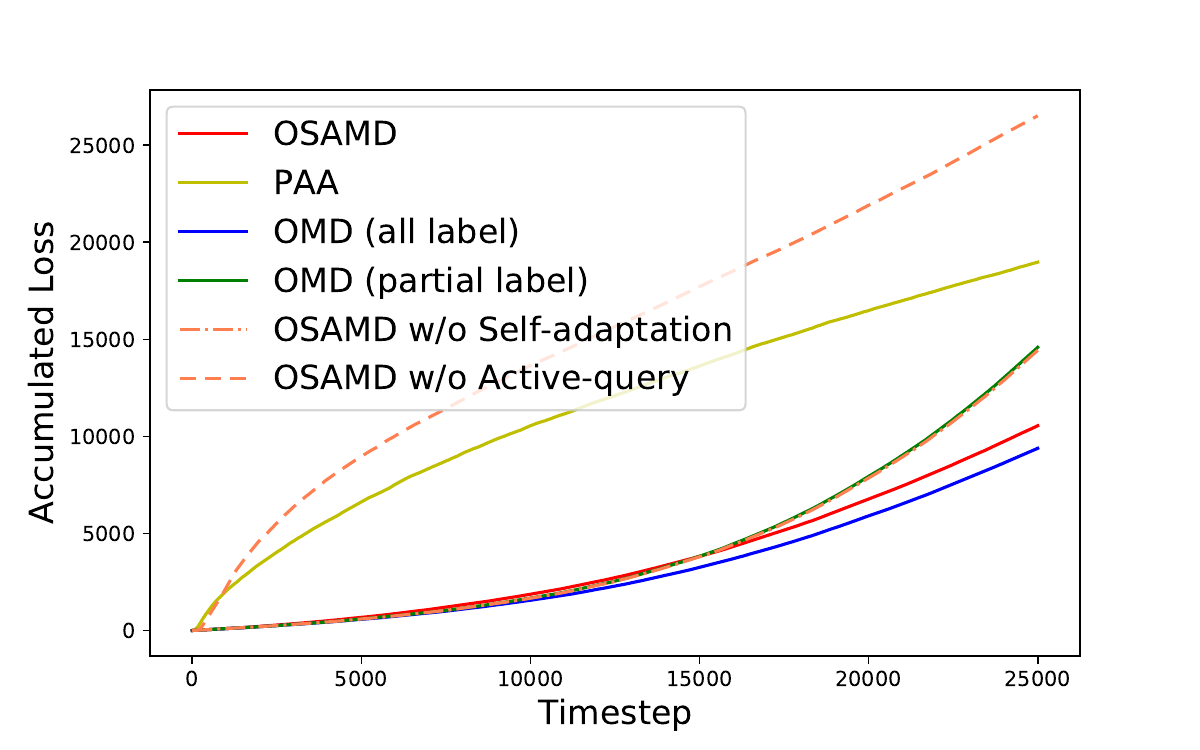}
    	\label{fig:MINIST}
    	}
    	\subfigure[Portraits]{
    	\includegraphics[width=0.231\linewidth]{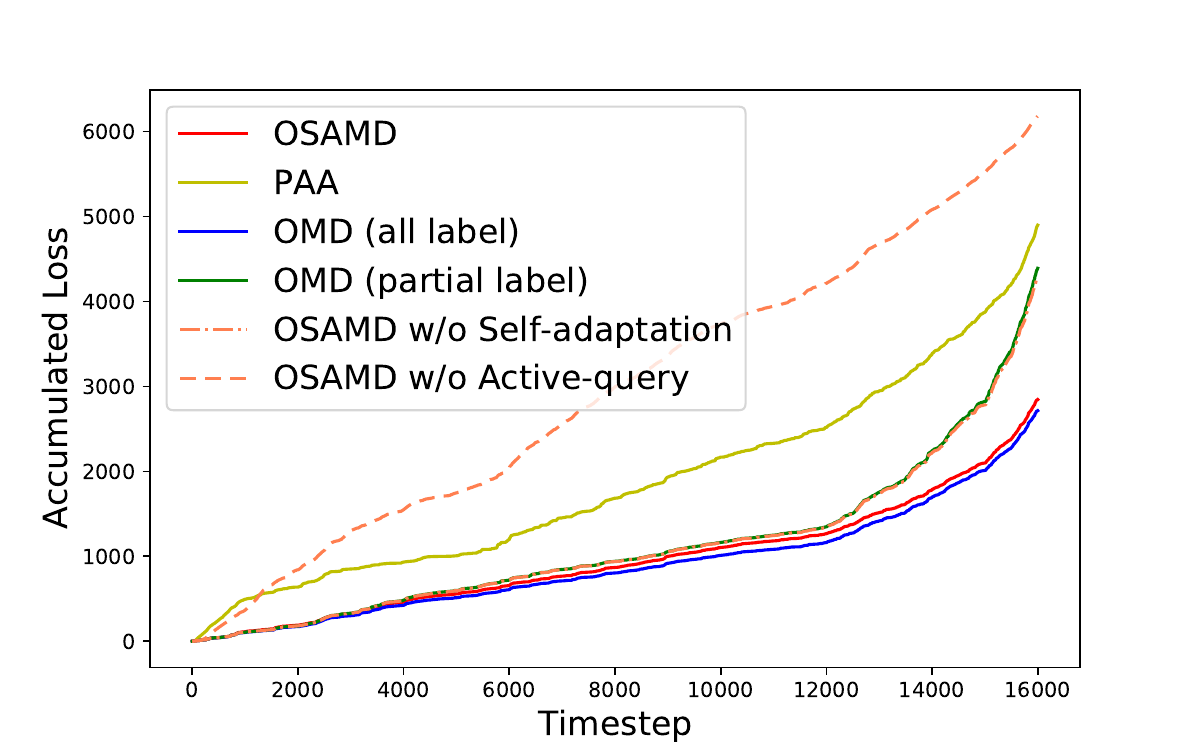}
    	\label{fig:Deep}
    	}
    	\subfigure[Cover-Type]{
    	\includegraphics[width=0.231\linewidth]{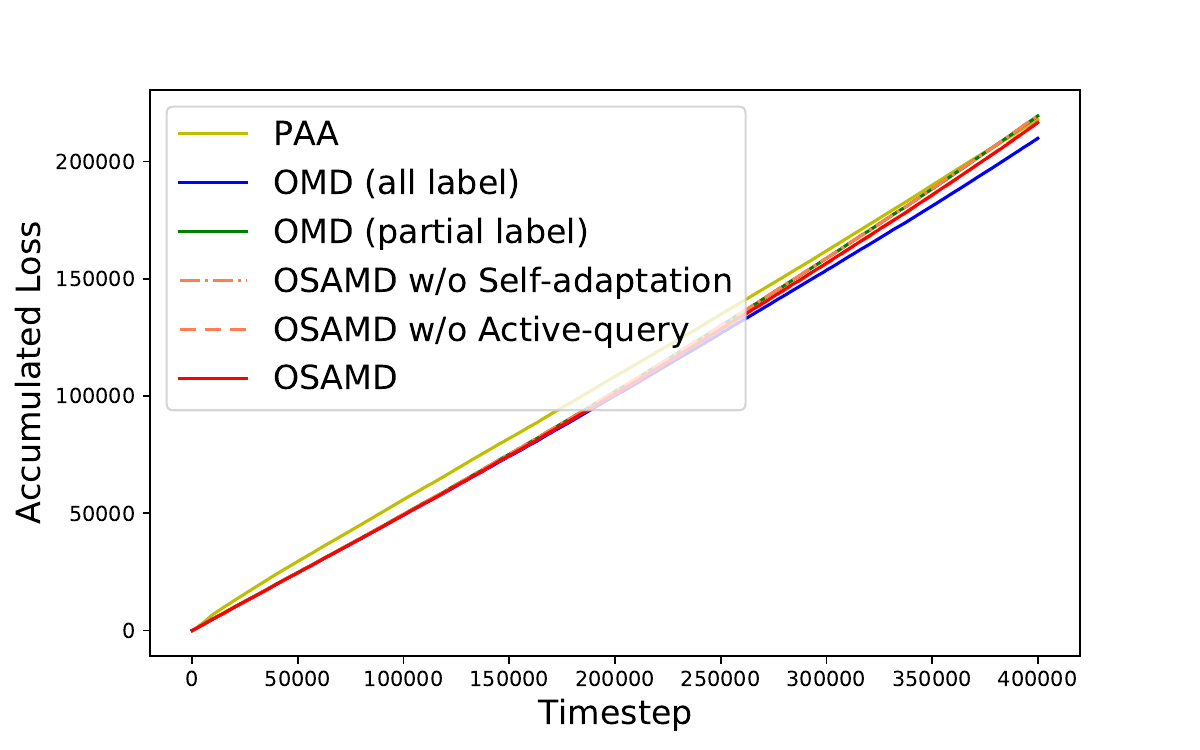}
    	\label{fig:cover}
    	}
    	\caption{Regret/Accumulated loss v.s. timestep on different datasets.}
    	\label{fig:experiment}
\end{figure*}
\begin{table*}
		\captionof{table}{Classification accuracy with 90$\%$ confidence intervals for the mean over $10$ runs.}
		\label{Tab:error}
		\centering
		\resizebox{0.99\linewidth}{!}{%
		\begin{tabular}{*9l}
        \toprule
        & \multicolumn{2}{c}{\textbf{Rotating Gaussian}}  & \multicolumn{2}{c}{\textbf{Rotating MNIST}}  & \multicolumn{2}{c}{\textbf{Portraits}}  & \multicolumn{2}{c}{\textbf{Cover-Type}} \\ 
        & Accuracy   & Labels    & Accuracy    & Labels & Accuracy   & Labels & Accuracy   & Labels\\ \midrule
    OSAMD  & \textbf{98.9}$\pm 0.2\%$  & 18.2$\pm 1.3\%$  & \textbf{86.8}$\pm 0.8\%$  & 6.4$\pm0.8\%$  &\textbf{93.7}$\pm 0.3\%$  & 3.8$\pm 0.8\%$   &  \textbf{76.8}$\pm 0.07\%$ & 2.3$\pm0.08\%$     \\ 
    PAA             & 98.5$\pm 0.2\%$           & 18.2$\pm 1.3\%$     & 84.0$\pm 0.6\%$& 6.4$\pm0.8\%$  & 92.0$\pm1.0$\% & 3.8$\pm 0.8\%$ & 76.5$\pm 0.02\%$& 2.3$\pm0.08\%$ \\
    OMD(all)        & \textbf{98.9}$\pm0.0\%$   & 100.0$\pm 0.0\%$      & \textbf{88.1}$\pm 0.7\%$      & 100.0$\pm0.0\%$   & \textbf{94.0}$\pm 0.4\%$  & 100.0$\pm 0.0\%$       & \textbf{76.8}  $\pm 0.02\%$           & 100.0$\pm 0.0\%$                          \\ 
    OMD(partial)    & 97.0$\pm1.0\%$            & 18.2$\pm 1.3\%$           & 81.7$\pm1.3 \%$      & 6.4$\pm0.8\%$ & 91.8$\pm1.1$\%   & 3.8$\pm 0.8\%$ & 75.3$\pm 0.04\%$   & 2.3$\pm0.08\%$                    \\ \midrule
    \begin{tabular}[c]{@{}l@{}}OSAMD w/o \\Self-adaption\end{tabular}   & 98.2$\pm 0.3\%$           & 18.2$\pm 1.3\%$         & 81.7$\pm 1.2\%$       & 6.4$\pm0.8\%$     & 91.8$\pm 1.1\%$  & 3.8$\pm 0.8\%$         & 75.3$\pm 0.03\%$   & 2.3$\pm0.08\%$        \\ 
    \begin{tabular}[c]{@{}l@{}}OSAMD w/o \\Active-query\end{tabular}   & 96.6$\pm 1.0\%$           & 18.2$\pm 1.3\%$  & 84.9$\pm 1.0\%$        & 6.4$\pm0.8\%$    & 92.7$\pm 0.6\%$  & 3.8$\pm 0.8\%$      &   76.1$\pm 0.16\%$                      & 2.3$\pm0.08\%$          \\ 
    \bottomrule
    \end{tabular}
    }
\end{table*}

\subsection{Experimental Setup}
We first briefly introduce our experimental setup, and leave the details in the appendix due to the space limit.

\paragraph{Dataset}
We experiment on two synthesis datasets - Rotating Gaussian (binary) \& Rotating MNIST (multi-class), and two real-world datasets - Portraits~\citep{ginosar2015century} \& Cover-Type~\citep{blackard1999comparative}: 
1) \textit{Rotating Gaussian}: 
We sequentially sample the data from two continually changing Gaussian distributions representing two classes. The center points rotate from $0\degree$ to $180\degree.$
2) \textit{Rotating MNIST}: We averagely rotate the images from $0\degree$ to $90\degree$ counterclockwise to be the target dataset with a continually changing domain.
3) \textit{Portraits}: It contains 37,921 photos of high school seniors labeled by gender. This real dataset suffers from a natural continual domain shift over the years~\citep{kumar2020understanding}. 
4) \textit{Cover-Type}: It contains 495141 samples with 54 features labeled by cover types. This dataset suffers from a natural continual domain shift over the sample indexes~\citep{kumar2020understanding}.

\paragraph{Baselines}
We compare with the following baselines:
1) \textit{PAA}~\citep{lu2016online}: To demonstrate the advantage of the online teacher-student structure, we compare with one online active algorithm -- passive-aggressive active (PAA) learning;
2) \textit{OMD (all)}: To compare OSAMD and traditional non-stationary online learning with full labels, we use all the labels to update by OMD;
3) \textit{OMD (partial)}: To compare OSAMD and OMD with the same amount of labeled samples, we use uniform sampled labels to update by OMD; 
4)\textit{OSAMD w/o Self-adaptation}: To evaluate the self-adaptation method of OSAMD, we use the same active queries as OSAMD to update by OMD;
5) \textit{OSAMD w/o Active-query}: To evaluate the active query strategy of OSAMD, we use uniform sampled labels to update the aggressive pseudolabel model for OSAMD. 

We do not compare with the proposed methods in \citet{kumar2020understanding} and \citet{chen2021active}, which study similar setting. For \citet{kumar2020understanding}, their algorithm should learn from batch data, thus can not be directly applied in the online continual adaptation setting that we study. For \citet{chen2021active}, the classification algorithm needs to memorize all the active queries during the online training and find the minimal of the received samples in every round. It is computationally intractable in our deep learning experiments. 

\paragraph{Implementation Details}
For Rotating Gaussian, we set the objective function to be the SVM loss. For other datasets, we follow the deep learning setting as previous work on unsupervised gradual domain adaptation~\citep{kumar2020understanding}. For rotating MNIST and Portraits, we used a 3-layer convolutional network with dropout(0.5) and batchnorm on the last layer. For the Cover-Type dataset we used a 2 hidden layer feedforward neural network with dropout(0.5). Due to space limitations, please refer to the supplementary material for more details.

\subsection{Experimental Results}
We next present the experimental results and ablation study for the proposed method.

\paragraph{Synthesis data}
We first investigate whether the simulation can corroborate our theoretical findings, and present the results illustrated in Table~\ref{Tab:error} (Rotating Gaussian, Rotating MNIST) and Figure~\ref{fig:LinearClassification},\ref{fig:MINIST}, from which we can make the following observations: \emph{OSAMD performs remarkably well with limited labels}. There is no accuracy decrease from the full-label OMD with only $18.2\%$ labels in Rotating Gaussian, and marginal decrease in Rotating MNIST with only $6.4\%$ labels. In the above two datasets, OSAMD achieves similar regret/accumulated loss with full-label OMD, showing remarkable adaptation ability. In contrast, the regrets/accumulated losses of other baselines increase dramatically. This experimental result is aligned with our dynamic regret bound in Theorem~\ref{thm:ExpectedRegretNoiseGradientGeneral}, where OSAMD has similar dynamic regret to full-label OMD with only a small bias in the general case.

\paragraph{Real-world data}
We then extend OSAMD to work with deep learning models, and observe the performance in practice. As shown in Table~\ref{Tab:error} (Portraits, Cover-Type) and Figure~\ref{fig:Deep},\ref{fig:cover}, the practical results are similar to the synthesis data. OSAMD attains $93.7\%$(Portraits)/$76.8\%$(Cover-Type) accuracy using only $3.8\%$/$2.3\%$ labels compared with $94.0\%$/$76.8\%$ accuracy of OMD (all) with full labels, while PAA and OMD (partial) with $3.8\%$/$2.3\%$ uniform queries only obtains $92.0\%$/$76.5\%$ and $91.8\%$/$75.3\%$. The accumulated loss of OSAMD is aligned with OMD (full), being a side-information to reflect the consistent of the regret, which demonstrates the remarkable adaptation ability to real-world environments. While the accumulated losses of other baselines increase quickly, showing the practical advantage of our theoretical design.

\paragraph{Ablation study}
Note that we have two key designs on OSAMD, i.e., self-adaptation and active query. To verify the efficacy of each component, we compare with two baselines: OSAMD w/o Self-adaptation and OSAMD w/o Active query. The experimental results in Table~\ref{Tab:error} and Figure~\ref{fig:experiment} show that: 1) \emph{The self-adaptation is effective}: OSAMD outperforms OSAMD w/o Self-adaptation, obtains a noticeable accuracy increase, and achieves a significant lower regret, which highlights the power of self-adaptation as in our theoretical findings. 2) \emph{The active query is effective}: OSAMD is more accurate and achieves a lower regret than OSAMD w/o Active-query, which demonstrates that the active queries are more effective than uniform samples. 

\paragraph{Sensitivity of Query Rate}
We test the performance regarding the effect of query number, as shown in Figure~\ref{fig:sensitivity}. We vary the parameter choice of $\sigma$ that balances the number of queries and performance, and compare it with other baselines on the Rotating Gaussian dataset. From the result, we find that the proposed OSAMD consistently outperforms others under different query rates, suggesting the advantage of OSAMD is not sensitive to the query number. 

\begin{figure}
    \centering
    \includegraphics[width=0.5\linewidth]{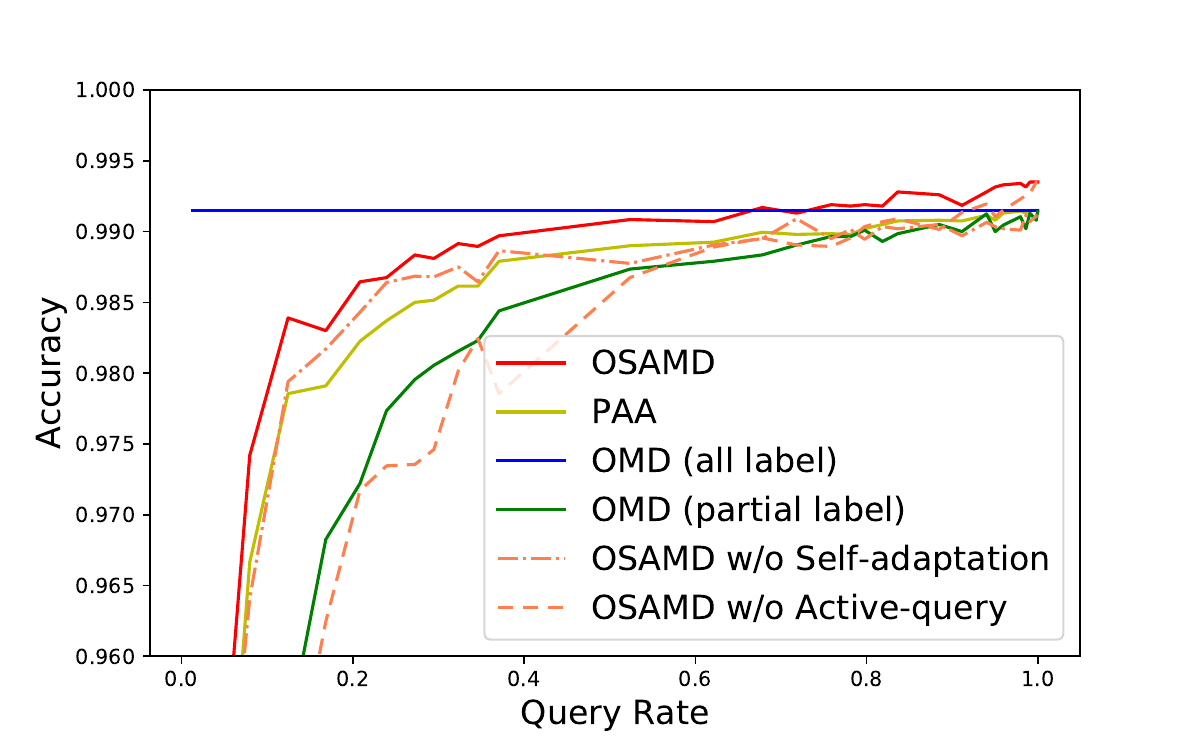}
    \caption{Accuracy v.s. query rate on Rotating Gaussian.}
    \label{fig:sensitivity}
\end{figure}

\section{Conclusions and Limitations}
This paper studies an open problem for machine learning models to continually adapt to changing environments with limited labels, where previous works show limitations on realistic modeling and theoretical guarantees. To fill this gap, we formulate the OACA problem and propose OSAMD, an effective online active learning algorithm with the novel design of the online teacher-student structure. We show it can compete with the optimal model in hindsight with optimal convergence order. Experimental evaluations corroborate our theoretical findings and verify the efficacy of OSAMD. Our results take the first step towards online domain adaptation in continually changing environments.

\paragraph{Limitations}
In this work, we tradeoff the number of queries and regret in an implicit way by setting the parameter $\sigma$. When seeking the explicit way, we face the common challenge of estimating the expected queries in online active learning literate~\citep{cesa2006worst,lu2016online}, and this becomes even harder under non-stationary settings. Although the limitation does not affect our experimental results and practical usage, it remains a future theoretical interest.

\subsubsection*{Acknowledgements}
This work is supported by the National Key Research and Development Program of China No. 2020AAA0106300, National Natural Science Foundation of China No. 62050110 and No. 61872215, and the Shenzhen Science and Technology Program of Grant No. RCYX20200714114523079. HZ would like to thank support from a Facebook research award. The authors also thank Kuaishou for sponsoring the research. We gratefully thank Tianyu Liu, Chenghao Hu, Jiangjing Yan, Yinan Mao for proof reading.

\bibliography{main}
\bibliographystyle{apalike}


\clearpage
\appendix

\thispagestyle{empty}


\section{FORMATTING INSTRUCTIONS FOR THE SUPPLEMENTARY MATERIAL}

We first provide a brief overview of the appendix. In Section~\ref{sec:related}, we introduce previous works related to our literature. In Section~\ref{supp:DA2TV}, we discuss how to connect various discrepancy measures between probability distributions in domain adaptation with the temporal variability condition in online learning, as described in Section~2 of the main paper. In Section~\ref{supp:proof}, we provide detailed proof for the analysis in Section~5 of the main paper. We then extend the results to the multiclass case in Section~\ref{supp:multiclass}. We provide the experimental details in Section~\ref{supp:additionalexperiment}.

\section{Related Work}
\label{sec:related}
The topic of this paper sits well in between two amazing bodies of literature: domain adaptation~\citep{tzeng2014deep,ganin2015unsupervised,hoffman2018cycada,zhao2020review} that is a typical method to improve the generalization of a pre-trained model when testing on new domains without or with limited labels, and online learning~\citep{hazan2016introduction} that is a basic framework for learning with streaming online data. Our results therefore contribute to both fields and hopefully will inspire more interplay between the two communities.

\subsection{Domain Adaptation}
In the domain adaptation literature, our setting is related to active domain adaptation that queries additional labels to enable effective adaptation, and gradual domain adaptation that studies the adaptation problem under gradual domain shift. We present a brief summary as follows.

\paragraph{Active Domain Adaptation}~~
Active Domain Adaptation~\citep{rai2010domain,chattopadhyay2013joint,su2020active,prabhu2021active} aims to actively select the most representative samples from the target domain, and learn a model to maximize performance on the target set. It was first proposed by \citet{rai2010domain} with application to sentiment classification from text data, where they embedded an online uncertainty-based sample strategy in domain adaptation. \citet{chattopadhyay2013joint} proposed a method that performs transfer and active learning simultaneously by solving a single convex optimization problem. Recently, active adversarial domain adaptation (AADA)~\citep{su2020active} was proposed to solve the active domain adaptation problem in the context of deep learning, where AADA selects samples based on the uncertainty measured by entropy and targetness measured by the domain discriminator. \citet{prabhu2021active} proposed ADA-CLUE that queried labels based on uncertainty and diversity, then adopts a semi-supervised domain adaptation to transfer the domain knowledge to the target. However, current works are designed to adapt from a fixed source domain to a fixed target domain, and can not be applied to continual domain adaptation in the changing environment.

\paragraph{Gradual Domain Adaptation}~~
Gradual domain adaptation~\citep{hoffman2014continuous,gadermayr2018gradual,wulfmeier2018incremental,bobu2018adapting,kumar2020understanding} cares about how to adapt the model to a changing environment with unlabeled data. Continuous manifold learning~\citep{hoffman2014continuous} tried to adapt to evolving visual domains by learning a sequence of transformations on a fixed source representation. \citet{gadermayr2018gradual} extended previous approaches by adding two methods for regularization of the fully-unsupervised adaptation process. \citet{wulfmeier2018incremental} presented an adversarial approach benefiting from unsupervised alignment to a series of intermediate domains. \citet{bobu2018adapting} proposed a continuous replay model that enforced the past prediction to be matched. \citet{kumar2020understanding} first developed a theory, and proposed a gradual self-training method, which self-trains on the finite unlabeled examples from each batch successively. However, the generalization bound in~\citep{kumar2020understanding} suffers from an exponential error blow-up in time horizon $T$. Our analysis further shows that unsupervised methods suffer from linear regret even in the separable case, implying the necessity of additional labels in the dynamic online setting.

\subsection{Online Learning}
In the online learning literature, our setting is related to adaptive online learning that aims to achieve optimal bound in dynamic environments, online active learning that studies online classification with active queries, online meta learning that provides a framework for adapting to a new domain with few-shot samples. We present a brief summary as follows.

\textbf{Adaptive Online Learning}~~
Adaptive online learning~\citep{besbes2015non,mokhtari2016online,jadbabaie2015online} extends the traditional online learning setting to deal with dynamic problems, by introducing the dynamic regret that measures the online performance in dynamic environments. Under the path-length~\citep{zinkevich2003online,hall2013dynamical} or temporal variability~\citep{besbes2015non,campolongo2020temporal} conditions, sublinear regret is achieved by online algorithms with suitable stepsizes~\citep{yang2016tracking}. However, practical deployments of fully online learning systems have been somewhat limited and impractical, partly due to the expense of annotations. 

\textbf{Online Active Learning}~~
Previous works~\citep{cesa2005minimizing,lu2016online,hao2017second} study online active learning for classification. However, online classification with limited labels in changing environments remains an open question~\citep{shuji2017budget}. Recent work~\citep{chen2021active} considers online active learning with hidden covariate shift for regression tasks. However, both the algorithm and theory can not be generalized to online classification with joint distribution shift. In this paper, we tackle this problem and resolve the open question proposed in~\citep{lu2016online,shuji2017budget}.

\textbf{Online Meta Learning}~~
Online meta learning~\citep{finn2019online,balcan2019provable,khodak2019adaptive} provides a framework for online few shot adaptation by learning the meta regularization. It studies how the model can fast adapt to a new environment using only a few samples by capturing the optimal initialization. However, online meta-learning focuses on ``few-samples learning’’ using passively received labeled samples (usually not sufficient to achieve sublinear regret), while our setting focuses on ``few-labels learning’’, where the active queries and unlabeled samples also help the adaptation.

\section{Domain Discrepancy to Temporal Variability}\label{supp:DA2TV}
In this section, we discuss how to connect classic distance metrics between probability distributions to the temporal variability condition used in the online learning literature. We present all the results in Table~\ref{Tab:Assumption}, where we provide the conditions for connecting these two.

\begin{table}[h]
\caption{The conditions for connecting different domain discrepancy with the temporal variability condition}
\centering
\label{Tab:Assumption}
\resizebox{0.99\columnwidth}{!}{%
\begin{tabular}{*2l}
\toprule
             &  Temporal Variability      \\ \midrule
Bounded sum of Total Variation       & Bounded function $f$ \\ 
Bounded sum of Wasserstein Infinity  & $f$ is Lischitz continuous on $x$; No label shift    \\ 
Bounded sum of Maximum  Mean Discrepancy & Bounded reproducing kernel Hilbert space $\mathcal{K}$; Linear function $f$ \\ \bottomrule
\end{tabular}%
}
\end{table}

\subsection{Bounded Sum of Total Variation}
We first show that the bounded sum of total variation~\citep{ben2010theory,zhao2018multiple,zhao2019learning} (as Assumption~1) with bounded function (as Assumption~5) can lead to the temporal variability condition in the online learning literature.
\begin{prop}\label{prop:TotalVariation}
    Assume the sum of total variation between $P_t,P_{t+1}$ is bounded, i.e., satisfying Assumption~1. If the function value $f(w;x,y)$ is bounded for all $w\in \mathcal{K},x\in \mathcal{X},y\in\{-1,1\},$ i.e., satisfying Assumption~5. Then the temporal variability is bounded as following
    \begin{align*}
        \sum_{t=1}^{T-1} \sup_{w\in \mathcal{K}}|l_t(w) - l_{t+1}(w) | \leq 2F V_T.
    \end{align*}
\end{prop}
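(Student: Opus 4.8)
The plan is to bound the temporal variability of the expected loss functions $l_t$ directly in terms of the total variation distance between consecutive distributions $P_t$ and $P_{t+1}$. The key identity is that for any fixed $w \in \mathcal{K}$, the difference $l_t(w) - l_{t+1}(w) = \mathbb{E}_{(x,y)\sim P_t} f(w;x,y) - \mathbb{E}_{(x,y)\sim P_{t+1}} f(w;x,y)$ is exactly an integral of the bounded function $f(w;\cdot,\cdot)$ against the signed measure $P_t - P_{t+1}$.

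First I would fix an arbitrary $w \in \mathcal{K}$ and write
\begin{align*}
    |l_t(w) - l_{t+1}(w)| = \left| \int f(w;x,y)\, \bigl(dP_t(x,y) - dP_{t+1}(x,y)\bigr) \right|.
\end{align*}
Then I would invoke the variational characterization of total variation: for any signed measure $\mu = P_t - P_{t+1}$ (which has total mass zero) and any measurable function $g$ with $\|g\|_\infty \leq F$, one has $\left|\int g\, d\mu\right| \leq 2F\, \dtv(P_t, P_{t+1})$. This follows from the Hahn--Jordan decomposition $\mu = \mu^+ - \mu^-$ together with $\mu^+(\mathcal{X}\times\{-1,1\}) = \mu^-(\mathcal{X}\times\{-1,1\}) = \dtv(P_t, P_{t+1})$ (using the definition $\dtv(\distp,\distq) = \sup_E |\distp(E) - \distq(E)|$, which equals $\mu^+$ of the positive set). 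Concretely, $\left|\int g\, d\mu\right| \leq \int |g|\, d\mu^+ + \int |g|\, d\mu^- \leq F(\mu^+(\Omega) + \mu^-(\Omega)) = 2F\,\dtv(P_t,P_{t+1})$. Applying Assumption~\ref{amp:boundeddecisionspace} with $g = f(w;\cdot,\cdot)$ gives $|l_t(w) - l_{t+1}(w)| \leq 2F\,\dtv(P_t, P_{t+1})$ for every $w$, hence $\sup_{w\in\mathcal{K}} |l_t(w) - l_{t+1}(w)| \leq 2F\,\dtv(P_t,P_{t+1})$.

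The final step is to sum this bound over $t = 1,\dots,T-1$ and apply Assumption~\ref{amp:gradualshift}, which states $\sum_{t=1}^{T-1}\dtv(P_t,P_{t+1}) \leq V_T$, yielding $\sum_{t=1}^{T-1}\sup_{w\in\mathcal{K}}|l_t(w) - l_{t+1}(w)| \leq 2F V_T$, as claimed.

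I do not anticipate a genuine obstacle here — this is essentially a textbook duality fact about total variation. The only point requiring mild care is making the total-variation/integral inequality precise: one must be slightly careful that the factor is $2F$ rather than $F$, since a function can differ by as much as $2F$ in its integral against $P_t$ versus $P_{t+1}$ (e.g., $g$ equal to $+F$ on the set where $P_t$ dominates and $-F$ where $P_{t+1}$ dominates). Since the paper only needs the stated bound with constant $2F$, invoking the Hahn--Jordan decomposition (or equivalently the identity $\dtv(\distp,\distq) = \tfrac12\sup_{\|g\|_\infty\leq 1}|\int g\,d\distp - \int g\,d\distq|$) suffices and I would cite it as a standard measure-theoretic fact rather than re-derive it in full.
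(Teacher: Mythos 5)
Your proposal is correct and follows essentially the same route as the paper: bound $|l_t(w)-l_{t+1}(w)|$ by $F$ times the total mass of the signed measure $P_t-P_{t+1}$ (which equals $2\,\dtv(P_t,P_{t+1})$), take the supremum over $w$, and sum using Assumption~\ref{amp:gradualshift}. The only cosmetic difference is that you make the factor-of-two step explicit via the Hahn--Jordan decomposition, whereas the paper writes it directly as $\int |dP_t - dP_{t+1}| \leq 2\sup_E|P_t(E)-P_{t+1}(E)|$.
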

\begin{proof}
    First, by the definition of $l_t$ and bounded $f,$ we have
    \begin{align*}
        \sup_{w\in \mathcal{K}}\left|l_t(w) - l_{t+1}(w)\right| & = \sup_{w\in \mathcal{K}}\left|\mathbb{E}_{x,y \sim P_t (x,y)} [f(w;x,y)] - \mathbb{E}_{x,y \sim P_{t+1} (x,y)} [f(w;x,y)]\right|\\
        & \leq \sup_{w\in \mathcal{K}}\int_{x,y} \left|f(w;x,y) d P_t  -  f(w;x,y) d P_{t+1} \right| \\
        & \leq \sup_{w\in \mathcal{K}}\int_{x,y} \left|f(w;x,y)| | d P_t - d P_{t+1}\right| \\
        & \leq F \int_{x,y} \left|d P_t - d P_{t+1}\right|.
    \end{align*}
    The last inequality is from Assumption~5. Then, sum this term from $1$ to $T-1,$ and by the definition of $d_{TV}$, we obtain
    \begin{align*}
        \sum_{t=1}^{T-1} \sup_{w\in \mathcal{K}}\left|l_t(w) - l_{t+1}(w) \right| & \leq F \sum_{t=1}^{T-1} \int_{x,y} \left|d P_t  - d P_{t+1} \right|\\
        & \leq F\cdot 2 \sum_{t=1}^{T-1} \sup_{E}\left|P_t(E) - P_{t+1}(E)\right| \\
        & = 2F \sum_{t=1}^{T-1} \dtv (P_t,P_{t+1})\\
        & \leq 2F V_T.
    \end{align*}
    The last inequality is from Assumption~1. We thus end the proof.
\end{proof}

The Proposition~\ref{prop:TotalVariation} also holds for the multiclass case, since we define the total variation by measurable events, which do not depend on the class set.

\subsection{Bounded Sum of Wasserstein Infinity Distance}
We next present the bounded sum of wasserstein infinity distance that also leads to the temporal variability condition in the online learning literature, under conditions that $f$ is Lischitz continuous over $x$ and $P_t$ has no label shift. We begin with the definition of wasserstein infinity distance.
\begin{define}[Wasserstein Infinity Distance]
We use $W_{\infty}(\distp, \distq)$ to denote the Wasserstein-infinity distance between distributions $\distp$ and $\distq$:
    \begin{align*} 
        W_{\infty}(\distp, \distq)\defeq\inf \left\{ \sup _{x \in \mathcal{X}}\|h(x)-x\|: h: \mathcal{X} \rightarrow \mathcal{X}, h_{\#} \distp=\distq\right\} ,
    \end{align*}
where $\#$ denotes the push-forward of a measure, that is, for every set $A \subseteq \mathcal{X}, h_{\#} P(A)=$ $P\left(h^{-1}(A)\right).$
\end{define}
\textbf{Remark}~~Note that in the definition above we use the Monge formulation of the Wasserstein distance. Under mild assumptions, e.g., both $\distp$ and $\distq$ have densities, the Monge formulation is well-defined. This formulation has also been used in a previous work~\citep{kumar2020understanding} to measure the distributional shift.

In particular, the authors~\citep{kumar2020understanding} assume that the conditional distributions do not shift too much, i.e.,
\begin{align*}
        \rho(\distp, \distq)\defeq\max \left(W_{\infty}\left(\distp_{X \mid Y=1}, \distq_{X \mid Y=1}\right),W_{\infty}\left(\distp_{X \mid Y=-1}, \distq_{X \mid Y=-1}\right)\right)
\end{align*}
is bounded, and there is no label shift, i.e., $\distp(Y)= \distq(Y).$ We adopt similar assumptions and further assume that $f(w;x,y)$ is Lipschitz continuous on $x,$ a general assumption on the loss function, which leads to temporal variability:
\begin{prop}\label{prop:Wasserstein}
    Let the function value $f(w;x,y)$ be Lipschitz continuous on $x,$ i.e., there exists a constant $L \geq 0$, such that
    \begin{align*}
        |f(w;x_1,y) - f(w;x_2,y)| \leq L \|x_1 - x_2\|, \forall {x_1,x_2\in\mathcal{X}},w\in\mathcal{K}.
    \end{align*}
    Assume the sum of Wasserstein Infinity Distance between each consequent pair of conditional distribution is bounded, i.e.
    \begin{align*}
        \sum_{t=1}^{T-1} \rho({P_{t}}, {P_{t+1}}) \leq V_T .
    \end{align*}
    Further assume there is no label shift, i.e., $\forall t\in[T], P_{t}(Y)=P_{t+1}(Y)=P(Y).$ 
    Then the temporal variability is bounded, as follows
    \begin{align*}
        \sum_{t=1}^{T-1} \sup_{w\in \mathcal{K}}|l_t(w) - l_{t+1}(w) | \leq L V_T.
    \end{align*}
\end{prop}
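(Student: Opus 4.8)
\textbf{Proof proposal for Proposition~\ref{prop:Wasserstein}.}

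The plan is to mirror the argument for the total-variation case (Proposition~\ref{prop:TotalVariation}), but replace the ``bounded $f$ against a signed measure'' estimate with a ``Lipschitz $f$ along an optimal transport map'' estimate, handled separately on each label class. First I would fix $t$ and $w\in\mathcal K$, and use the no-label-shift assumption $P_t(Y)=P_{t+1}(Y)=P(Y)$ to split the expected loss by conditioning on $Y$:
\begin{align*}
  l_t(w)-l_{t+1}(w) = \sum_{y\in\{-1,1\}} P(Y=y)\,\Bigl(\mathbb{E}_{x\sim P_{t,X\mid Y=y}}[f(w;x,y)] - \mathbb{E}_{x\sim P_{t+1,X\mid Y=y}}[f(w;x,y)]\Bigr).
\end{align*}
For each fixed $y$, let $h$ be a transport map with $h_{\#}P_{t,X\mid Y=y}=P_{t+1,X\mid Y=y}$ that nearly attains the Wasserstein-infinity infimum (within an arbitrary $\delta>0$). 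Then the push-forward identity gives $\mathbb{E}_{x\sim P_{t+1,X\mid Y=y}}[f(w;x,y)] = \mathbb{E}_{x\sim P_{t,X\mid Y=y}}[f(w;h(x),y)]$, so the difference inside the parentheses equals $\mathbb{E}_{x\sim P_{t,X\mid Y=y}}[f(w;x,y)-f(w;h(x),y)]$, whose absolute value is at most $L\,\mathbb{E}_{x}\|x-h(x)\| \le L\sup_x\|x-h(x)\| \le L\bigl(W_\infty(P_{t,X\mid Y=y},P_{t+1,X\mid Y=y})+\delta\bigr)$ by the Lipschitz assumption. Taking $\delta\to 0$ and bounding each class term by the max over $y$, which is exactly $\rho(P_t,P_{t+1})$, I get $\sup_{w}|l_t(w)-l_{t+1}(w)| \le L\sum_y P(Y=y)\,\rho(P_t,P_{t+1}) = L\,\rho(P_t,P_{t+1})$.

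Summing this over $t=1,\dots,T-1$ and invoking the hypothesis $\sum_{t=1}^{T-1}\rho(P_t,P_{t+1})\le V_T$ yields $\sum_{t=1}^{T-1}\sup_{w\in\mathcal K}|l_t(w)-l_{t+1}(w)|\le L V_T$, which is the claim. The supremum over $w$ passes through harmlessly because the per-step bound $L\,\rho(P_t,P_{t+1})$ is independent of $w$.

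The main obstacle is a measure-theoretic subtlety rather than an inequality: the Monge-formulation infimum in $W_\infty$ need not be attained by an actual map, so I cannot simply say ``let $h$ be the optimal map.'' I would handle this by working with near-optimal maps (the $\delta$ above) and then letting $\delta\to 0$; this is legitimate precisely because the paper's remark already assumes the densities exist so that the Monge formulation is well-defined, and any admissible $h$ satisfies the push-forward identity exactly. A secondary point to be careful about is that conditioning on $Y$ requires $P(Y=y)>0$ (or trivially drop the class if $P(Y=y)=0$), and that the Lipschitz constant $L$ is uniform in $w$, both of which are given by hypothesis. Everything else — Fubini to pull $\mathbb{E}$ through the difference, monotonicity of $\sup$, and the final telescoping-free sum — is routine.
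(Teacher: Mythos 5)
Your proposal is correct and follows essentially the same route as the paper's proof: decompose by label class via the no-label-shift assumption, apply the push-forward identity for an $\varepsilon$-near-optimal transport map, use Lipschitzness of $f$ in $x$ to bound each class term by $L(\rho(P_t,P_{t+1})+\varepsilon)$, let $\varepsilon\to 0$, and sum over $t$. The paper handles the non-attainment of the Monge infimum in exactly the way you describe (near-optimal maps plus an infimum over $\varepsilon$), so no gap remains.
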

\begin{proof}
    First, by the definition of Wasserstein Infinity Distance, we know that there exist $h_t^{(y)},~y\in\{-1,1\}$ such that 
    \begin{equation*}
        {h_t^{(y)}}_{\#} P_t(\cdot\mid y) = P_{t+1}(\cdot \mid y)
    \end{equation*}
    and 
    \begin{equation*}
        \sup _{x \in \mathcal{X}}\left\|h_t^{(y)}(x)-x\right\| \leq \rho({P_{t}}, {P_{t+1}})+\epsilon,\quad\forall \varepsilon >0.
    \end{equation*}
    Then, by the definition of $l_t,$ we have
    \begin{align*}
        &\left|l_t(w) - l_{t+1}(w)\right| \leq \sum_{y=-1,1}\left|P(Y = y)\mathbb{E}_{x \sim P_t (x\mid y)} [f(w;x,y)] - P(Y = y)\mathbb{E}_{x \sim P_{t+1}(x\mid y)} [f(w;x,y)]\right| \\
        & = \sum_{y=-1,1} \left| \int_{x} f(w;x,y) P (y) d P_t (X|Y=y) - \int_{x} f(w;x,y) P(y) d P_{t+1} (X|Y=y)\right| \\
        & = \sum_{y=-1,1}\left| \int_{x} f(w;x,y)P (y)   d P_t (X|Y=y) - \int_{x} f(w;h_t^{(y)}(x),y)P(y) d P_t (X|Y=y)\right| \\
        & \leq \sum_{y=-1,1} \int_{x} \left| f(w;x,y) P (y) -   f(w;h_t^{(y)}(x),y) P (y)\right| d P_t (X|Y=y) \\
        & \leq L \int_{x} \|x-   h_t^{(1)}(x)\| P (Y=1) d P_t (X|Y=1) + L \int_{x} \|x-   h_t^{(-1)}(x)\| P (Y=-1) d P_t (X|Y=-1) \\
        & \leq L \max_{h=h_t^{(1)},h_t^{(-1)}}\sup _{x \in \mathcal{X}}\|h(x)-x\|_{2} \left(\int_{x} P (Y = 1) d P_t (X|Y=1) + \int_{x} P (Y = -1) d P_t (X|Y=-1)\right)\\
        & \leq L (\rho({P_{t}}, {P_{t+1}}) + \varepsilon), \forall \varepsilon >0.
    \end{align*}
    There is no $w$ in right side, and thus the inequality still hold if we take $\sup_w$ in the left side. Then 
    \begin{align*}
        \sup_{w\in \mathcal{K}}\left|l_t(w) - l_{t+1}(w)\right| \leq \inf_{\varepsilon >0} L (\rho({P_{t}}, {P_{t+1}}) + \varepsilon) = L \rho({P_{t}}, {P_{t+1}}).
    \end{align*}
    By summing up, we get
    \begin{align*}
        \sum_{t=1}^{T-1} \sup_{w\in \mathcal{K}}\left|l_t(w) - l_{t+1}(w) \right| & \leq L \sum_{t=1}^{T-1} \rho({P_{t}}, {P_{t+1}}) \leq L V_T.
    \end{align*}
\end{proof}

\subsection{Bounded Sum of Maximum Mean Discrepancy}
We finally show that under the conditions that the decision space $\mathcal{K}$ is a bounded reproducing kernel Hilbert space and $f$ is linear on the representation space, the bounded sum of maximum mean discrepancy~\citep{long2015learning} can lead to the temporal variability condition in the online learning.
\begin{define}[Maximum Mean Discrepancy]
We use ${\text{MMD}}(\distp, \distq)$ to denote the maximum mean discrepancy between distributions $\distp$ and $\distq$:
    \begin{equation*}
        {\text{MMD}}_{\phi}(\distp, \distq)\defeq \|\mathbb{E}_{x\sim\distp}[\phi(x)] - \mathbb{E}_{x\sim\distq}[\phi(x)]\|_{\mathcal{H}},
    \end{equation*}
    where feature map $\phi:\mathcal{X}\rightarrow \mathcal{H},$ and $\mathcal{H}$ is a reproducing kernel Hilbert space. In binary class, the distance between conditional distribution
    \begin{equation*}
        d_{\text{MMD}}^\phi(\distp, \distq)\defeq \max \{{\text{MMD}}_{\phi}\left(\distp_{X \mid Y=1}, \distq_{X \mid Y=1}\right),{\text{MMD}}_{\phi}\left(\distp_{X \mid Y=-1}, \distq_{X \mid Y=-1}\right)\}.
    \end{equation*}
\end{define}

\begin{prop}\label{prop:MMD}
    Let $\mathcal{K}$ to be a reproducing kernel Hilbert space. Assume the sum of Maximum Mean Discrepancy between conditional $P_t,P_{t+1}$ is bounded, i.e.
    \begin{align*}
        \sum_{t=1}^{T-1} d_{\text{MMD}}^{\phi}(P_{t}, P_{t+1}) \leq V_T,
    \end{align*}
    where $\phi:\mathcal{X}\rightarrow \mathcal{K}.$
    Let $f(w;x,y) = y \langle w, \phi(x)\rangle$ linear on the representation space. Assume $\mathcal{K}$ is bounded by $\|w\|_{\mathcal{H}}\leq F, \forall w\in \mathcal{K},$ then the temporal variability is bounded, as following
    \begin{align*}
        \sum_{t=1}^{T-1} \sup_{w\in \mathcal{K}}|l_t(w) - l_{t+1}(w) | \leq F V_T.
    \end{align*}
\end{prop}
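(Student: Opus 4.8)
The plan is to mirror the argument used for Proposition~\ref{prop:Wasserstein}, but now exploiting the \emph{linearity} of $f$ in the feature space rather than Lipschitzness in $x$. As in the Wasserstein case, I would work under the additional (implicit) assumption that there is no label shift, i.e. $P_t(Y)=P_{t+1}(Y)=P(Y)$ for all $t$, so that the only source of variation between $l_t$ and $l_{t+1}$ is the shift of the class-conditional distributions $P_t(X\mid Y=y)$.

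First I would rewrite each expected loss in terms of \emph{kernel mean embeddings}. Setting $\mu_t^{(y)}\defeq \mathbb{E}_{x\sim P_t(\cdot\mid y)}[\phi(x)]\in\mathcal{H}$, the assumption $f(w;x,y)=y\langle w,\phi(x)\rangle_{\mathcal{H}}$ lets me pull the expectation inside the inner product, yielding
\[
  l_t(w) \;=\; \sum_{y=-1,1} P(Y=y)\, y\, \langle w,\, \mu_t^{(y)}\rangle_{\mathcal{H}}.
\]
One should check $\mu_t^{(y)}$ is well defined, which holds under the standing boundedness assumptions on the representation and on $\mathcal{K}$. Subtracting the two expressions, applying the triangle inequality, then Cauchy--Schwarz in $\mathcal{H}$ together with $\|w\|_{\mathcal{H}}\le F$, gives for every $w\in\mathcal{K}$
\[
  |l_t(w)-l_{t+1}(w)| \;\le\; \sum_{y=-1,1} P(Y=y)\,\|\mu_t^{(y)}-\mu_{t+1}^{(y)}\|_{\mathcal{H}} \;=\; \sum_{y=-1,1} P(Y=y)\,\text{MMD}_\phi\big(P_t(\cdot\mid y),P_{t+1}(\cdot\mid y)\big),
\]
where the last equality is exactly the definition of $\text{MMD}_\phi$. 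Bounding each conditional MMD by $d_{\text{MMD}}^\phi(P_t,P_{t+1})$ and using $\sum_y P(Y=y)=1$ yields $|l_t(w)-l_{t+1}(w)|\le F\, d_{\text{MMD}}^\phi(P_t,P_{t+1})$, a bound uniform in $w$. Taking $\sup_{w\in\mathcal{K}}$ and summing over $t=1,\dots,T-1$ then gives $\sum_{t}\sup_w|l_t(w)-l_{t+1}(w)|\le F\sum_t d_{\text{MMD}}^\phi(P_t,P_{t+1})\le F V_T$ by hypothesis.

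The only genuinely content-bearing step is the first one: recognizing that linearity of the loss in the RKHS feature map converts the expected loss into an inner product against the kernel mean embedding, so that the loss gap is controlled precisely by the $\mathcal{H}$-distance of those embeddings, which is the MMD. Everything after that is Cauchy--Schwarz and bookkeeping over the two labels; the mild subtlety is making the "no label shift" hypothesis explicit so that the per-label weights $P(Y=y)$ factor out cleanly, exactly as in Proposition~\ref{prop:Wasserstein}.
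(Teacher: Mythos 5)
Your proposal is correct and follows essentially the same route as the paper: linearity of $f$ turns $l_t(w)$ into an inner product of $w$ with the conditional kernel mean embeddings, Cauchy--Schwarz (the paper's H\"older step) with $\|w\|_{\mathcal{H}}\le F$ bounds the gap by the conditional MMDs, and summing over $t$ finishes. The only difference is cosmetic: you make explicit the no-label-shift assumption that the paper uses implicitly by writing $P(Y=y)$ without a time index, which is a fair clarification rather than a new argument.
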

\begin{proof}
    From the linear property of $f,$ and the definition of $l_t,$ we have
    \begin{align*}
        l_t(w) = \sum_{y=-1,1} P(Y = y)\mathbb{E}_{x \sim P_t (x\mid y)} f(w;x,y) = \sum_{y=-1,1} P(Y = y) y \langle w, \mathbb{E}_{x \sim P_t (x\mid y)} \phi_w(x)\rangle.
    \end{align*}
    Then, by the definition of Maximum Mean Discrepancy
    \begin{align*}
        & \sup_{w\in \mathcal{K}} \left|l_t(w) - l_{t+1}(w)\right| \\
        & =  \sup_{w\in \mathcal{K}} |\sum_{y=-1,1}P(Y = y) y \langle w, \mathbb{E}_{x \sim P_t (x\mid y)} \phi(x)\rangle - \sum_{y=-1,1} P(Y = y) y \langle w, \mathbb{E}_{x \sim P_{t+1} (x\mid y)} \phi(x)\rangle|\\
        & \leq \sup_{w\in \mathcal{K}} \sum_{y=-1,1} P(Y = y) \|w\|_{\mathcal{H}} \|\mathbb{E}_{x \sim P_t (x\mid y)} \phi(x) - \mathbb{E}_{x \sim P_{t+1} (x\mid y)} \phi(x)\|_{\mathcal{H}}\\
        & \leq F d_{\text{MMD}}^\phi(P_{t}, P_{t+1}).
    \end{align*}
    The first inequality comes from the H\"older inequality. By summing up, we finally get
    \begin{align*}
        \sum_{t=1}^{T-1} \sup_{w\in \mathcal{K}}\left|l_t(w) - l_{t+1}(w) \right|  \leq  \sum_{t=1}^{T-1} F d_{\text{MMD}}^\phi(P_{t}, P_{t+1}) \leq F V_T.
    \end{align*}
\end{proof}

\section{Missing Proofs}\label{supp:proof}

In this section, we provide the detailed proof of the pseudolabel errors bound and the dynamic regret bound for OSAMD.

\subsection{Pseudolabel Errors}
In this subsection, we analyze the pseudolabel errors for the OSAMD algorithm. We will first present some useful lemmas, then provide the proof of the separable case, where the data distribution can be correctly classified within a margin (i.e., $\alpha^* = 0$ in Assumption~2). Finally, we generalize it to the non-separable case. Here we generalize the proof in \citet{lu2016online,cesa2006worst} to non-stationary cases with mirror descent.

We first introduce the lemma on the property of Bregman divergence.
\begin{lemma}[\citet{beck2003mirror}]\label{lem:BeckandTeboulle}
    Let $\mathcal{K}$ be a convex set in a Banach space $\mathcal{B},$ and regularizer $\mathcal{R}: \mathcal{K} \mapsto \mathbb{R}$ be a convex function, and let $D_{\mathcal{R}}(\cdot, \cdot)$ be the Bregman divergence induced by $\mathcal{R}$. Then, any update of the form
    \begin{equation*}
        w^{*}=\underset{w \in \mathcal{K}}{\arg \min }\left\{\langle a, w\rangle+D_{\mathcal{R}}(w, c)\right\}
    \end{equation*}
    satisfies the following inequality
    \begin{equation*}
        \left\langle w^{*}-d, a\right\rangle \leq D_{\mathcal{R}}(d, c)-D_{\mathcal{R}}\left(d, w^{*}\right)-D_{\mathcal{R}}\left(w^{*}, c\right)
    \end{equation*}
    for any $d \in \mathcal{K}.$
\end{lemma}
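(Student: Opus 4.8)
The plan is to prove the inequality by combining the first-order optimality condition for the constrained minimization that defines $w^{*}$ with an exact algebraic identity for Bregman divergences (the three-point identity). Throughout I treat $\mathcal{R}$ as differentiable, as is implicit in the paper's definition $D_{\mathcal{R}}(a,b) = \mathcal{R}(a) - \mathcal{R}(b) - \langle \nabla \mathcal{R}(b), a-b\rangle$.

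First I would introduce the objective $\phi(w) \defeq \langle a, w\rangle + D_{\mathcal{R}}(w,c)$ and compute its gradient. Since $D_{\mathcal{R}}(w,c) = \mathcal{R}(w) - \mathcal{R}(c) - \langle \nabla \mathcal{R}(c), w-c\rangle$ depends on $w$ only through $\mathcal{R}(w) - \langle \nabla \mathcal{R}(c), w\rangle$, we have $\nabla_w \phi(w) = a + \nabla \mathcal{R}(w) - \nabla \mathcal{R}(c)$. Because $\phi$ is convex and $w^{*}$ minimizes it over the convex set $\mathcal{K}$, the variational (first-order optimality) inequality gives $\langle \nabla \phi(w^{*}), d - w^{*}\rangle \geq 0$ for every $d \in \mathcal{K}$. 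Substituting the gradient and rearranging yields
\[
\langle a,\, w^{*} - d\rangle \leq \langle \nabla \mathcal{R}(w^{*}) - \nabla \mathcal{R}(c),\, d - w^{*}\rangle.
\]

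The second step is to recognize the right-hand side as the desired combination of Bregman divergences. I would expand each of $D_{\mathcal{R}}(d,c)$, $D_{\mathcal{R}}(d,w^{*})$, and $D_{\mathcal{R}}(w^{*},c)$ using the definition and verify the identity
\[
D_{\mathcal{R}}(d,c) - D_{\mathcal{R}}(d,w^{*}) - D_{\mathcal{R}}(w^{*},c) = \langle \nabla \mathcal{R}(w^{*}) - \nabla \mathcal{R}(c),\, d - w^{*}\rangle.
\]
The verification is purely algebraic: the scalar terms $\mathcal{R}(d)$, $\mathcal{R}(c)$, and $\mathcal{R}(w^{*})$ cancel in pairs, leaving only the inner products against $\nabla \mathcal{R}(w^{*})$ and $\nabla \mathcal{R}(c)$, which collapse to the stated form. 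Chaining this identity with the optimality inequality from the first step produces exactly the claimed bound.

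The one place requiring care — rather than a genuine obstacle — is justifying the first-order optimality condition when $w^{*}$ lies on the boundary of $\mathcal{K}$ or when $\nabla \mathcal{R}$ is only defined on the relative interior of the domain. I would handle this by appealing to the standard variational characterization of a constrained minimizer of a convex differentiable function, which holds as the one-sided inequality $\langle \nabla \phi(w^{*}), d-w^{*}\rangle \geq 0$ precisely to accommodate boundary solutions, rather than as a gradient-equals-zero condition. No smoothness of $a$ or of any loss is needed, and the argument remains valid in the stated Banach-space setting once $\nabla \mathcal{R}$ is read as the G\^ateaux derivative.
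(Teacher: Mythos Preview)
Your proposal is correct and follows essentially the same route the paper takes: the paper does not prove this cited lemma directly, but in the proof of the closely related Lemma~\ref{lem:ImplicitToBregman} it carries out exactly your two steps---the first-order optimality (KKT) condition yielding $\langle w^{*}-d,\nabla\phi(w^{*})\rangle\leq\langle d-w^{*},\nabla\mathcal{R}(w^{*})-\nabla\mathcal{R}(c)\rangle$, followed by the three-point Bregman identity verified by direct expansion---and explicitly remarks that this is the same computation as for Lemma~\ref{lem:BeckandTeboulle}.
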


Denote the instantaneous hinge loss with margin $r$ by $f^r_t(\theta) = \max\{0, r - y_t H(\theta;x_t) \},$ where $x_t,y_t$ is sampled from $P_t.$ We then present a useful lemma to get the recurrence.
\begin{lemma}\label{lem:firstInequation}
    When regularizer $\mathcal{R}: \mathcal{K} \mapsto \mathbb{R}$ is a 1-strongly convex function on $\mathcal{K}$ with respect to a norm $\|\cdot\|$. Then for algorithm~1, the following inequality holds
    \begin{align*}
        &\tau_t r  - \tau_t y_t H_t(\theta_t) - \frac{\tau_t^2}{2}\|\nabla H_t(\theta_t)\|^2_* \leq D_{\mathcal{R}} (v_t,\theta_t) - D_{\mathcal{R}} (v_t,\theta_{t+1}) + \tau_t f^r_t(v_t) .
    \end{align*}
    for $r > 0.$
\end{lemma}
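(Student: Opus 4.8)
The plan is to combine the standard one-step analysis of (explicit) mirror descent on the aggressive model $\theta$ with the strong convexity of $\mathcal{R}$ and the convexity of the hinge surrogate $f^r_t$. First I would recall that the aggressive update on a query round is $\theta_{t+1} = \arg\min_{\theta\in\mathcal{K}} \langle -\tau_t y_t \nabla H_t(\theta_t), \theta\rangle + D_{\mathcal{R}}(\theta,\theta_t)$, so Lemma~\ref{lem:BeckandTeboulle} applies with $a = -\tau_t y_t \nabla H_t(\theta_t)$, $c = \theta_t$, $w^* = \theta_{t+1}$, and the test point $d = v_t$. This gives
\begin{align*}
\langle \theta_{t+1} - v_t, -\tau_t y_t \nabla H_t(\theta_t)\rangle \leq D_{\mathcal{R}}(v_t,\theta_t) - D_{\mathcal{R}}(v_t,\theta_{t+1}) - D_{\mathcal{R}}(\theta_{t+1},\theta_t).
\end{align*}
I would then split the left-hand side as $\langle v_t - \theta_{t+1}, \tau_t y_t \nabla H_t(\theta_t)\rangle = \langle v_t - \theta_t, \tau_t y_t \nabla H_t(\theta_t)\rangle + \langle \theta_t - \theta_{t+1}, \tau_t y_t \nabla H_t(\theta_t)\rangle$.

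For the first piece, convexity of $-y_t H_t(\cdot)$ (Assumption~\ref{amp:convexity}) gives $\langle v_t - \theta_t, y_t\nabla H_t(\theta_t)\rangle = -\langle v_t - \theta_t, \nabla(-y_tH_t)(\theta_t)\rangle \geq -((-y_tH_t(v_t)) - (-y_tH_t(\theta_t))) = y_tH_t(v_t) - y_tH_t(\theta_t)$, and since $r - y_tH_t(v_t) \leq f^r_t(v_t)$ we obtain $\langle v_t - \theta_t, \tau_t y_t\nabla H_t(\theta_t)\rangle \geq \tau_t(r - y_tH_t(\theta_t) - f^r_t(v_t))$. For the second piece, I would bound $\langle \theta_t - \theta_{t+1}, \tau_t y_t\nabla H_t(\theta_t)\rangle \geq -\frac{1}{2}\|\theta_t - \theta_{t+1}\|^2 - \frac{\tau_t^2}{2}\|\nabla H_t(\theta_t)\|_*^2$ by Fenchel--Young (using $|y_t|=1$), and then absorb $-\frac12\|\theta_t-\theta_{t+1}\|^2$ against $-D_{\mathcal{R}}(\theta_{t+1},\theta_t) \leq -\frac12\|\theta_{t+1}-\theta_t\|^2$, which holds by $1$-strong convexity of $\mathcal{R}$. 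Chaining these inequalities and rearranging yields exactly
\begin{align*}
\tau_t r - \tau_t y_t H_t(\theta_t) - \frac{\tau_t^2}{2}\|\nabla H_t(\theta_t)\|_*^2 \leq D_{\mathcal{R}}(v_t,\theta_t) - D_{\mathcal{R}}(v_t,\theta_{t+1}) + \tau_t f^r_t(v_t).
\end{align*}

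I expect the only subtlety — not really a serious obstacle — to be handling the non-query rounds, where $\theta_{t+1} = \theta_t$ and the update is vacuous: there the left side should be shown to be $\le 0$ so that the bound still holds (with $D_{\mathcal{R}}(v_t,\theta_t) - D_{\mathcal{R}}(v_t,\theta_{t+1}) = 0$ and $\tau_t f^r_t(v_t) \ge 0$). This follows because on non-query rounds the algorithm sets $\tau_t$ via the rule $\tau_t = \max\{0,\sigma - y_tH_t(\theta_t)\}/\|\nabla H_t(\theta_t)\|_*^2$; when $\sigma - y_tH_t(\theta_t) \le 0$ we have $\tau_t = 0$ and the left side vanishes, and when $\tau_t > 0$ one checks directly that $\tau_t r - \tau_t y_tH_t(\theta_t) - \frac{\tau_t^2}{2}\|\nabla H_t(\theta_t)\|_*^2$ is controlled (the paper's convention makes the inequality trivially true, e.g.\ because $r\le\sigma$ is not assumed so one instead just notes the claimed inequality is the per-round building block used only when summing over query rounds). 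The cleanest presentation is: prove the displayed inequality on query rounds via the Lemma~\ref{lem:BeckandTeboulle} argument above, and on non-query rounds note both sides reduce to a trivially valid statement. The main work is simply the careful bookkeeping of the inner-product split and the Fenchel--Young step; no genuinely hard step is anticipated.
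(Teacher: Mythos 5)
Your proposal is correct and follows essentially the same route as the paper: apply Lemma~\ref{lem:BeckandTeboulle} to the aggressive mirror-descent update, use convexity of $-y_tH_t(\cdot)$ together with $r - y_tH_t(v_t)\le f^r_t(v_t)$, bound the cross term $\langle \theta_t-\theta_{t+1},\tau_t y_t\nabla H_t(\theta_t)\rangle$ by H\"older/Young, and absorb $\tfrac12\|\theta_{t+1}-\theta_t\|^2$ into $D_{\mathcal{R}}(\theta_{t+1},\theta_t)$ via $1$-strong convexity. Your explicit treatment of the $\tau_t=0$ / non-query rounds (where $\theta_{t+1}=\theta_t$ and the inequality is trivial) is, if anything, slightly more careful than the paper's, which leaves this convention implicit until Lemma~\ref{lem:instantaneousError}.
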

\begin{proof}
    First, by the definition of $f^r_t,$ we have
    \begin{align*}
        r -  f^r_t(v_t)  &  = r - \max\{0,r-y_t H_t(v_t)\} \leq  y_t H_t(v_t) \\
        & = y_t H_t(v_t) - y_t H_t(\theta_t)  + y_t H_t(\theta_t).  
    \end{align*}
    By the convexity of $-y H(\cdot),$ we have
    \begin{align*}
        y_t H_t(v_t)  - y_t H_t(\theta_t) & = - y_t H_t(\theta_t) - ( - y_t H_t(v_t) )\\
        & \leq \langle - y_t \nabla H_t(\theta_t), \theta_t - v_t \rangle \\
        & \leq \langle - y_t \nabla H_t(\theta_t), \theta_{t+1} - v_t \rangle + \langle - y_t \nabla H_t(\theta_t), \theta_{t} - \theta_{t+1} \rangle.
    \end{align*}
    By the update rule of $\theta$ and Lemma~\ref{lem:BeckandTeboulle}, the first term can be bounded by
    \begin{align*}
        &\langle - y_t \nabla H_t(\theta_t), \theta_{t+1} - v_t \rangle \leq \frac{1}{\tau_t} (D_{\mathcal{R}}(v_t,\theta_t) - D_{\mathcal{R}}(v_t,\theta_{t+1}) - D_{\mathcal{R}} (\theta_{t+1},\theta_{t})  ).
    \end{align*}
    Due to H\"older inequality and the fact that $ab\leq \frac{\eta}{2} a^2 + \frac{1}{2\eta}G^2$ for $\eta >0,$ for the second term, we have
    \begin{align*}
        \langle - y_t \nabla H_t(\theta_t), \theta_{t} - \theta_{t+1} \rangle &  \leq   \|\nabla H_t(\theta_t)\|_* \|\theta_{t+1} - \theta_t\|\\
        & \leq \frac{\tau_t}{2} \| \nabla H_t(\theta_t)\|_*^2 + \frac{1}{2\tau_t} \|\theta_{t+1} - \theta_t\|^2.
    \end{align*}
    Due to the strong convexity of regularizer $\mathcal{R},$ we have $D_{\mathcal{R}}(x, y) \geq \frac{1}{2}\|x-y\|^{2}$ for any $x, y \in \mathcal{X}$ \citep{mohri2018foundations}. Therefore, by plugging the above term, we obtain that
    \begin{align*}
        r -  f^r_t(v_t)  & \leq \frac{1}{\tau_t} (D_{\mathcal{R}}(v_t,\theta_t) - D_{\mathcal{R}}(v_t,\theta_{t+1}) -\frac{1}{2} \|\theta_{t+1} - \theta_t\|^2) \\
        & \quad + \frac{\tau_t}{2} \| \nabla H_t(\theta_t)\|_*^2 + \frac{1}{2\tau_t} \|\theta_{t+1} - \theta_t\|^2 +  y_t H_t(\theta_t).
    \end{align*}
    By rearranging, we have
    \begin{align*}
        &\tau_t r  - \tau_t y_t H_t(\theta_t) - \frac{\tau_t^2}{2}\|\nabla H_t(\theta_t)\|^2_* \leq D_{\mathcal{R}} (v_t,\theta_t) - D_{\mathcal{R}} (v_t,\theta_{t+1}) + \tau_t f^r_t(v_t) .
    \end{align*}
\end{proof}

Denote the instantaneous mistake by $M_t (w) = \mathbf{1}_{\hat{y}_t \neq y_t},$ and let $L_t (w) = \mathbf{1}_{\hat{y}_t = y_t, \hat{y}_t H_t(w)\leq r}$ to be the indicator of the right decision but in the margin $r$, where $\mathbf{1}_{(\cdot)}$ is the indicator function. We then have the following relationship.
\begin{lemma}\label{lem:instantaneousError}
    Take the same assumptions as Lemma~\ref{lem:firstInequation}. For Algorithm~1, let $\tau_t = 0$ if $ {f^r_t(\theta_t)}=0,$ then the following inequality holds for every $t$
    \begin{align*}
        & M_t Z_t \tau_t(r + |H_t(\theta_t)| - \frac{\tau_t}{2}\|\nabla H_t(\theta_t)\|^2_*)  + L_t Z_t \tau_t(r - |H_t(\theta_t)| - \frac{\tau_t}{2}\|\nabla H_t(\theta_t)\|^2_*) \\
        & \leq D_{\mathcal{R}}(v_t,\theta_t)-D_{\mathcal{R}}(v_t,\theta_{t+1}) + \tau_t f^r_t(v_t) 
    \end{align*}
    for $r>0.$
\end{lemma}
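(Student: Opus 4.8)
\textbf{Proof proposal for Lemma~\ref{lem:instantaneousError}.}

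The plan is to start from the inequality established in Lemma~\ref{lem:firstInequation} and show that, on each round $t$, the left-hand side of the claimed inequality is dominated by $\tau_t r - \tau_t y_t H_t(\theta_t) - \frac{\tau_t^2}{2}\|\nabla H_t(\theta_t)\|_*^2$. First I would handle the two degenerate cases: if $Z_t = 0$ then $\theta_{t+1} = \theta_t$, so $D_{\mathcal{R}}(v_t,\theta_t) - D_{\mathcal{R}}(v_t,\theta_{t+1}) = 0$, the left-hand side vanishes (both terms carry a factor $Z_t$), and the inequality reduces to $0 \le \tau_t f_t^r(v_t)$, which holds since $\tau_t \ge 0$ and $f_t^r(v_t) \ge 0$. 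Similarly, if $f_t^r(\theta_t) = 0$ then by hypothesis $\tau_t = 0$, and again both sides collapse appropriately (the left-hand side is $0$; note $\tau_t = 0$ also makes the right-hand side of Lemma~\ref{lem:firstInequation} trivial, but we only need $0 \le \tau_t f_t^r(v_t) = 0$ here, or more directly the $\tau_t$-factored left side is $0$). So from now on assume $Z_t = 1$ and $f_t^r(\theta_t) > 0$.

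The key algebraic step is a case split on whether the pseudolabel $\hat{y}_t = \mathrm{sign}(H_t(\theta_t))$ is correct. Recall $\hat{y}_t H_t(\theta_t) = |H_t(\theta_t)|$ always. If $\hat{y}_t \ne y_t$ (so $M_t = 1$, $L_t = 0$), then $y_t H_t(\theta_t) = -\hat{y}_t H_t(\theta_t) = -|H_t(\theta_t)|$, hence
\begin{align*}
    \tau_t r - \tau_t y_t H_t(\theta_t) - \tfrac{\tau_t^2}{2}\|\nabla H_t(\theta_t)\|_*^2 = \tau_t\bigl(r + |H_t(\theta_t)| - \tfrac{\tau_t}{2}\|\nabla H_t(\theta_t)\|_*^2\bigr),
\end{align*}
which is exactly the $M_t$-term (with $M_t = 1$), while the $L_t$-term is $0$. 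If instead $\hat{y}_t = y_t$ but $\hat{y}_t H_t(\theta_t) \le r$ (so $L_t = 1$, $M_t = 0$), then $y_t H_t(\theta_t) = |H_t(\theta_t)|$, giving $\tau_t r - \tau_t y_t H_t(\theta_t) - \frac{\tau_t^2}{2}\|\nabla H_t(\theta_t)\|_*^2 = \tau_t(r - |H_t(\theta_t)| - \frac{\tau_t}{2}\|\nabla H_t(\theta_t)\|_*^2)$, which matches the $L_t$-term. The remaining case is $\hat{y}_t = y_t$ with $\hat{y}_t H_t(\theta_t) > r$: then $M_t = L_t = 0$, so the left-hand side of the claim is $0$, and we just need $0 \le \tau_t r - \tau_t y_t H_t(\theta_t) - \frac{\tau_t^2}{2}\|\nabla H_t(\theta_t)\|_*^2 + D_{\mathcal{R}}(v_t,\theta_t) - D_{\mathcal{R}}(v_t,\theta_{t+1}) + \tau_t f_t^r(v_t)$, which again follows from Lemma~\ref{lem:firstInequation} directly (it is the statement of that lemma, after moving terms). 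In each of the three non-degenerate cases, the left-hand side of the claimed inequality equals (or is bounded by) $\tau_t r - \tau_t y_t H_t(\theta_t) - \frac{\tau_t^2}{2}\|\nabla H_t(\theta_t)\|_*^2$, and then invoking Lemma~\ref{lem:firstInequation} with the margin parameter $r$ yields the bound $D_{\mathcal{R}}(v_t,\theta_t) - D_{\mathcal{R}}(v_t,\theta_{t+1}) + \tau_t f_t^r(v_t)$, completing the round.

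I do not expect a serious obstacle here; this lemma is essentially bookkeeping that repackages Lemma~\ref{lem:firstInequation} into a form indexed by the mistake/margin indicators so that it can later be summed over $t$ against the query events $Z_t$. The one point requiring care is making sure the case $f_t^r(\theta_t) > 0$ is genuinely covered — i.e., that whenever $M_t = 1$ or $L_t = 1$ we indeed have $f_t^r(\theta_t) = \max\{0, r - y_t H_t(\theta_t)\} > 0$ so that the convention $\tau_t = 0 \Leftrightarrow f_t^r(\theta_t) = 0$ does not accidentally zero out a term we need. When $M_t = 1$, $y_t H_t(\theta_t) = -|H_t(\theta_t)| \le 0 < r$, so $f_t^r(\theta_t) > 0$; when $L_t = 1$, $y_t H_t(\theta_t) = |H_t(\theta_t)| \le r$, and if this is a strict margin violation then $f_t^r(\theta_t) > 0$ — the boundary case $y_t H_t(\theta_t) = r$ gives $f_t^r(\theta_t) = 0$ hence $\tau_t = 0$, but then the $L_t$-term is $\tau_t(\cdot) = 0$ and the claimed inequality still holds trivially. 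So the bookkeeping is consistent, and the proof is just the three-way case analysis above followed by one application of Lemma~\ref{lem:firstInequation}.
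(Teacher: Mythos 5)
Your proposal is correct and follows essentially the same route as the paper: both arguments reduce the claim to Lemma~\ref{lem:firstInequation} by noting that $y_tH_t(\theta_t)=-|H_t(\theta_t)|$ when $M_t=1$ and $y_tH_t(\theta_t)=|H_t(\theta_t)|$ when $L_t=1$, and that whenever $M_t+L_t=0$ or $Z_t=0$ one has $\tau_t=0$ and $\theta_{t+1}=\theta_t$, so both sides collapse. The paper phrases this by multiplying the inequality of Lemma~\ref{lem:firstInequation} by the indicators and then dropping them, while you do an explicit case split, but the content is the same.
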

\begin{proof}
    From Lemma~\ref{lem:firstInequation}, we know that
    \begin{align*}
        &\tau_t r  - \tau_t y_t H_t(\theta_t) - \frac{\tau_t^2}{2}\|\nabla H_t(\theta_t)\|^2_* \leq D_{\mathcal{R}} (v_t,\theta_t) - D_{\mathcal{R}} (v_t,\theta_{t+1}) + \tau_t f^r_t(v_t) .
    \end{align*}
    If $M_t=1$ then $y_t H_t(\theta_t)\leq 0,$ and if $L_t = 1$ then $y_t H_t(\theta_t)\geq 0.$ Therefore, we can obtain
    \begin{align*}
        & M_t Z_t \tau_t(r + |H_t(\theta_t)| - \frac{\tau_t}{2}\|\nabla H_t(\theta_t)\|^2_*)  + L_t Z_t \tau_t(r - |H_t(\theta_t)| - \frac{\tau_t}{2}\|\nabla H_t(\theta_t)\|^2_*) \\
        & \leq M_t Z_t (D_{\mathcal{R}}(v_t,\theta_t)-D_{\mathcal{R}}(v_t,\theta_{t+1}) + \tau_t f^r_t(v_t)) + L_t Z_t (D_{\mathcal{R}}(v_t,\theta_t)-D_{\mathcal{R}}(v_t,\theta_{t+1}) + \tau_t f^r_t(v_t))\\
        & = (M_t + L_t) Z_t (D_{\mathcal{R}}(v_t,\theta_t)-D_{\mathcal{R}}(v_t,\theta_{t+1}) + \tau_t f^r_t(v_t)).
    \end{align*}
    From Algorithm~1, we know that if $Z_t=0,$ then $\tau_t = 0, \theta_t = \theta_{t+1}.$ And if $M_t + L_t =0,$ we get $y_t H_t(\theta_t) \geq r,$ then $\tau_t=0, \theta_t = \theta_{t+1}.$ Therefore, we have
    \begin{align*}
        (M_t + L_t) Z_t (D_{\mathcal{R}}(v_t,\theta_t)-D_{\mathcal{R}}(v_t,\theta_{t+1}) + \tau_t f^r_t(v_t)) = D_{\mathcal{R}}(v_t,\theta_t)-D_{\mathcal{R}}(v_t,\theta_{t+1}) + \tau_t f^r_t(v_t).
    \end{align*}
    We finally get
    \begin{align*}
        & M_t Z_t \tau_t(r + |H_t(\theta_t)| - \frac{\tau_t}{2}\|\nabla H_t(\theta_t)\|^2_*)  + L_t Z_t \tau_t(r - |H_t(\theta_t)| - \frac{\tau_t}{2}\|\nabla H_t(\theta_t)\|^2_*) \\
        & \leq D_{\mathcal{R}}(v_t,\theta_t)-D_{\mathcal{R}}(v_t,\theta_{t+1}) + \tau_t f^r_t(v_t). 
    \end{align*}
\end{proof}

\subsubsection{Separable Case}
Here, we analyze pseudolabel errors for the separable case, i.e., $\alpha^* = 0,$ where we can easily know that $f^r_t(v_t) = 0$ if $r\leq R.$ Before proving the theorem, we first present the following lemma.
\begin{lemma}\label{lem:PA1InstantaneousError}
    Take the same assumptions as Lemma~1. Let $\tau_t = f^r_t (\theta_t) / \| \nabla H_t(\theta_t)\|^2_*.$ Then for Algorithm~1, the following inequality holds
    \begin{align*}
       \frac{r}{2G^2}M_t Z_t (r + |H_t(\theta_t)| ) \leq D_{\mathcal{R}} (v_t,\theta_t) - D_{\mathcal{R}} (v_t,\theta_{t+1}),
    \end{align*}
    for $r\leq R.$
\end{lemma}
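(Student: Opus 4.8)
The plan is to specialize Lemma~\ref{lem:instantaneousError} to the separable case and the particular stepsize $\tau_t = f^r_t(\theta_t)/\|\nabla H_t(\theta_t)\|_*^2$, then throw away the nonnegative $L_t$ term and telescope. First I would observe that in the separable case with $r \leq R$ we have $f^r_t(v_t) = 0$: indeed, $f^r_t(v_t) = \max\{0, r - y_t H_t(v_t)\}$, and the separation Assumption~\ref{amp:separable} with $\alpha^* = 0$ forces $y_t H_t(v_t) \geq R \geq r$ almost surely, so the hinge vanishes. Hence the right-hand side of Lemma~\ref{lem:instantaneousError} collapses to $D_{\mathcal{R}}(v_t,\theta_t) - D_{\mathcal{R}}(v_t,\theta_{t+1})$.

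Next I would plug in the chosen $\tau_t$. With $\tau_t = f^r_t(\theta_t)/\|\nabla H_t(\theta_t)\|_*^2$, the generic term $\tau_t(r + |H_t(\theta_t)| - \tfrac{\tau_t}{2}\|\nabla H_t(\theta_t)\|_*^2)$ becomes $\tau_t(r + |H_t(\theta_t)|) - \tfrac{\tau_t^2}{2}\|\nabla H_t(\theta_t)\|_*^2 = \tau_t(r + |H_t(\theta_t)|) - \tfrac{1}{2}\tau_t f^r_t(\theta_t)$. On the event $M_t = 1$ we have $y_t H_t(\theta_t) \leq 0$, so $f^r_t(\theta_t) = r - y_t H_t(\theta_t) = r + |H_t(\theta_t)|$, giving $\tau_t = (r + |H_t(\theta_t)|)/\|\nabla H_t(\theta_t)\|_*^2$ and thus the aggressive term equals $\tfrac{1}{2}\tau_t(r + |H_t(\theta_t)|) = \tfrac{(r+|H_t(\theta_t)|)^2}{2\|\nabla H_t(\theta_t)\|_*^2}$. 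Using the smoothness Assumption~\ref{amp:boundedgradient} bound $\|\nabla H_t(\theta_t)\|_* \leq G$, this is at least $\tfrac{(r+|H_t(\theta_t)|)^2}{2G^2} \geq \tfrac{r(r+|H_t(\theta_t)|)}{2G^2}$, since $r + |H_t(\theta_t)| \geq r$. Dropping the nonnegative $L_t$ contribution on the left (its coefficient $r - |H_t(\theta_t)| - \tfrac{\tau_t}{2}\|\nabla H_t(\theta_t)\|_*^2$ need not be signed, but one checks that when $L_t = 1$, $f^r_t(\theta_t) = r - |H_t(\theta_t)| \geq 0$ so $\tfrac{\tau_t}{2}\|\nabla H_t(\theta_t)\|_*^2 = \tfrac{1}{2}f^r_t(\theta_t)$ and the $L_t$ term equals $\tfrac{1}{2}\tau_t(r - |H_t(\theta_t)|) \geq 0$) leaves exactly
$$
\frac{r}{2G^2} M_t Z_t (r + |H_t(\theta_t)|) \leq D_{\mathcal{R}}(v_t,\theta_t) - D_{\mathcal{R}}(v_t,\theta_{t+1}),
$$
which is the claim.

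The main subtlety I anticipate is bookkeeping around the degenerate cases so that replacing the general $\tau_t$ of Lemma~\ref{lem:firstInequation} by this specific choice is legitimate: one must confirm that when $f^r_t(\theta_t) = 0$ (so $\tau_t = 0$) or when $Z_t = 0$ the inequality is trivially $0 \leq 0$ with $\theta_{t+1} = \theta_t$, matching the convention $\tau_t = 0$ required in Lemma~\ref{lem:instantaneousError}, and that on $M_t Z_t = 1$ the identity $f^r_t(\theta_t) = r + |H_t(\theta_t)|$ is exactly what makes the quadratic-in-$\tau_t$ expression simplify cleanly. Everything else — the convexity of $-yH(\cdot)$, the strong convexity of $\mathcal{R}$, the Bregman three-point identity from Lemma~\ref{lem:BeckandTeboulle} — is already absorbed into Lemmas~\ref{lem:firstInequation} and~\ref{lem:instantaneousError}, so this step is essentially an algebraic specialization rather than a new argument. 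Summing over $t$ and using Assumption~\ref{amp:intial} together with the path-length bound on $\sum_t \|v_t - v_{t+1}\|$ will then yield the pseudolabel error bound of Lemma~\ref{thm:labelError}, but that final telescoping is deferred to the proof of the theorem itself.
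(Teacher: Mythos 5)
Your proposal is correct and follows essentially the same route as the paper's proof: specialize Lemma~\ref{lem:instantaneousError} with $f^r_t(v_t)=0$ (separability with $r\leq R$), substitute $\tau_t = f^r_t(\theta_t)/\|\nabla H_t(\theta_t)\|_*^2$ so that on $M_t Z_t=1$ the quadratic term collapses via $f^r_t(\theta_t)=r+|H_t(\theta_t)|$, drop the nonnegative $L_t$ contribution, and lower-bound using $\|\nabla H_t(\theta_t)\|_*\leq G$ together with $f^r_t(\theta_t)\geq r$ on mistakes. Your handling of the degenerate cases ($\tau_t=0$ or $Z_t=0$) and of the sign of the $L_t$ term matches the paper's argument, so no gaps.
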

\begin{proof}
    By the separability, we know $f^r_t(v_t) = 0, r\leq R.$ According to Lemma~\ref{lem:instantaneousError}, we have
    \begin{align*}
        & M_t Z_t \tau_t(r + |H_t(\theta_t)| - \frac{\tau_t}{2}\|\nabla H_t(\theta_t)\|^2_*)  + L_t Z_t \tau_t(r - |H_t(\theta_t)| - \frac{\tau_t}{2}\|\nabla H_t(\theta_t)\|^2_*) \\
        & \leq D_{\mathcal{R}}(v_t,\theta_t)-D_{\mathcal{R}}(v_t,\theta_{t+1}).
    \end{align*}
    By taking $\tau_t = f^r_t (\theta_t) / \| \nabla H_t(\theta_t)\|^2_*,$ we can obtain
    \begin{align*}
        & M_t Z_t \tau_t(r + |H_t(\theta_t)| - \frac{\tau_t}{2}\|\nabla H_t(\theta_t)\|^2_*)  + L_t Z_t \tau_t(r - |H_t(\theta_t)| - \frac{\tau_t}{2}\|\nabla H_t(\theta_t)\|^2_*)\\
        & = M_t Z_t \tau_t(r + |H_t(\theta_t)| - \frac{1}{2}(r + |H_t(\theta_t)|)  + L_t Z_t \tau_t(r - |H_t(\theta_t)| - \frac{1}{2}(r - |H_t(\theta_t)|) \\
        & = \frac{1}{2} M_t Z_t \tau_t(r + |H_t(\theta_t)|)  + \frac{1}{2} L_t Z_t(r - |H_t(\theta_t)|) \\
        & \geq \frac{1}{2} M_t Z_t \tau_t(r + |H_t(\theta_t)|).
    \end{align*}
    The last inequality comes from the definition of $L_t.$ Therefore
    \begin{align*}
        & \frac{1}{2} M_t Z_t \tau_t(r+ |H_t(\theta_t)|) \leq D_{\mathcal{R}}(v_t,\theta_t)-D_{\mathcal{R}}(v_t,\theta_{t+1}). 
    \end{align*}
    From Assumption~4, we know that 
    \begin{equation*}
        M_t\tau_t = M_t\frac{f^r_t(\theta_t)}{\|\nabla H_t(\theta_t)\|^2_*}\geq M_t\frac{f^r_t(\theta_t)}{G^2}\geq M_t\frac{r}{G^2},
    \end{equation*}
    we thus have
    \begin{align*}
       \frac{r}{2G^2}M_t Z_t (r + |H_t(\theta_t)| ) \leq D_{\mathcal{R}} (v_t,\theta_t) - D_{\mathcal{R}} (v_t,\theta_{t+1}).
    \end{align*}
\end{proof}
With the above Lemmas, we are now ready to proof the Lemma~1.
\begin{proof}[Proof of Lemma~1]

    First, by the condition $D_{\mathcal{R}} (x,z)-D_{\mathcal{R}}(y,z) \leq \gamma \|x-y\|, \forall x,y,z \in \mathcal{K},$ we have
    \begin{align*}
        \sum_{t=1}^T D_{\mathcal{R}} (v_t,\theta_t) - D_{\mathcal{R}} (v_t,\theta_{t+1}) & \leq D_{\mathcal{R}} (v_1,\theta_1) + \sum_{t=1}^{T-1} \left( D_{\mathcal{R}} (v_{t+1},\theta_{t+1}) - D_{\mathcal{R}} (v_t,\theta_{t+1})\right)\\
        & = \epsilon_v + \gamma  \sum_{t=1}^{T-1} \|v_{t+1}-v_t\| \\
        & = \epsilon_v +  \gamma C_T
    \end{align*}
    Then, by the definition of OSAMD algorithm and Lemma~\ref{lem:PA1InstantaneousError}, we have
    \begin{align*}
        \mathbb{E}[\sum_{t=1}^T M_t] & = \frac{1}{r} \mathbb{E}[\sum_{t=1}^T M_t Z_t (r + |H_t(\theta_t)| ) ] \\
        & = \frac{2G^2}{r^2} \mathbb{E}[\sum_{t=1}^T \frac{r}{2G^2}M_t Z_t (r + |H_t(\theta_t)| )]\\ 
        & \leq  \frac{2G^2}{r^2} \mathbb{E}[\sum_{t=1}^T D_{\mathcal{R}} (v_t,\theta_t) - D_{\mathcal{R}} (v_t,\theta_{t+1})]\\ 
        & \leq \frac{2G^2}{r^2} (\epsilon_v + \gamma C_T)\\
        & = \frac{2G^2}{\sigma^2} (\epsilon_v + \gamma C_T),
    \end{align*}
    where $r=\sigma$. We thus end the proof.
\end{proof}

\subsubsection{General Case}
Here, we provide the analysis for the pseudolabel errors of the general case, where we do not assume that the data distribution $P_t$ is $100\%$ separated within a margin. We first present the following lemma.
\begin{lemma}\label{lem:PA2InstantaneousError}
    Take the same assumptions as Lemma~1. Then for Algorithm~1, let $\tau_t = \min\{C,\frac{f^r_t(\theta_t)}{\| \nabla H_t(\theta_t)\|^2_*}\} $, the following inequality holds
    \begin{align*}
       & \min \{C,\frac{r}{G^2}\}\frac{1}{2}M_t Z_t (r + |H_t(\theta_t)| ) \leq D_{\mathcal{R}} (v_t,\theta_t) - D_{\mathcal{R}} (v_t,\theta_{t+1}) +  C f_t^r(v_t) ,
    \end{align*}
    for $r\leq R.$
\end{lemma}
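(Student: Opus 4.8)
The plan is to reduce Lemma~\ref{lem:PA2InstantaneousError} to the already-established Lemma~\ref{lem:instantaneousError}, following the template of the separable-case proof of Lemma~\ref{lem:PA1InstantaneousError} but carrying the two branches of the truncated stepsize $\tau_t = \min\{C,\ f^r_t(\theta_t)/\|\nabla H_t(\theta_t)\|_*^2\}$. The entry point is the inequality supplied by Lemma~\ref{lem:instantaneousError},
\[
M_t Z_t \tau_t\!\left(r + |H_t(\theta_t)| - \tfrac{\tau_t}{2}\|\nabla H_t(\theta_t)\|_*^2\right) + L_t Z_t \tau_t\!\left(r - |H_t(\theta_t)| - \tfrac{\tau_t}{2}\|\nabla H_t(\theta_t)\|_*^2\right) \le D_{\mathcal{R}}(v_t,\theta_t) - D_{\mathcal{R}}(v_t,\theta_{t+1}) + \tau_t f^r_t(v_t),
\]
which applies because the chosen $\tau_t$ satisfies $\tau_t = 0$ whenever $f^r_t(\theta_t) = 0$.

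The single structural fact I would record first is that both branches of the $\min$ behave alike: since $\tau_t \le f^r_t(\theta_t)/\|\nabla H_t(\theta_t)\|_*^2$, multiplying through by $\|\nabla H_t(\theta_t)\|_*^2 \ge 0$ gives $\tau_t\|\nabla H_t(\theta_t)\|_*^2 \le f^r_t(\theta_t)$, hence $\tfrac{\tau_t}{2}\|\nabla H_t(\theta_t)\|_*^2 \le \tfrac12 f^r_t(\theta_t)$ — so the ``completing the square'' step survives the truncation verbatim. On the event $\{M_t=1\}$ the pseudolabel is wrong, so $y_t H_t(\theta_t) \le 0$ and thus $f^r_t(\theta_t) = r - y_t H_t(\theta_t) = r + |H_t(\theta_t)|$; substituting bounds the $M_t$-term from below by $\tfrac12 M_t Z_t \tau_t(r + |H_t(\theta_t)|)$. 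On the event $\{L_t=1\}$ the decision is correct but inside the margin, so $0 \le y_t H_t(\theta_t) \le r$ and $f^r_t(\theta_t) = r - |H_t(\theta_t)| \ge 0$, which makes the $L_t$-term nonnegative and lets it be dropped. This yields $\tfrac12 M_t Z_t \tau_t(r + |H_t(\theta_t)|) \le D_{\mathcal{R}}(v_t,\theta_t) - D_{\mathcal{R}}(v_t,\theta_{t+1}) + \tau_t f^r_t(v_t)$.

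It then remains to replace the two occurrences of $\tau_t$ by clean constants. On the right, $\tau_t \le C$ and $f^r_t(v_t) = \max\{0, r - y_t H_t(v_t)\} \ge 0$ give $\tau_t f^r_t(v_t) \le C f^r_t(v_t)$. On the left I only need a lower bound on $\{M_t=1\}$: there $f^r_t(\theta_t) = r + |H_t(\theta_t)| \ge r$ and $\|\nabla H_t(\theta_t)\|_*^2 \le G^2$ by Assumption~\ref{amp:boundedgradient}, so $f^r_t(\theta_t)/\|\nabla H_t(\theta_t)\|_*^2 \ge r/G^2$ and hence $\tau_t \ge \min\{C,\ r/G^2\}$; since $M_t\in\{0,1\}$ this gives $M_t\tau_t \ge M_t\min\{C,\ r/G^2\}$. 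Combining the two substitutions produces the claimed inequality, valid for $r \le R$.

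I expect no genuinely new obstacle beyond what Lemma~\ref{lem:PA1InstantaneousError} already handles; the argument is essentially bookkeeping once Lemma~\ref{lem:instantaneousError} is in hand. The only points requiring care are (i) using the sign of $y_t H_t(\theta_t)$ correctly on the disjoint events $\{M_t=1\}$ and $\{L_t=1\}$ to rewrite $f^r_t(\theta_t)$ in terms of $|H_t(\theta_t)|$, and (ii) observing that the truncation $\tau_t = C$ only ever helps — it never pushes $\tfrac{\tau_t}{2}\|\nabla H_t(\theta_t)\|_*^2$ above $\tfrac12 f^r_t(\theta_t)$ — so the separable-case estimates go through without modification.
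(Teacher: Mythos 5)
Your proof is correct and follows essentially the same route as the paper: invoke Lemma~\ref{lem:instantaneousError}, use $\tau_t \le f^r_t(\theta_t)/\|\nabla H_t(\theta_t)\|_*^2$ to absorb the quadratic term and rewrite $f^r_t(\theta_t)$ on the events $\{M_t=1\}$ and $\{L_t=1\}$, drop the nonnegative $L_t$-term, and finish with $\tau_t \le C$ on the right-hand side and $M_t\tau_t \ge M_t\min\{C, r/G^2\}$ on the left. Your explicit lower bound $M_t\tau_t \ge M_t\min\{C, r/G^2\}$ is the correct direction here; the paper's corresponding display writes ``$\le$'' at that step, which is evidently a typo, and your version is the one the final inequality actually requires.
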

\begin{proof}
    First, according to Lemma~\ref{lem:instantaneousError}, we have
    \begin{align*}
        & M_t Z_t \tau_t(r + |H_t(\theta_t)| - \frac{\tau_t}{2}\|\nabla H_t(\theta_t)\|^2_* ) + L_t Z_t \tau_t(r - |H_t(\theta_t)| - \frac{\tau_t}{2}\|\nabla H_t(\theta_t)\|^2_*) \\
        &  \leq D_{\mathcal{R}}(v_t,\theta_t)-D_{\mathcal{R}}(v_t,\theta_{t+1}) + \tau_t f^r_t(v_t).
    \end{align*}
    Since we take 
    \begin{equation*}
        \tau_t =  \min\{C,\frac{f^r_t (\theta_t)}{\|\nabla H_t(\theta_t)\|^2_*}\}\leq  f^r_t (\theta_t)/ \|\nabla H_t(\theta_t)\|^2_*.
    \end{equation*}
    Similar to Lemma~\ref{lem:PA1InstantaneousError}, we have
    \begin{align*}
       & \frac{\tau_t}{2}M_t Z_t (r + |H_t(\theta_t)| ) \leq D_{\mathcal{R}} (v_t,\theta_t) - D_{\mathcal{R}} (v_t,\theta_{t+1}) +  \tau_t f_t^r(v_t).
    \end{align*}
    Since we know that
    \begin{align*}
        M_t\tau_t =  M_t\min\{C,\frac{f^r_t (\theta_t)}{\|\nabla H_t(\theta_t)\|^2_*}\}\leq  M_t\min \{{C},\frac{r}{G^2}\}.
    \end{align*}
    Further, by $\tau_t\leq C.$ We therefore have
    \begin{align*}
       & \min \{C,\frac{r}{G^2}\}\frac{1}{2}M_t Z_t (r + |H_t(\theta_t)| ) \leq D_{\mathcal{R}} (v_t,\theta_t) - D_{\mathcal{R}} (v_t,\theta_{t+1}) +  C f_t^r(v_t) .
    \end{align*}
\end{proof}

We are now ready to prove the Lemma~3.
\begin{proof}[Proof of Lemma~3]
    First by the proof of Lemma~1, we have
    \begin{align*}
        & \sum_{t=1}^T D_{\mathcal{R}} (v_t,\theta_t) - D_{\mathcal{R}} (v_t,\theta_{t+1})  \leq  \epsilon_v +  \gamma C_T.
    \end{align*}
    Then 
    \begin{align*}
        & \mathbb{E}[\sum_{t=1}^T M_t]  = \frac{1}{r} \mathbb{E}[\sum_{t=1}^T M_t Z_t (r + |H_t(\theta_t)| ) ] \\
        & = \frac{2}{r^2} \max\{\frac{r}{C},{G^2}\} \mathbb{E}[\sum_{t=1}^T \min \{C,\frac{r}{G^2}\}\frac{1}{2}M_t Z_t (r + |H_t(\theta_t)| )]\\ 
        & \leq \frac{2}{r^2} \max\{\frac{r}{C},{G^2}\} \mathbb{E}[\sum_{t=1}^T D_{\mathcal{R}} (v_t,\theta_t) - D_{\mathcal{R}} (v_t,\theta_{t+1})+ \sum_{t=1}^T C f_t^r(v_t)]\\ 
        & \leq \frac{2}{r^2} \max\{\frac{r}{C},{G^2}\}  (\epsilon_v + \gamma C_T + C \sum_{t=1}^T l_t^r(v_t)) \\
        & \leq \frac{2}{r^2} \max\{\frac{r}{C},{G^2}\}  (\epsilon_v + \gamma C_T + C T \alpha^*) \\
        & = \frac{2G^2}{\sigma^2} (\epsilon_v + \gamma C_T + \frac{\sigma}{G^2} T \alpha^*) ,
    \end{align*}
    where $r = \sigma$ and $C=\sigma/G^2$. The second inequality comes from $l_t^r(v_t) = \mathbb{E}[f_t^r(v_t)]$, and the last inequality comes from $l_t^r(v_t) \leq l_t^R(v_t) \leq \alpha^*.$ We thus end the proof.
\end{proof}

\subsection{Dynamic Regret Bound}
In this subsection, we begin to bound the dynamic regret. We will first provide necessary lemmas, and then use these lemmas to give the final proof.

We here give similar result as Lemma~\ref{lem:BeckandTeboulle} for property of the implicit gradient mirror descent.
\begin{lemma}\label{lem:ImplicitToBregman}
    Let $\mathcal{K}$ be a convex set in a Banach space $\mathcal{B},$ and regularizer $\mathcal{R}: \mathcal{K} \mapsto \mathbb{R}$ be a convex function, and let $D_{\mathcal{R}}(\cdot, \cdot)$ be the Bregman divergence induced by $\mathcal{R}$. Then, any update of the form for convex function $f$
    \begin{equation*}
        w^{*}=\underset{w \in \mathcal{K}}{\arg \min }\left\{f(w)+D_{\mathcal{R}}(w, c)\right\}
    \end{equation*}
    satisfies the following inequality
    \begin{equation*}
        \left\langle w^{*}-d, \nabla f(w^*)\right\rangle \leq D_{\mathcal{R}}(d, c)-D_{\mathcal{R}}\left(d, w^{*}\right)-D_{\mathcal{R}}\left(w^{*}, c\right)
    \end{equation*}
    for any $d \in \mathcal{K}.$
\end{lemma}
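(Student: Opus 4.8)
\textbf{Proof proposal for Lemma~\ref{lem:ImplicitToBregman}.}

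The plan is to mimic the standard proof of Lemma~\ref{lem:BeckandTeboulle} (the Beck--Teboulle inequality), replacing the linear term $\langle a, w\rangle$ by the convex function $f(w)$ and exploiting the first-order optimality condition at the minimizer. First I would write down the optimality condition for $w^* = \arg\min_{w\in\mathcal{K}}\{f(w)+D_{\mathcal{R}}(w,c)\}$: since the objective is convex and $w^*$ is its minimizer over the convex set $\mathcal{K}$, for every $d\in\mathcal{K}$ we have $\langle \nabla f(w^*) + \nabla_w D_{\mathcal{R}}(w^*,c), d - w^*\rangle \geq 0$. Using $\nabla_w D_{\mathcal{R}}(w^*,c) = \nabla\mathcal{R}(w^*) - \nabla\mathcal{R}(c)$, this rearranges to
\begin{equation*}
\langle \nabla f(w^*), w^* - d\rangle \leq \langle \nabla\mathcal{R}(w^*) - \nabla\mathcal{R}(c), d - w^*\rangle.
\end{equation*}

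Next I would recognize the right-hand side as a combination of Bregman divergences via the three-point (law of cosines) identity for $D_{\mathcal{R}}$: for any $d, w^*, c$,
\begin{equation*}
\langle \nabla\mathcal{R}(w^*) - \nabla\mathcal{R}(c), d - w^*\rangle = D_{\mathcal{R}}(d,c) - D_{\mathcal{R}}(d,w^*) - D_{\mathcal{R}}(w^*,c).
\end{equation*}
This identity follows directly from expanding each Bregman divergence using its definition $D_{\mathcal{R}}(a,b) = \mathcal{R}(a) - \mathcal{R}(b) - \langle\nabla\mathcal{R}(b), a-b\rangle$ and cancelling the $\mathcal{R}(\cdot)$ terms. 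Substituting this into the previous display yields exactly
\begin{equation*}
\langle w^* - d, \nabla f(w^*)\rangle \leq D_{\mathcal{R}}(d,c) - D_{\mathcal{R}}(d,w^*) - D_{\mathcal{R}}(w^*,c),
\end{equation*}
which is the claim.

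The main subtlety — the analogue of the "obstacle" — is handling the boundary of $\mathcal{K}$: the optimality condition is a variational inequality rather than the equality $\nabla f(w^*) + \nabla\mathcal{R}(w^*) - \nabla\mathcal{R}(c) = 0$, so I must be careful to keep it as an inequality in the correct direction throughout; fortunately the inequality points the right way for the conclusion. A second minor point is differentiability: I would assume $\mathcal{R}$ is differentiable on (the relative interior of) $\mathcal{K}$ and $f$ is differentiable, consistent with the smoothness assumptions (Assumption~\ref{amp:boundedgradient}) and the way Bregman divergence is defined in the paper, so that $\nabla f(w^*)$ and $\nabla\mathcal{R}(w^*)$ are well-defined. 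No strong convexity of $\mathcal{R}$ is needed here — only convexity of both $f$ and $\mathcal{R}$ — which matches the hypotheses of the statement.
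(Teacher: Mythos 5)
Your proposal is correct and follows essentially the same route as the paper's own proof: the variational (first-order optimality) inequality at $w^*$, the identification $\nabla_w D_{\mathcal{R}}(w^*,c)=\nabla\mathcal{R}(w^*)-\nabla\mathcal{R}(c)$, and the three-point identity obtained by expanding the Bregman divergences. Your remarks on keeping the optimality condition as an inequality and on differentiability are consistent with how the paper handles these points, so nothing is missing.
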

\begin{proof}
    By the convexity of $f$ and $\mathcal{R},$ it is easy to verify the convexity of $D_\mathcal{R}.$ Then $f(w)+D_{\mathcal{R}}(w, c)$ is convex, and by the optimality of $w^*$ and KKT condition (Theorem 2.2~\citep{hazan2016introduction}), we have
  \begin{equation*}
      \langle d - w^*, \nabla_{w^*}(f(w^*) + D_{\mathcal{R}} (w^*,c) )\rangle  \geq 0, \forall d \in \mathcal{K}.
  \end{equation*}
  By the definition of Bregman divergence, we can see that
  \begin{equation*}
      \langle d - w^*, \nabla f(w^*) +  \nabla R(w^*) - \nabla R(c) \rangle \geq 0, \forall d \in \mathcal{K}.
  \end{equation*}
  Thus we obtain
  \begin{equation*}
      \langle w^* - d, \nabla f(w^*)  \rangle \leq  \langle d - w^* , \nabla R(w^*) - \nabla R(c)  \rangle, \forall d \in \mathcal{K}. 
  \end{equation*}
  The rest is the same with Lemma~\ref{lem:BeckandTeboulle}. For completeness, we present the proof here. By the definition of Bergman divergence, we know that
  \begin{align*}
    & D_{\mathcal{R}}(d, c)-D_{\mathcal{R}}\left(d, w^{*}\right)-D_{\mathcal{R}}\left(w^{*}, c\right) \\
    & = R(d) - R(c) - \nabla R(c)(d-c)  - (R(d) - R(w^*) - \nabla R(w^*)(d-w^*)) \\&\quad - (R(w^*) - R(c) - \nabla R(c)(w^*-c))\\
    & = - \langle \nabla R(c), d \rangle + \langle \nabla R(w^*), d - w^*  \rangle  + \langle \nabla R(c), w^* \rangle\\
    & = \langle d - w^* , \nabla R(w^*) - \nabla R(c)  \rangle. 
  \end{align*}
  Therefore, we finally conclude for $\forall d\in \mathcal{K}$
  \begin{equation*}
        \left\langle w^{*}-d, \nabla f(w^*)\right\rangle \leq D_{\mathcal{R}}(d, c)-D_{\mathcal{R}}\left(d, w^{*}\right)-D_{\mathcal{R}}\left(w^{*}, c\right).
    \end{equation*}
\end{proof}

Usually, previous works handle the noisy gradient by the property of $\mathbb{E}[f_t(w_t)|w_t]=l_t(w_t)$. However, since $x_t$ and $w_t$ are mutually depended, we have $\mathbb{E}[f_t(w_t)|w_t]\neq l_t(w_t).$ Fortunately, using the linearity of expectation and by the law of total expectation, we wouldn't need to handle the conditional expectation directly, as the following Lemma.

\begin{lemma}[Restatement of Lemma~2]
  For algorithm~1. We have for $t= 1, \dots, T $
  \begin{equation}\label{eq:noiseGradient}
    \mathbb{E} [ l_t (w_t) - l_t(u_t)] \leq \mathbb{E} [\langle \nabla f_t(w_t), w_t - u_t \rangle] + \mathbb{E}[2(LD+G)\|w_t - \hat{w}_t\|].
  \end{equation}
\end{lemma}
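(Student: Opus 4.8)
The plan is to reduce the bias term in $\mathbb{E}[l_t(w_t) - l_t(u_t)]$ to a gradient inner product evaluated at $w_t$, paying an error that measures how far $w_t$ drifts from $\hat{w}_t$. The subtlety flagged just before the lemma is that $w_t$ and $(x_t,y_t)$ are statistically coupled — $w_t$ is obtained by the self-adaptation step that already uses $x_t$ (via the pseudolabel $\hat y_t = \mathrm{sign}(H_t(\theta_t))$ and the implicit update on $f(\cdot;x_t,\hat y_t)$) — so $\mathbb{E}[f(w_t;x_t,y_t)\mid w_t]\neq l_t(w_t)$, and one cannot simply invoke unbiasedness of the stochastic gradient. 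The fix is to introduce the "decoupled" quantity $\hat w_t$, which depends only on information available \emph{before} $x_t$ is revealed, and write $l_t(w_t) = \mathbb{E}_{(x,y)\sim P_t}[f(w_t;x,y)]$ with $w_t$ frozen.

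First I would bound the difference between the population loss at $w_t$ and the population loss at $\hat w_t$ using $L$-smoothness and the $G$-Lipschitz bound on $f$ from Assumption~\ref{amp:boundedgradient} together with the bounded diameter $D$ from Assumption~\ref{amp:boundeddecisionspace}: for a fixed $w$, $l_t$ is convex (it is an expectation of convex $f$) and $L$-smooth, so $l_t(w_t) - l_t(\hat w_t) \le \langle \nabla l_t(\hat w_t), w_t - \hat w_t\rangle + \tfrac{L}{2}\|w_t-\hat w_t\|^2 \le (G + \tfrac{LD}{2})\|w_t - \hat w_t\|$, and similarly $l_t(\hat w_t) - l_t(u_t) \le \langle \nabla l_t(\hat w_t), \hat w_t - u_t\rangle$ by convexity. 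The point of routing through $\hat w_t$ is that $\hat w_t$ is measurable with respect to the history strictly before round $t$'s sample, so conditioning on $\hat w_t$ does \emph{not} distort the law of $(x_t,y_t)$; hence $\mathbb{E}[\nabla f(\hat w_t; x_t,y_t)\mid \hat w_t] = \nabla l_t(\hat w_t)$ and, by the tower rule, $\mathbb{E}[\langle \nabla l_t(\hat w_t), \hat w_t - u_t\rangle] = \mathbb{E}[\langle \nabla f(\hat w_t;x_t,y_t), \hat w_t - u_t\rangle]$ (taking $u_t$ also to be pre-$x_t$-measurable, which it is since $u_1,\dots,u_T$ is a fixed comparator sequence such as $w_t^*$ or $v_t$).

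Next I would convert the $\hat w_t$-gradient inner product back to a $w_t$-gradient inner product. Write $\langle \nabla f(\hat w_t;x_t,y_t), \hat w_t - u_t\rangle = \langle \nabla f(w_t;x_t,y_t), w_t - u_t\rangle + \langle \nabla f(\hat w_t;x_t,y_t) - \nabla f(w_t;x_t,y_t), \hat w_t - u_t\rangle + \langle \nabla f(w_t;x_t,y_t), \hat w_t - w_t\rangle$. The second term is bounded by $L\|\hat w_t - w_t\|\cdot\|\hat w_t - u_t\| \le LD\|w_t - \hat w_t\|$ using $L$-smoothness of $f$ and the diameter bound; the third is bounded by $G\|w_t - \hat w_t\|$ using the gradient bound. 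Collecting the $\|w_t - \hat w_t\|$ terms — $(G+\tfrac{LD}{2})$ from the smoothing step plus $LD + G$ from the conversion — gives a coefficient bounded by $2(LD+G)$, which is the claimed bound.

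The main obstacle, and the step that requires care rather than routine calculation, is the measurability/decoupling argument: one must verify precisely that $\hat w_t$ and the comparator $u_t$ are determined by the randomness (active-query coin flips $Z_1,\dots,Z_{t-1}$) and data \emph{up to and including round $t-1$} but \emph{not} by $x_t$, so that the conditional expectation of the sampled gradient $\nabla f(\hat w_t;x_t,y_t)$ given that information equals the true gradient $\nabla l_t(\hat w_t)$. Inspecting Algorithm~\ref{algorithm:OAST}, $\hat w_{t}$ is updated at the end of round $t-1$ from $w_{t-1}, \hat w_{t-1}$ and $(x_{t-1},\tilde y_{t-1})$, so it is indeed pre-$x_t$; and $w_t$ is the one quantity that is post-$x_t$, which is exactly why it must be swapped out. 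Once that filtration bookkeeping is in place, everything else is an application of convexity, $L$-smoothness, Cauchy–Schwarz (Hölder with the dual norm), and the two boundedness assumptions, then taking total expectations.
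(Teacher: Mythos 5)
Your proof is correct, and it rests on the same central idea as the paper's: the coupling between $w_t$ and $(x_t,y_t)$ is handled by routing the argument through $\hat{w}_t$, which is determined before $x_t$ is revealed, and paying for each swap between $w_t$ and $\hat{w}_t$ a price of order $(LD+G)\|w_t-\hat{w}_t\|$ via smoothness, the gradient bound, and the diameter bound. The organization differs, though. The paper first applies convexity of $l_t$ at $w_t$, writes $\nabla l_t(w_t)$ as $\mathbb{E}[\nabla f_t(w_t)\mid w_t]$ plus a bias, and then decomposes that bias into three terms anchored at $\hat{w}_t$: a smoothness term $\nabla l_t(w_t)-\nabla l_t(\hat{w}_t)$, a martingale-type term $\nabla l_t(\hat{w}_t)-\nabla f_t(\hat{w}_t)$ whose projection onto $\hat{w}_t-u_t$ vanishes in expectation, and a second smoothness term $\nabla f_t(\hat{w}_t)-\nabla f_t(w_t)$, giving $LD+2G+LD=2(LD+G)$. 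You instead split at the level of function values, $l_t(w_t)-l_t(\hat{w}_t)$ plus $l_t(\hat{w}_t)-l_t(u_t)$, apply convexity at $\hat{w}_t$, use the decoupling to replace $\nabla l_t(\hat{w}_t)$ by the sampled gradient at $\hat{w}_t$, and then swap the inner product back to $w_t$ via the algebraic identity you state (which checks out); your accounting gives a coefficient $2G+\tfrac{3}{2}LD\le 2(LD+G)$, so the claimed inequality follows, with a marginally better constant. Both arguments need $u_t$ to be independent of the round-$t$ randomness --- you flag this explicitly, whereas the paper leaves it implicit --- and both hinge on exactly the filtration fact you verify, namely that $\hat{w}_t$ (unlike $w_t$) is built only from data through round $t-1$. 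The only small point to add is that your smoothness step for $l_t$ implicitly uses that $L$-smoothness of $f$ passes to $l_t=\mathbb{E}_{(x,y)\sim P_t}[f(\cdot;x,y)]$ by taking expectations (alternatively, convexity plus the $G$-Lipschitz bound already gives $l_t(w_t)-l_t(\hat{w}_t)\le G\|w_t-\hat{w}_t\|$, which suffices); this is cosmetic and does not affect correctness.
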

\begin{proof}
  First, from the above condition and by the convexity, we have
  \begin{align*}
              \mathbb{E}[l_t (w_t) - l_t (u_t) ]
              & \leq \mathbb{E}[\langle \nabla l_t (w_t), w_t - u_t \rangle]\\
              & = \mathbb{E}[\langle  \mathbb{E}[ \nabla f_t(w_t) | w_t], w_t - u_t \rangle] + \mathbb{E}[\langle  \nabla l_t (w_t) - \mathbb{E}[ \nabla  f_t(w_t) | w_t], w_t - u_t \rangle]\\
              & = \mathbb{E}[\mathbb{E}[\langle \nabla f_t(w_t) , w_t - u_t \rangle| w_t]] + \mathbb{E}[\langle  \nabla l_t (w_t) - \mathbb{E}[ \nabla  f_t(w_t) | w_t], w_t - u_t \rangle]\\
              & = \mathbb{E}[\langle \nabla f_t(w_t) , w_t - u_t \rangle] + \mathbb{E}[\langle  \nabla l_t (w_t) - \mathbb{E}[ \nabla  f_t(w_t) | w_t], w_t - u_t \rangle].
  \end{align*}
  Since $\hat{w}_t$ does not depend on $x_t$, we know that $ \nabla l_t (\hat{w}_t) = \mathbb{E}[ \nabla  f_t(\hat{w}_t) | \hat{w}_t].$ We then begin to estimate the second term, which can be decomposed as
  \begin{align*}
              \mathbb{E}[\langle  \nabla l_t (w_t) - \mathbb{E}[ \nabla  f_t(w_t) | w_t], w_t - u_t \rangle] & = \underbrace{\mathbb{E}[ \langle \nabla l_t(w_t) - \nabla l_t (\hat{w}_t), w_t - u_t \rangle ]}_{\text {term } \mathrm{A}} \\
              & \quad + \underbrace{\mathbb{E}[ \langle \mathbb{E}[\nabla f_t (\hat{w}_t)|\hat{w}_t]- \nabla f_t (\hat{w}_t), w_t - u_t \rangle ]}_{\text {term } \mathrm{B}} \\
              & \quad + \underbrace{\mathbb{E}[ \langle \nabla f_t (\hat{w}_t) - \mathbb{E}[\nabla f_t(w_t)|w_t], w_t - u_t \rangle ]}_{\text {term } \mathrm{C}}.
  \end{align*}
  We next bound each term step by step. First, by the assumption of $L$-smoothness (Assumption~4), we can bound the term A by
    \begin{align*}
    \text {term } \mathrm{A} & = \mathbb{E}[\langle  \nabla l_t (w_t) - \nabla l_t (\hat{w}_t), w_t - u_t \rangle]\\
                                          & \leq \mathbb{E}[\|\nabla l_t (w_t) - \nabla l_t (\hat{w}_t)\|\|w_t - u_t\|]\\
                                          & \leq \mathbb{E}[L\|w_t-\hat{w}_t\|\|w_t - u_t\|]\\
                                          & \leq \mathbb{E}[LD\|w_t-\hat{w}_t\|].
  \end{align*}
  The first inequality comes from H\"older inequality, and the last one is from the bounded space (Assumption~5). Second, by the law of total expectation, we have $\mathbb{E}[ \langle\mathbb{E}[\nabla f_t (\hat{w}_t)|\hat{w}_t], \hat{w}_t - u_t \rangle ] = \mathbb{E}[\langle\nabla f_t (\hat{w}_t), \hat{w}_t - u_t \rangle ]$, hence the term B can be bounded as
  \begin{align*}
  \text {term } \mathrm{B} & = \mathbb{E}[ \langle\mathbb{E}[\nabla f_t (\hat{w}_t)|\hat{w}_t] - \nabla f_t (\hat{w}_t), w_t - u_t \rangle ] \\
  & = \mathbb{E}[ \langle\mathbb{E}[\nabla f_t (\hat{w}_t)|\hat{w}_t] - \nabla f_t (\hat{w}_t), \hat{w}_t - u_t \rangle ] + \mathbb{E}[ \langle\nabla l_t (\hat{w}_t) - \nabla f_t (\hat{w}_t), w_t - \hat{w}_t \rangle ] \\
  & = \mathbb{E}[ \langle\mathbb{E}[\nabla f_t (\hat{w}_t)|\hat{w}_t], \hat{w}_t - u_t \rangle ] - \mathbb{E}[ \langle \nabla f_t (\hat{w}_t), \hat{w}_t - u_t \rangle ] \\
  & \quad + \mathbb{E}[ \langle\nabla l_t (\hat{w}_t) - \nabla f_t (\hat{w}_t), w_t - \hat{w}_t \rangle ] \\
  & \leq 0 + \mathbb{E}[\|\nabla l_t (\hat{w}_t) - \nabla f_t (\hat{w}_t)\|\|w_t - \hat{w}_t\|]\\
  & \leq 0 + \mathbb{E}[2G\|w_t - \hat{w}_t\|].
  \end{align*}
  The last inequality is from H\"older inequality and bounded gradient (Assumption~4). Finally, we have for term C
  \begin{align*}
  \text {term } \mathrm{C} & = \mathbb{E}[ \langle \nabla f_t (\hat{w}_t) - \mathbb{E}[\nabla f_t(w_t)|w_t], w_t - u_t \rangle ]\\
  &=\mathbb{E}[ \langle\nabla f_t (\hat{w}_t), w_t - u_t \rangle ] - \mathbb{E}[ \mathbb{E}[\nabla f_t(w_t)|w_t], w_t - u_t \rangle ] \\
  & = \mathbb{E}[ \langle\nabla f_t (\hat{w}_t), w_t - u_t \rangle ] - \mathbb{E}[ \mathbb{E}[\nabla f_t(w_t), w_t - u_t \rangle|w_t] ] \\
  & = \mathbb{E}[ \langle\nabla f_t (\hat{w}_t) - \nabla f_t (w_t) , w_t - u_t \rangle ] \\
  & \leq \mathbb{E}[ L\|w_t - u_t\| \|w_t - \hat{w}_t\| ]\\
  & \leq \mathbb{E}[ LD \|w_t - \hat{w}_t\| ].
  \end{align*}
  The last two inequality is from H\"older inequality and $L$-smoothness, and the last one is from bounded space (Assumption~5). By summing up, we get
  \begin{equation}
    \mathbb{E} [ l_t (w_t) - l_t(u_t)] \leq \mathbb{E} [\langle \nabla f_t(w_t), w_t - u_t \rangle] + \mathbb{E}[2(LD+G)\|w_t - \hat{w}_t\|].
  \end{equation}
\end{proof}

The continual domain shift (Assumption~1) and bounded function (Assumption~5) lead to the temporal variability condition in the online learning (as shown in Proposition~\ref{prop:TotalVariation}). It is not easy to analyze the dynamic regret (temporal variability form) directly, thus we first provide the path-length version as the following lemma.
\begin{lemma}\label{lem:pathlenghtbound}
    Under the same assumption as Lemma~1. If we choose $\eta \leq \frac{1}{4(LD+G)},$ Algorithm~1 has the following bound
  \begin{align*}
    & \mathbb{E} [\sum_{t=1}^{T}l_t(w_t)] - \sum_{t=1}^{T}l_t(u_t) \leq (2\eta G^2 + 2G D)  \mathbb{E} [ \sum_{t=1}^T  M_t  ] + \sum_{t=1}^{T-1} \frac{1}{\eta} \gamma \|u_{t+1}-u_t\| + \frac{1}{\eta}D_{\mathcal{R}}(u_{1}, \hat{w}_{1}),\\
  \end{align*}
  for all $u_1,\dots,u_T \in \mathcal{K}.$ 
\end{lemma}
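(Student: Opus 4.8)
The plan is to prove the stated path-length bound round by round and then sum. Fix the comparator sequence $u_1,\dots,u_T\in\mathcal{K}$. First I would invoke Lemma~\ref{lem:noiseGradient} to pass from the biased expected regret to $\mathbb{E}[l_t(w_t)-l_t(u_t)]\le \mathbb{E}[\langle\nabla f(w_t;x_t,y_t),w_t-u_t\rangle]+\mathbb{E}[2(LD+G)\|w_t-\hat w_t\|]$, the second summand being the price of $w_t$ depending on $x_t$. I then split the linear term exactly as in the proof sketch of Theorem~\ref{thm:ExpectedRegretNoiseGradient}: insert the explicit conservative iterate $\hat w_{t+1}$ between $w_t$ and $u_t$, and on the first segment replace the true label $y_t$ by the pseudolabel $\hat y_t$, on the second segment replace $y_t$ by the fed-back label $\tilde y_t$. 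This produces the four inner products $\mathrm A=\langle\nabla f(w_t;x_t,y_t)-\nabla f(w_t;x_t,\hat y_t),w_t-\hat w_{t+1}\rangle$, $\mathrm B=\langle\nabla f(w_t;x_t,\hat y_t),w_t-\hat w_{t+1}\rangle$, $\mathrm C=\langle\nabla f(w_t;x_t,\tilde y_t),\hat w_{t+1}-u_t\rangle$, and $\mathrm D=\langle\nabla f(w_t;x_t,y_t)-\nabla f(w_t;x_t,\tilde y_t),\hat w_{t+1}-u_t\rangle$.

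The heart of the argument is turning $\mathrm B$ and $\mathrm C$ into telescoping mirror-descent progress. Since $\hat w_{t+1}$ is the explicit mirror step from $\hat w_t$ with gradient $\nabla f(w_t;x_t,\tilde y_t)$, Lemma~\ref{lem:BeckandTeboulle} with $d=u_t$ gives $\mathrm C\le\frac{1}{\eta}\bigl(D_{\mathcal R}(u_t,\hat w_t)-D_{\mathcal R}(u_t,\hat w_{t+1})-D_{\mathcal R}(\hat w_{t+1},\hat w_t)\bigr)$; since $w_t$ is the implicit (proximal) step from $\hat w_t$ on $\eta f(\cdot;x_t,\hat y_t)$, the implicit analogue Lemma~\ref{lem:ImplicitToBregman} with $d=\hat w_{t+1}$ gives $\mathrm B\le\frac{1}{\eta}\bigl(D_{\mathcal R}(\hat w_{t+1},\hat w_t)-D_{\mathcal R}(\hat w_{t+1},w_t)-D_{\mathcal R}(w_t,\hat w_t)\bigr)$. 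Adding them, the $D_{\mathcal R}(\hat w_{t+1},\hat w_t)$ terms cancel, leaving the telescoping piece $\frac{1}{\eta}\bigl(D_{\mathcal R}(u_t,\hat w_t)-D_{\mathcal R}(u_t,\hat w_{t+1})\bigr)$ together with two spare negative Bregman terms $-\frac{1}{\eta}D_{\mathcal R}(\hat w_{t+1},w_t)$ and $-\frac{1}{\eta}D_{\mathcal R}(w_t,\hat w_t)$, each at least $\frac{1}{2\eta}$ times the corresponding squared distance by $1$-strong convexity of $\mathcal R$.

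For $\mathrm A$ and $\mathrm D$ I would use that $\|\nabla f(w;x,y)-\nabla f(w;x,y')\|_*\le 2G$ and that these gradients coincide when $y=y'$. Term $\mathrm D$ is nonzero only when $\tilde y_t\neq y_t$, which forces $M_t=1$, so $\mathrm D\le 2G\|\hat w_{t+1}-u_t\|\le 2GD\,M_t$ by the diameter bound. Term $\mathrm A$ is nonzero only when $\hat y_t\neq y_t$, i.e. $M_t=1$, so $\mathrm A\le 2G\|w_t-\hat w_{t+1}\|$, and pairing this with the spare term $-\frac{1}{2\eta}\|\hat w_{t+1}-w_t\|^2$ via Young's inequality yields $\mathrm A-\frac{1}{2\eta}\|\hat w_{t+1}-w_t\|^2\le 2\eta G^2 M_t$. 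The remaining gradient-bias term $2(LD+G)\|w_t-\hat w_t\|$ is then offset by the last spare term $-\frac{1}{2\eta}\|w_t-\hat w_t\|^2$; this is exactly where the hypothesis $\eta\le\frac{1}{4(LD+G)}$ (so $2(LD+G)\le\frac{1}{2\eta}$) enters, together with the a-priori bound $\|w_t-\hat w_t\|\le\eta\|\nabla f(w_t;x_t,\hat y_t)\|_*\le\eta G$ coming from the implicit update. Summing the per-round bound over $t=1,\dots,T$, the $M_t$ contributions add to $(2\eta G^2+2GD)\sum_t M_t$; for the telescoping piece I reindex it as $\frac{1}{\eta}D_{\mathcal R}(u_1,\hat w_1)+\frac{1}{\eta}\sum_{t=1}^{T-1}\bigl(D_{\mathcal R}(u_{t+1},\hat w_{t+1})-D_{\mathcal R}(u_t,\hat w_{t+1})\bigr)-\frac{1}{\eta}D_{\mathcal R}(u_T,\hat w_{T+1})$, drop the last nonpositive term, and apply the hypothesis $D_{\mathcal R}(x,z)-D_{\mathcal R}(y,z)\le\gamma\|x-y\|$ to each comparator-drift difference to get $\frac{\gamma}{\eta}\sum_{t=1}^{T-1}\|u_{t+1}-u_t\|$. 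Taking total expectation gives the claimed inequality.

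The step I expect to be the main obstacle is the accounting of the several negative Bregman/quadratic terms: one must check that exactly one spare negative term is spent absorbing $\mathrm A$ and the other absorbing the $2(LD+G)\|w_t-\hat w_t\|$ noisy-gradient bias, and that the stepsize restriction makes the latter absorption go through without leaving an uncontrolled per-round residual. The remaining points need only routine care: verifying that Lemma~\ref{lem:ImplicitToBregman} applies with $d=\hat w_{t+1}\in\mathcal{K}$, and that the reindexing of the telescoping sum yields precisely the $\gamma$-weighted path length of $u$ and the single initialization term $D_{\mathcal R}(u_1,\hat w_1)$ rather than a coarser bound.
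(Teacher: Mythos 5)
Your proposal follows the paper's proof of this lemma essentially step for step: start from Lemma~2, insert $\hat w_{t+1}$ and swap labels to get the same four terms $\mathrm A$--$\mathrm D$, bound $\mathrm C$ by Lemma~\ref{lem:BeckandTeboulle}, bound $\mathrm B$ by the implicit-update analogue (Lemma~\ref{lem:ImplicitToBregman} applied to $\eta f(\cdot;x_t,\hat y_t)$), bound $\mathrm A$ by $2G M_t\|w_t-\hat w_{t+1}\|$ plus Young so that the $\frac{1}{2\eta}\|w_t-\hat w_{t+1}\|^2$ cancels against the strong-convexity term from $\mathrm B$, bound $\mathrm D$ by $2GD\,M_t$ via the mistake indicator, and telescope $\frac1\eta\bigl(D_{\mathcal R}(u_t,\hat w_t)-D_{\mathcal R}(u_t,\hat w_{t+1})\bigr)$ using the $\gamma$-condition exactly as the paper does (including dropping the final nonpositive term). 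The one place where you go beyond the paper is the step you yourself flag as the main obstacle: absorbing the bias $2(LD+G)\|w_t-\hat w_t\|$ into the spare $-\frac{1}{2\eta}\|w_t-\hat w_t\|^2$. The paper simply drops this pair without comment, and your proposed justification does not actually make it nonpositive: with $z=\|w_t-\hat w_t\|$, the quantity $2(LD+G)z-\frac{1}{2\eta}z^2$ is nonpositive only when $z=0$ or $z\ge 4\eta(LD+G)$, and combining $2(LD+G)\le\frac{1}{2\eta}$ with the a-priori bound $z\le\eta G$ still leaves a positive residual of order $\eta G(LD+G)$ per round (up to $2\eta(LD+G)^2$ in general), which would accumulate to a term not present in the stated bound. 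So your instinct about where the delicate accounting lies is exactly right, but the patch you sketch does not close it; you have faithfully reproduced the paper's argument, including its least rigorous step, and an airtight version would need either a genuinely different treatment of the gradient-bias term or an explicit extra term of this order carried through the bound.
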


\begin{proof}
    
    Denote $\hat{f}_t(\cdot) = f(\cdot;x_t,\hat{y}_t),\tilde{f}_t(\cdot) = f(\cdot;x_t,\tilde{y}_t),{f}_t(\cdot) = f(\cdot;x_t,{y}_t)$ for simplicity. By Lemma~2, we have
  \begin{align*}
    & \mathbb{E} [\sum_{t=1}^{T}l_t(w_t)] - \sum_{t=1}^{T}l_t(u_t) = \mathbb{E} [\sum_{t=1}^{T}l_t(w_t)- \sum_{t=1}^{T}l_t(u_t)]\\
    & \leq \mathbb{E} [\langle \nabla f_t(w_t), w_t - u_t \rangle  + 2(LD+G)\|w_t - \hat{w}_t\|]\\
    & = \mathbb{E} [ \underbrace{\langle \nabla f_t(w_t) - \nabla \hat{f}_t(w_t), w_t - \hat{w}_{t+1}\rangle}_{\text {term } \mathrm{A}}  + \underbrace{\langle \nabla \hat{f}_t(w_t) , w_{t} - \hat{w}_{t+1} \rangle}_{\text {term } \mathrm{B}}  + \underbrace{\langle \nabla \tilde{f}_t(w_t), \hat{w}_{t+1} - u_t \rangle}_{\text {term } \mathrm{C}} \\
    & \quad + \underbrace{\langle \nabla {f}_t(w_t) - \nabla \tilde{f}_t(w_t), \hat{w}_{t+1} - u_t \rangle}_{\text {term } \mathrm{D}} + 2(LD+G)\|w_t - \hat{w}_t\|].
  \end{align*}
  We next bound each term step by step. First, we can bound term A in terms of the pseudolabel errors.
  \begin{align*}
      \text {term } \mathrm{A} &  = \langle \nabla f_t(w_t) - \nabla \hat{f}_t(w_t), w_t - \hat{w}_{t+1}\rangle \\
      & = M_t\langle \nabla f_t(w_t) - \nabla \hat{f}_t(w_t), w_t - \hat{w}_{t+1}\rangle\\
      & \leq M_t \|\nabla f_t(w_t) - \nabla \hat{f}_t(w_t)\|_* \|w_t - \hat{w}_{t+1}\|\\
      & \leq M_t 2G \|w_t - \hat{w}_{t+1}\| \\
      & \leq 2\eta M_t G^2 + \frac{1}{2\eta}\|w_t - \hat{w}_{t+1}\|^2.
  \end{align*}
  The first inequality holds due to H\"older inequality, and the last one holds due to the fact that $2ab\leq {2\eta} a^2 + \frac{1}{2\eta}b^2$ for $\eta >0$ and $M_t^2 =M_t.$ By Lemma~\ref{lem:ImplicitToBregman}, we could bound term B
  \begin{align*}
      \text {term } \mathrm{B} & = \langle \nabla \hat{f}_t(w_t) , w_{t} - \hat{w}_{t+1} \rangle \\
      & \leq \frac{1}{\eta} (D_{\mathcal{R}}(\hat{w}_{t+1}, \hat{w}_{t})-D_{\mathcal{R}}\left(\hat{w}_{t+1}, w_{t}\right)-D_{\mathcal{R}}\left(w_{t}, \hat{w}_{t}\right))\\
      & \leq \frac{1}{\eta} (D_{\mathcal{R}}(\hat{w}_{t+1}, \hat{w}_{t})-\frac{1}{2}\|w_t - \hat{w}_{t+1}\|^2-\frac{1}{2}\|w_t - \hat{w}_{t}\|^2).
  \end{align*}
  The last inequality is from the strongly convexity of regularizer $R.$ By Lemma~\ref{lem:BeckandTeboulle}, we next bound term C
  \begin{align*}
      & \text {term } \mathrm{C} = \langle \nabla \tilde{f}_t(w_t), \hat{w}_{t+1} - u_t \rangle \leq \frac{1}{\eta} (D_{\mathcal{R}}(u_t, \hat{w}_{t})-D_{\mathcal{R}}\left(u_t, \hat{w}_{t+1}\right)-D_{\mathcal{R}}\left(\hat{w}_{t+1}, \hat{w}_{t}\right) ) .
  \end{align*}
  From the Algorithm~1, we know that only when the pseudolabel makes mistake and the active agent does not query the label, $\tilde{y}\neq y.$ Similar to term A, we have the bound for term D
     \begin{align*}
      & \text {term } \mathrm{D} = \langle \nabla {f}_t(w_t) - \nabla \tilde{f}_t(w_t), \hat{w}_{t+1} - u_t \rangle \\
      & =  \langle  M_t(1-Z_t) (\nabla f_t(w_t) -  \nabla \hat{f}_t(w_t) ), \hat{w}_{t+1} - u_t \rangle\\
      & \leq  M_t(1-Z_t) \|\nabla f_t(w_t) -  \nabla \hat{f}_t(w_t)\|_* \|\hat{w}_{t+1} - u_t \| \\
      & \leq 2 M_t(1-Z_t) G D\\
      & \leq 2 M_t GD.
  \end{align*}
  
  The first inequality is from the H\"older inequality, and second inequality is from the Assumption~4 and Assumption~5. Also from the H\"older inequality, the last term can be bounded as $2(LD+G)\|w_t-\hat{w}_t\|\leq 2(LD+G)^2\eta + \frac{1}{2\eta}\|w_t-\hat{w}_t\|^2$. Finally, we have the path-length version of dynamic regret bound
  \begin{align*}
    & \mathbb{E} [\sum_{t=1}^{T}l_t(w_t)] - \sum_{t=1}^{T}l_t(u_t)  \\
    & \leq  \mathbb{E} [ \sum_{t=1}^T \text {term } \mathrm{A} + \text {term } \mathrm{B} + \text {term } \mathrm{C} + \text {term } \mathrm{D} + 2(LD+G)^2\eta + \frac{1}{2\eta}\|w_t-\hat{w}_t\|^2]\\
    & \leq \mathbb{E} [ \sum_{t=1}^T 2\eta M_t G^2 + \sum_{t=1}^T 2 M_t G D + \sum_{t=1}^T  \frac{1}{\eta} (D_{\mathcal{R}}(u_t, \hat{w}_{t})-D_{\mathcal{R}}\left(u_t, \hat{w}_{t+1}\right))] + 2(LD+G)^2\eta T\\
    & = (2\eta G^2 + 2G D)  \mathbb{E} [ \sum_{t=1}^T  M_t  ] + \mathbb{E} [ \sum_{t=1}^T  \frac{1}{\eta} (D_{\mathcal{R}}(u_t, \hat{w}_{t})-D_{\mathcal{R}}\left(u_t, \hat{w}_{t+1}\right))] + 2(LD+G)^2\eta T.
  \end{align*}
  By the condition $D_R (x,z)-D_R(y,z) \leq \gamma \|x-y\|, \forall x,y,z \in \mathcal{K},$ we can get
  \begin{align*}
    &\mathbb{E} [ \sum_{t=1}^T  \frac{1}{\eta} (D_{\mathcal{R}}(u_t, \hat{w}_{t})-D_{\mathcal{R}}\left(u_t, \hat{w}_{t+1}\right))] \\
    & \leq \mathbb{E} [ \sum_{t=1}^{T-1}  \frac{1}{\eta} (D_{\mathcal{R}}(u_{t+1}, \hat{w}_{t+1})-D_{\mathcal{R}}\left(u_t, \hat{w}_{t+1}\right))] + \frac{1}{\eta}D_{\mathcal{R}}(u_{1}, \hat{w}_{1}) \\
    & \leq \sum_{t=1}^{T-1} \frac{1}{\eta} \gamma \|u_{t+1}-u_t\| + \frac{1}{\eta}D_{\mathcal{R}}(u_{1}, \hat{w}_{1}).
  \end{align*}
  From the above, we thus have
  \begin{align*}
    & \mathbb{E} [\sum_{t=1}^{T}l_t(w_t)] - \sum_{t=1}^{T}l_t(u_t)  \leq (2\eta G^2 + 2G D)  \mathbb{E} [ \sum_{t=1}^T  M_t  ]+ 2(LD+G)^2\eta T + \frac{\epsilon_w}{\eta} + \sum_{t=1}^{T-1} \frac{1}{\eta} \gamma \|u_{t+1}-u_t\| .
  \end{align*}
\end{proof}

Next, we give a general version of our regret bound analysis, concluding both the separable case and the general case.
\begin{theorem}\label{thm:generalBound}
    Take the same assumptions as Lemma~1, Algorithm~1 has the following bound for $\eta \leq \frac{1}{4(LD+G)}$
    \begin{align*}
      &\text{\rm D-Regret}(\{P_t\},T) \leq (2\eta G^2 + 2G D)  \mathbb{E} [ \sum_{t=1}^T  M_t  ] + 2(LD+G)^2\eta + \frac{\epsilon_w + \gamma D}{\eta} T + 4\sqrt{\frac{\gamma D T F V_T }{\eta}}.
    \end{align*}
\end{theorem}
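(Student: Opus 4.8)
The plan is to combine the path-length regret bound of Lemma~\ref{lem:pathlenghtbound} with a blocking (``restarting'') argument in the spirit of~\citet{besbes2015non}, which trades the path-length of the per-round optima for the temporal variability of the losses; the latter is then controlled by the continual domain shift $V_T$ via Proposition~\ref{prop:TotalVariation}. The term $(2\eta G^2+2GD)\,\mathbb{E}[\sum_t M_t]$ in Lemma~\ref{lem:pathlenghtbound} is already in the desired form (it is bounded afterwards by Lemma~\ref{thm:labelError} or Lemma~\ref{thm:labelErrorGeneral}), so only the comparator-dependent terms need work. Concretely, I would fix a block length $\Delta\in\{1,\dots,T\}$ (chosen at the end), split $[T]$ into $K=\lceil T/\Delta\rceil$ consecutive blocks $\mathcal{T}_1,\dots,\mathcal{T}_K$ of size at most $\Delta$, and on block $\mathcal{T}_j$ use the \emph{block-optimal} comparator $u_t\equiv\bar w_j\in\arg\min_{w\in\mathcal{K}}\sum_{t\in\mathcal{T}_j}l_t(w)$ (which lies in $\mathcal{K}$ since each $l_t$ is convex by Assumption~\ref{amp:convexity} and $\mathcal{K}$ is convex). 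Then $u_t$ jumps only at the $K-1$ block boundaries, each jump at most the diameter $D$, so $\sum_{t=1}^{T-1}\|u_{t+1}-u_t\|\le (K-1)D\le TD/\Delta$, and $D_{\mathcal{R}}(u_1,\hat w_1)\le\gamma\|u_1-\hat w_1\|\le\gamma D\le\epsilon_w+\gamma D$ using the growth condition on $D_{\mathcal{R}}$ and $D_{\mathcal{R}}(\hat w_1,\hat w_1)=0$. Plugging this into Lemma~\ref{lem:pathlenghtbound} yields
\begin{align*}
  \mathbb{E}\Big[\sum_{t=1}^{T}l_t(w_t)\Big]-\sum_{t=1}^{T}l_t(u_t)\ \le\ (2\eta G^2+2GD)\,\mathbb{E}\Big[\sum_{t=1}^T M_t\Big]+\frac{\gamma D T}{\eta\,\Delta}+\frac{\epsilon_w+\gamma D}{\eta}.
\end{align*}

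It then remains to bound the purely deterministic comparator gap $\sum_{t=1}^T l_t(u_t)-\sum_{t=1}^T l_t(w_t^*)$ block by block. Writing $V_{\mathcal{T}_j}$ for the temporal variation inside block $j$, i.e.\ $V_{\mathcal{T}_j}=\sum_{t,\,t+1\in\mathcal{T}_j}\sup_{w\in\mathcal{K}}|l_t(w)-l_{t+1}(w)|$, the triangle inequality gives $\sup_w|l_t(w)-l_s(w)|\le V_{\mathcal{T}_j}$ for all $s,t\in\mathcal{T}_j$. Since $\bar w_j$ minimizes the block sum, for every $s\in\mathcal{T}_j$ one has $\tfrac1{|\mathcal{T}_j|}\sum_{t\in\mathcal{T}_j}l_t(\bar w_j)\le\tfrac1{|\mathcal{T}_j|}\sum_{t\in\mathcal{T}_j}l_t(w_s^*)\le l_s(w_s^*)+V_{\mathcal{T}_j}$; minimizing the right side over $s$ and using $\sum_{t\in\mathcal{T}_j}l_t(w_t^*)\ge|\mathcal{T}_j|\min_{s\in\mathcal{T}_j}l_s(w_s^*)$ gives $\sum_{t\in\mathcal{T}_j}\big(l_t(\bar w_j)-l_t(w_t^*)\big)\le|\mathcal{T}_j|\,V_{\mathcal{T}_j}\le\Delta\,V_{\mathcal{T}_j}$. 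Summing over $j$ and noting that each consecutive difference $\sup_w|l_t(w)-l_{t+1}(w)|$ occurs in at most one block, so $\sum_j V_{\mathcal{T}_j}\le\sum_{t=1}^{T-1}\sup_w|l_t(w)-l_{t+1}(w)|\le 2FV_T$ by Proposition~\ref{prop:TotalVariation}, I obtain $\sum_{t=1}^T l_t(u_t)-\sum_{t=1}^T l_t(w_t^*)\le 2FV_T\Delta$.

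Adding the two estimates gives
\begin{align*}
  \text{\rm D-Regret}(\{P_t\},T)\ \le\ (2\eta G^2+2GD)\,\mathbb{E}\Big[\sum_{t=1}^T M_t\Big]+\frac{\epsilon_w+\gamma D}{\eta}+\frac{\gamma D T}{\eta\,\Delta}+2FV_T\Delta,
\end{align*}
and the last two terms are balanced by $\Delta\asymp\sqrt{\gamma D T/(\eta F V_T)}$. Taking the nearest feasible integer in $\{1,\dots,T\}$ and using $\lceil x\rceil-1\le x$ for the boundary count, the sum of these two terms is at most $4\sqrt{\gamma D T F V_T/\eta}$, which is exactly the claimed bound.

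The main obstacle is the comparator-gap step: converting the gap between the \emph{block}-optimal comparators and the sequence of \emph{per-round} optima $\{w_t^*\}$ into within-block temporal variation. This requires picking the right reference index $s$ in each block so that the optimality of $\bar w_j$ for the block \emph{sum} (not for any single round) can be exploited, and careful bookkeeping so the per-block variations are not double-counted and sum to at most the global temporal variability $2FV_T$. Everything else — the substitution into Lemma~\ref{lem:pathlenghtbound} and the AM--GM choice of block size $\Delta$ — is routine.
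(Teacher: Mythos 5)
Your proof follows essentially the same route as the paper's: the paper's own (sketched) argument also plugs a piecewise-stationary, block-optimal comparator sequence into Lemma~\ref{lem:pathlenghtbound}, bounds the path-length term by the number of blocks times the diameter, converts the comparator gap to within-block temporal variation controlled by $2FV_T$ via Proposition~\ref{prop:TotalVariation}, and balances the block length $\Delta$. Your within-block comparison via the block-sum optimality of $\bar w_j$ and the handling of the initialization term are correct, so the proposal matches the paper's proof in substance.
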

\begin{proof}[Proof Sketch]
    This proof shares the same technique with~\citet{zhang2020simple}, which introduces detailed proof for converting path-length bound to temporal variability bound. The key of this converting is to specify a sequence of $\left\{u_{1}, \ldots, u_{T}\right\}$ in the following way.
    \begin{align*}
        \left\{u_{1}, \ldots, u_{T}\right\} = \left\{w_{1}^*,\underbrace{ w_{\mathcal{I}_{2}}^{\star}, \ldots, w_{\mathcal{I}_{2}}^{\star}}_{\Delta \text{ times}}, \underbrace{w_{\mathcal{I}_{3}}^{\star},\ldots, w_{\mathcal{I}_{3}}^{\star}}_{\Delta \text{ times}},\ldots,\underbrace{w_{\mathcal{I}_{\lceil T-1 / \Delta \rceil + 1}}^{\star}, \ldots,w_{\mathcal{I}_{\lceil T-1 / \Delta \rceil + 1}}^{\star}}_{\Delta \text{ times}}\right\}.
    \end{align*}
    This piece-wise stationary sequence starts with $w_1^*$ and next changes every $\Delta \in[T]$ iterations. We specify $u_{t}$ as the best fixed decision $w_{\mathcal{I}_{i}}^{\star}=\arg \min _{w \in \mathcal{K}} \sum_{t \in \mathcal{I}_{i}} l_{t}(w)$ of the corresponding interval $\mathcal{I}_{i} .$ The rest is same as the proof of Lemma 2 in \citep{zhang2020simple}.
\end{proof}

Within Theorem~\ref{thm:generalBound}, it is simple to bound both the separable and the general (non-separable) cases.
\begin{proof}[Proof of Theorem~1]
    By the result of Lemma~1 we know that
    \begin{align*}
        \mathbb{E}[\sum_{t=1}^T M_t] \leq  \frac{2G^2}{\sigma^2} (\gamma C_T + \epsilon_v).
    \end{align*}
    Plugging in Theorem~\ref{thm:generalBound}, we then have
    \begin{align*}
      \text{\rm D-Regret}(\{P_t\},T) & \leq (2\eta G^2 + 2G D)  \mathbb{E} [ \sum_{t=1}^T  M_t  ]+ 2(LD+G)^2\eta T + \frac{\epsilon_w + \gamma D}{\eta} + 4\sqrt{\frac{\gamma D T F V_T }{\eta}}\\
        & \leq \frac{4(\eta G^4 + G^3 D)}{\sigma^2} (\gamma C_T + \epsilon_v) + 2(LD+G)^2\eta T + \frac{\epsilon_w + \gamma D}{\eta} + 4\sqrt{\frac{\gamma D T F V_T }{\eta}}.
    \end{align*}
\end{proof}
Similarly, we can generate it to the separable case.
\begin{proof}[Proof of Theorem~4]
    By the result of Lemma~3, we know that
    \begin{align*}
        &\mathbb{E}[\sum_{t=1}^T M_t]  \leq \frac{2G^2}{\sigma^2} ( \gamma C_T +\epsilon_v + \frac{\sigma}{G^2} T \alpha^*).
    \end{align*}
    Plugging in Theorem~\ref{thm:generalBound}, we then have
    \begin{align*}
      & \text{\rm D-Regret}(\{P_t\},T) \\
      &\leq (2\eta G^2 + 2G D)  \mathbb{E} [ \sum_{t=1}^T  M_t  ] + 2(LD+G)^2\eta T + \frac{\epsilon_w + \gamma D}{\eta} + 4\sqrt{\frac{\gamma D T F V_T }{\eta}}\\
      & \leq \frac{4(\eta G^4 + G^3 D)}{\sigma^2} (\gamma C_T + \epsilon_v + \frac{\sigma}{G^2} T \alpha^* ) + 2(LD+G)^2\eta T + \frac{\epsilon_w + \gamma D}{\eta} + 4\sqrt{\frac{\gamma D T F V_T }{\eta}}.
    \end{align*}
\end{proof}

\subsection{Lower bound}
Here, we show the lower bound for Theorem~2.
\begin{proof}[Proof of Theorem~2] We here create an example for the worst case that satisfies our assumptions.
    
    Assume we have two data points $(-1,0)$ and $(1,0)$ with the same probability $1/2$ to be sampled. Then we let $(-1,0)$ to be class $1$ and $(1,0)$ to be class $-1$ when $t = 1,\dots,\frac{T}{2},$ and let $(-1,0)$ to be class $-1$ and $(1,0)$ to be class $1$ when $t = \frac{T}{2} + 1, \dots, T.$ We use the hinge loss $l_t = \max\{0,1-y_t w_t x_t\},$ and the decision space is $\{w | \|w\|_2 \leq 1\}.$ It is easy to verify that this setting satisfies our assumptions where $V_T\leq 2, C_T\leq 2.$
    
    For any unsupervised self-training algorithm that begins with a good initial $w = (-1,0)$, it is impossible to get the information for the label change in hindsight. Then the learner takes no update when $t = \frac{T}{2} + 1, \dots, T,$ and therefore suffers from $T/2$ regret, which is an order of $T.$

\end{proof}

\section{Extension to Multiclass}\label{supp:multiclass}
In this section, we extend the results to the multiclass case.
\subsection{Multiclass setting}
Denote $\mathcal{Y}$ to be the class set. The multiclass setting is slightly different from the binary setting, and we present the important formulations and assumptions of multiclass case as follows.

Denote the soft prediction over instance $x$ as $H(\theta;x),$ which outputs $|\mathcal{Y}|$ prediction scores:
\begin{equation*}
    H^{s}(\theta;x),s\in \mathcal{Y}.
\end{equation*} 
Denote $H(\theta;x_t)=H_t(\theta)$ for simplicity. In each round $t,$ the margin function is defined as 
\begin{equation*}
    \Psi_t(\theta) \defeq H_t^{y_t}(\theta) - \max_{s_t\neq y_t, s_t\in \mathcal{Y}}H_t^{s_t}(\theta),
\end{equation*}
which is the gap between the prediction score of the real class and the irrelevant class with the highest score. Assumption~2 is modified as 
\begin{assumption}[Multiclass Separation]\label{amp:separablemulticlass}
    There exists $\alpha^* > 0$ such that for each time step $t=1,\ldots,T$, the data distribution $P_t$ can be classified almost correctly with a margin $R$, i.e., there exists $v_t \in \mathcal{K}$ and a constant $\alpha^*$ such that
    \begin{equation*}
        \mathbb{E}_{(x_t,y_t)\sim P_t} [\max\{0,R - \Psi_t(v_t)\}] \leq \alpha^*.
    \end{equation*}
    We further assume that there exists a constant $C_T$ such that
    \begin{equation*}\label{asp:TotalVariationMulti}
        \sum_{t=1}^{T-1}\|v_t-v_{t+1}\| \leq C_T, 
    \end{equation*}
    i.e., the classifiers with margin $R$ change continually.
\end{assumption}
We further assume the convexity and bounded gradient on the margin function.
\begin{assumption}[Margin Function]\label{amp:convexitymulticlass}
    We assume that $-\Psi_t(\cdot)$ is convex, and $\|\nabla\Psi_t(\theta_t)\|_*\leq G.$
\end{assumption}
Others are the same as the binary class case.

\subsection{Multiclass OSAMD}
\begin{algorithm}[tb]
   \caption{Multiclass Online Self Adaptive Mirror Descent (MOSAMD)}
   \label{algorithm:MOAST}
\begin{algorithmic}
\footnotesize
   \STATE {\bfseries Input:} Active probability controller $\sigma,$ aggressive step size $\tau_t,$ conservative step size $\eta,$ initial data.
   \STATE {\bfseries Initial:} Learn from initial data, get aggressive model $\theta_1$ and conservative model $\hat{w}_1 $.
   \FOR {$t = 1,\dots, T$ }
   \STATE observe data sample $x_t$ 
   \STATE \textbf{pseudolabel:} 
   \STATE \quad give the pseudolabel provided by the aggressive model $\hat{y}_t = \arg\max_{s_t\in \mathcal{Y}}H_t^{s_t}(\theta_t)$
   \STATE \textbf{self-adaptation:} 
   \STATE \quad adapt the conservative model $w_t = \arg\min_{w\in \mathcal{K}} \eta f(w;x_t,\hat{y}_t) + D_{\mathcal{R}} (w,\hat{w}_{t})$ before making the decision
   \STATE \textbf{active query:} 
   \STATE \quad compute the confidence score $p_t = H_t^{\hat{y}_t}(\theta_t) - \max_{s_t\neq\hat{y}_t, s_t\in \mathcal{Y}}H_t^{s_t}(\theta_t)$
   \STATE \quad draw a Bernoulli random variable with probability $Z_t\sim Bernoulli(\sigma / (\sigma + p_t))$
   \IF {$Z_t = 1$}
   \STATE query label $y_t,$ compute the margin $\Psi_t(\theta_t) = H_t^{y_t}(\theta_t) - \max_{s_t\neq y_t, s_t\in \mathcal{Y}}H_t^{s_t}(\theta_t),$ and let $\tilde{y}_t = {y}_t$
   \STATE update the aggressive model $\theta_{t+1} = \arg\min_{\theta\in \mathcal{K}} - \tau_t \langle  \nabla \Psi_t(\theta_t), \theta\rangle + D_{\mathcal{R}} (\theta,\theta_t)$
   \ELSE 
   \STATE let $\theta_{t+1} = \theta_{t}$ and $\tilde{y}_t = \hat{y}_t$
   \ENDIF
   \STATE update the conservative model $\hat{w}_{t+1} = \arg\min_{w\in \mathcal{K}} \eta \langle \nabla f({w}_{t};x_t, \tilde{y}_t), w\rangle + D_{\mathcal{R}} (w,\hat{w}_{t})$ 
   \ENDFOR
\end{algorithmic}
\end{algorithm}
We here present the Multiclass OSAMD (MOSAMD) as Algorithm~\ref{algorithm:MOAST}. Specifically, we modify three areas in the binary OSAMD: 
\begin{enumerate}[\hspace{0em}1.]
    \item Pseudolabel is given by the class with the largest prediction scores $\hat{y}_t = \max_{s_t\in \mathcal{Y}}H_t^{s_t}(\theta_t)$;
    \item The uncertainty is measured by the difference between the largest and second largest prediction scores $p_t = H_t^{\hat{y}_t}(\theta_t) - \max_{s_t\neq\hat{y}_t, s_t\in \mathcal{Y}}H_t^{s_t}(\theta_t),$ which is designed to compute the query rate;
    \item The margin is defined by the gap between the prediction score of the real class and the irrelevant class with the highest score $\Psi_t(\theta) = H_t^{y_t}(\theta) - \max_{s_t\neq y_t, s_t\in \mathcal{Y}}H_t^{s_t}(\theta),$ based on which the pseudolabel (aggressive) model updates.
\end{enumerate}

\subsection{Analysis}
In this subsection, we analyze the theoretical performance of MOSAMD in the general case, which can be reduced to the separable case by setting $C =\infty$. We first begin with the pseudolabel errors bound, and then present the dynamic regret bound.
\subsubsection{Pseudolabel Errors}
Here, we present the theoretical bound of pseudolabel errors for the multiclass case.
\begin{lemma}[Pseudolabel Errors]\label{thm:labelErrorGeneralMulticlass}
  Let regularizer $\mathcal{R}: \mathcal{K} \mapsto \mathbb{R}$ be a 1-strongly convex function on $\mathcal{K}$ with respect to a norm $\|\cdot\|.$ Assume that $D_{\mathcal{R}}(\cdot, \cdot)$ satisfies $D_R (x,z)-D_R(y,z) \leq \gamma \|x-y\|, \forall x,y,z \in \mathcal{K}.$ Set 
  $
      \tau_t =  \min\{C,\frac{\max\{0,\sigma-\Psi_t(\theta_t)\}}{\| \nabla \Psi_t(\theta_t)\|^2_*}\},\sigma \leq R.
  $
  The expected number of pseudolabel errors made by Algorithm~\ref{algorithm:MOAST} is bounded by
  \begin{align*}
        & \mathbb{E}[\sum_{t=1}^T M_t]  \leq \frac{2G^2}{\sigma^2} (\gamma C_T + \epsilon_v + \frac{\sigma}{G^2} T \alpha^* ).
    \end{align*}
   where $M_t = \mathbf{1}_{\hat{y}_t \neq y_t}$ is the instantaneous mistake indicator.
\end{lemma}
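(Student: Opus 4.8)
The plan is to mirror the argument behind Lemma~\ref{thm:labelError} and Lemma~\ref{thm:labelErrorGeneral}, with the binary margin $y_tH_t(\theta_t)$ replaced by the multiclass margin $\Psi_t(\theta_t)$ and the absolute confidence $|H_t(\theta_t)|$ replaced by the prediction-side confidence $p_t = H_t^{\hat y_t}(\theta_t) - \max_{s\neq\hat y_t}H_t^s(\theta_t)\ge 0$. Write $\psi^r_t(\theta) \defeq \max\{0,\,r - \Psi_t(\theta)\}$ for the multiclass hinge loss on the margin, so that the aggressive stepsize in Algorithm~\ref{algorithm:MOAST} reads $\tau_t = \min\{C,\, \psi^\sigma_t(\theta_t)/\|\nabla\Psi_t(\theta_t)\|_*^2\}$; as in the proof of Lemma~\ref{thm:labelErrorGeneral} we will take $C = \sigma/G^2$ at the end to recover the stated constant.

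First I would establish the per-round recurrence
\[
  \tau_t r - \tau_t \Psi_t(\theta_t) - \tfrac{\tau_t^2}{2}\|\nabla\Psi_t(\theta_t)\|_*^2 \le D_{\mathcal{R}}(v_t,\theta_t) - D_{\mathcal{R}}(v_t,\theta_{t+1}) + \tau_t \psi^r_t(v_t), \qquad r>0,
\]
the exact analogue of Lemma~\ref{lem:firstInequation}: starting from $r - \psi^r_t(v_t)\le\Psi_t(v_t)$, use convexity of $-\Psi_t$ (Assumption~\ref{amp:convexitymulticlass}) to write $\Psi_t(v_t) - \Psi_t(\theta_t) \le \langle -\nabla\Psi_t(\theta_t),\,\theta_t - v_t\rangle$, split $\theta_t - v_t = (\theta_{t+1}-v_t) + (\theta_t - \theta_{t+1})$, bound the first piece by Lemma~\ref{lem:BeckandTeboulle} applied to the update $\theta_{t+1} = \arg\min_\theta -\tau_t\langle\nabla\Psi_t(\theta_t),\theta\rangle + D_{\mathcal{R}}(\theta,\theta_t)$ and the second by H\"older plus $ab\le\tfrac{\tau_t}{2}a^2 + \tfrac{1}{2\tau_t}b^2$, then absorb $\|\theta_{t+1}-\theta_t\|^2$ via $1$-strong convexity of $\mathcal{R}$. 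Next I would record two elementary facts about the argmax: on a mistake ($\hat y_t\neq y_t$) the maximum of $H_t^s(\theta_t)$ over $s\neq y_t$ is attained at $\hat y_t$, so $\Psi_t(\theta_t) = H_t^{y_t}(\theta_t) - H_t^{\hat y_t}(\theta_t) \le -p_t \le 0$ (because $y_t$ competes in the $\max$ defining $p_t$); on a correct round $\Psi_t(\theta_t) = p_t \ge 0$. Feeding these into the $L_t$/$M_t$ bookkeeping of Lemma~\ref{lem:instantaneousError} (using $\tau_t=0$ whenever $Z_t=0$ or $\Psi_t(\theta_t)\ge r$, so only ``mistake'' and ``in-margin'' rounds contribute) and then arguing as in Lemma~\ref{lem:PA2InstantaneousError} --- first cancel half the quadratic via $\tfrac{\tau_t}{2}\|\nabla\Psi_t(\theta_t)\|_*^2 \le \tfrac12\psi^r_t(\theta_t) = \tfrac12(r - \Psi_t(\theta_t))$, and only then replace $r - \Psi_t(\theta_t)$ by the smaller quantity $r + p_t$ via $-\Psi_t(\theta_t)\ge p_t$, dropping the nonnegative in-margin contribution --- would yield
\[
  \min\{C,\, r/G^2\}\, \tfrac12 M_t Z_t (r + p_t) \le D_{\mathcal{R}}(v_t,\theta_t) - D_{\mathcal{R}}(v_t,\theta_{t+1}) + C \psi^r_t(v_t),
\]
where $M_t\tau_t\ge\min\{C,r/G^2\}M_t$ since on a mistake $\psi^r_t(\theta_t)\ge r$ and $\|\nabla\Psi_t(\theta_t)\|_*\le G$, and $\tau_t\psi^r_t(v_t)\le C\psi^r_t(v_t)$.

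Finally I would sum over $t=1,\dots,T$, telescope the Bregman differences exactly as in the proof of Lemma~\ref{thm:labelError} --- using $D_{\mathcal{R}}(x,z)-D_{\mathcal{R}}(y,z)\le\gamma\|x-y\|$ with Assumptions~\ref{amp:intial} and \ref{amp:separablemulticlass} --- to get $\sum_t\big(D_{\mathcal{R}}(v_t,\theta_t)-D_{\mathcal{R}}(v_t,\theta_{t+1})\big)\le \epsilon_v + \gamma C_T$, and bound $\mathbb{E}[\psi^\sigma_t(v_t)] \le \mathbb{E}[\max\{0,R-\Psi_t(v_t)\}] \le \alpha^*$ using $\sigma\le R$ and Assumption~\ref{amp:separablemulticlass}. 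Taking expectations and invoking the query rule $Z_t\sim\mathrm{Bernoulli}(\sigma/(\sigma+p_t))$ --- where the crucial point is that $p_t$ is a function of $(\theta_t,x_t)$ only, so $Z_t$ is conditionally independent of $y_t$ given $(\theta_t,x_t)$ and hence $\mathbb{E}[M_tZ_t(\sigma+p_t)] = \sigma\,\mathbb{E}[M_t]$ with $r=\sigma$ --- then setting $C=\sigma/G^2$ converts the summed inequality into $\tfrac{\sigma^2}{2G^2}\,\mathbb{E}[\sum_t M_t]\le \epsilon_v + \gamma C_T + \tfrac{\sigma}{G^2}T\alpha^*$, which rearranges to the claim. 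I expect the main obstacle to be the mismatch between $p_t$ (the prediction-side margin, which drives the query probability) and $\Psi_t(\theta_t)$ (the true-label margin, which appears in the mirror-descent recurrence): unlike the binary case one only has the one-sided inequality $-\Psi_t(\theta_t)\ge p_t$ on mistakes, so the estimates must be ordered so that this inequality is invoked only \emph{after} the ``$-\tfrac{\tau_t}{2}\|\nabla\Psi_t(\theta_t)\|_*^2$'' cancellation, and one must carefully verify that the query variable is independent of the label so the Bernoulli averaging still produces the clean factor $\sigma$.
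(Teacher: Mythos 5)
Your proposal is correct and follows essentially the same route as the paper's appendix proof: the multiclass analogue of the per-round recurrence (via Lemma~\ref{lem:BeckandTeboulle} and strong convexity), the $M_t$/$L_t$ bookkeeping, the key observation $p_t \le |\Psi_t(\theta_t)|$ invoked only after the quadratic cancellation, then telescoping with the $\gamma$-condition, the Bernoulli averaging with $r=\sigma$, and the choice $C=\sigma/G^2$ together with $l^\sigma_t(v_t)\le\alpha^*$. Your explicit care about the direction $M_t\tau_t\ge\min\{C,r/G^2\}M_t$ and the conditional independence of $Z_t$ from $y_t$ is sound (the paper's corresponding display even has the inequality sign typeset the wrong way), so there is no gap.
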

The proof shares the same idea with the binary case. Before proving the theorem, we shall begin with important lemmas. 

Denote $f^r_t(\theta) = \max\{0,r-\Psi_t(\theta)\}.$ We first give the recursive of the multiclass case.
\begin{lemma}\label{lem:firstInequationMulticlass}
    Take the same assumptions as Lemma~\ref{thm:labelErrorGeneralMulticlass}. Then for algorithm~\ref{algorithm:MOAST}, the following inequality holds
    \begin{align*}
        &\tau_t r  - \tau_t \Psi_t(\theta_t) - \frac{\tau_t^2}{2}\|\nabla \Psi_t(\theta_t)\|^2_* \leq D_{\mathcal{R}} (v_t,\theta_t) - D_{\mathcal{R}} (v_t,\theta_{t+1}) + \tau_t f^r_t(v_t) .
    \end{align*}
    for $r >0.$
\end{lemma}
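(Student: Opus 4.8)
The plan is to copy the proof of Lemma~\ref{lem:firstInequation} almost verbatim, replacing the binary margin $y_t H_t(\cdot)$ by the multiclass margin $\Psi_t(\cdot)$ throughout, and noting that the aggressive update in Algorithm~\ref{algorithm:MOAST} is written directly in terms of $\nabla\Psi_t(\theta_t)$, so the scalar factor $y_t$ that appeared in the binary update simply disappears. First I would start from the definition $f^r_t(v_t)=\max\{0,r-\Psi_t(v_t)\}$, which gives $r-f^r_t(v_t)\le\Psi_t(v_t)=(\Psi_t(v_t)-\Psi_t(\theta_t))+\Psi_t(\theta_t)$. By Assumption~\ref{amp:convexitymulticlass} the function $-\Psi_t$ is convex, so $\Psi_t(v_t)-\Psi_t(\theta_t)\le\langle-\nabla\Psi_t(\theta_t),\theta_t-v_t\rangle$, which I split as $\langle-\nabla\Psi_t(\theta_t),\theta_{t+1}-v_t\rangle+\langle-\nabla\Psi_t(\theta_t),\theta_t-\theta_{t+1}\rangle$.

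Next I would bound the two inner products. Applying Lemma~\ref{lem:BeckandTeboulle} to the update $\theta_{t+1}=\arg\min_{\theta\in\mathcal{K}}\{-\tau_t\langle\nabla\Psi_t(\theta_t),\theta\rangle+D_{\mathcal{R}}(\theta,\theta_t)\}$ (with $a=-\tau_t\nabla\Psi_t(\theta_t)$, $c=\theta_t$, $d=v_t$) yields $\langle-\nabla\Psi_t(\theta_t),\theta_{t+1}-v_t\rangle\le\frac1{\tau_t}\bigl(D_{\mathcal{R}}(v_t,\theta_t)-D_{\mathcal{R}}(v_t,\theta_{t+1})-D_{\mathcal{R}}(\theta_{t+1},\theta_t)\bigr)$. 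For the second inner product, H\"older's inequality together with $ab\le\frac{\tau_t}2a^2+\frac1{2\tau_t}b^2$ gives $\langle-\nabla\Psi_t(\theta_t),\theta_t-\theta_{t+1}\rangle\le\frac{\tau_t}2\|\nabla\Psi_t(\theta_t)\|_*^2+\frac1{2\tau_t}\|\theta_{t+1}-\theta_t\|^2$. Then I would invoke $1$-strong convexity of $\mathcal{R}$, i.e. $D_{\mathcal{R}}(\theta_{t+1},\theta_t)\ge\frac12\|\theta_{t+1}-\theta_t\|^2$, so that the term $-\frac1{\tau_t}D_{\mathcal{R}}(\theta_{t+1},\theta_t)$ absorbs $+\frac1{2\tau_t}\|\theta_{t+1}-\theta_t\|^2$. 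Collecting everything gives $r-f^r_t(v_t)\le\frac1{\tau_t}\bigl(D_{\mathcal{R}}(v_t,\theta_t)-D_{\mathcal{R}}(v_t,\theta_{t+1})\bigr)+\frac{\tau_t}2\|\nabla\Psi_t(\theta_t)\|_*^2+\Psi_t(\theta_t)$, and multiplying through by $\tau_t$ and rearranging produces exactly the claimed inequality.

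I do not expect a real obstacle here; the lemma is a mechanical restatement of the binary case. The only points worth checking are (i) that the aggressive update reduces to $\theta_{t+1}=\theta_t$ with $\tau_t=0$ whenever $Z_t=0$ or $\Psi_t(\theta_t)\ge\sigma$, so the inequality holds trivially in those rounds (every $\tau_t$-dependent term vanishes and the Bregman difference is $0$), and the argument above therefore only needs the generic mirror-descent update; (ii) that Assumption~\ref{amp:convexitymulticlass} supplies both the convexity of $-\Psi_t$ used in the convexity step and the bound $\|\nabla\Psi_t(\theta_t)\|_*\le G$ that will be invoked downstream (though $G$ itself does not enter the statement of this lemma); and (iii) that no positivity of $r$ beyond $r>0$ is needed, since $f^r_t$ is defined for any such $r$. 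With this lemma in hand, the remaining multiclass pseudolabel-error analysis proceeds exactly as in the binary development of Lemmas~\ref{lem:instantaneousError}--\ref{lem:PA2InstantaneousError}, with $\Psi_t$ in place of $y_tH_t$.
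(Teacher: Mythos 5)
Your proposal is correct and follows essentially the same route as the paper's own proof: the same decomposition of $r-f^r_t(v_t)$ via convexity of $-\Psi_t$, the same application of Lemma~\ref{lem:BeckandTeboulle} to the aggressive update, the same H\"older--Young bound on $\langle-\nabla\Psi_t(\theta_t),\theta_t-\theta_{t+1}\rangle$, and the same absorption of $\frac{1}{2\tau_t}\|\theta_{t+1}-\theta_t\|^2$ via $1$-strong convexity of $\mathcal{R}$. Your explicit remark that the inequality holds trivially when $\tau_t=0$ (so the division by $\tau_t$ in the mirror-descent step is never an issue) is a small point the paper leaves implicit, but it does not change the argument.
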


\begin{proof}
    First, by the definition of $f^r_t,$ we have
    \begin{align*}
        r -  f^r_t(v_t)  &  = r - \max\{0,r-\Psi_t(v_t)\} \leq  \Psi_t(v_t) \\
        & = \Psi_t(v_t) - \Psi_t(\theta_t)  + \Psi_t(\theta_t).  
    \end{align*}
    By the convexity of $-\Psi_t(\cdot),$ we have
    \begin{align*}
        \Psi_t(v_t)  - \Psi_t(\theta_t) & = - \Psi_t(\theta_t) - (-\Psi_t(v_t)) \\
        & \leq \langle -\nabla\Psi_t(\theta_t), \theta_t - v_t \rangle \\
        & \leq \langle - \nabla \Psi_t(\theta_t), \theta_{t+1} - v_t \rangle + \langle - \Psi_t(\theta_t), \theta_{t} - \theta_{t+1} \rangle.
    \end{align*}
    By the update rule of $\theta$ and Lemma~\ref{lem:BeckandTeboulle}, the first term can be bounded that
    \begin{align*}
        &\langle -\nabla\Psi_t(\theta_t), \theta_{t+1} - v_t \rangle \leq \frac{1}{\tau_t} (D_{\mathcal{R}}(v_t,\theta_t) - D_{\mathcal{R}}(v_t,\theta_{t+1}) - D_{\mathcal{R}} (\theta_{t+1},\theta_{t})  ).
    \end{align*}
    Due to H\"older inequality and the fact that $ab\leq \frac{\eta}{2} a^2 + \frac{1}{2\eta}G^2$ for $\eta >0,$ we obtain for the second term
    \begin{align*}
        \langle -\nabla\Psi_t(\theta_t), \theta_{t} - \theta_{t+1} \rangle &  \leq   \|\nabla\Psi_t(\theta_t)\|_* \|\theta_{t+1} - \theta_t\|\\
        & \leq \frac{\tau_t}{2} \| \nabla\Psi_t(\theta_t)\|_*^2 + \frac{1}{2\tau_t} \|\theta_{t+1} - \theta_t\|^2.
    \end{align*}
    Due to the strong convexity of regularizer $\mathcal{R},$ we have $D_{\mathcal{R}}(x, y) \geq \frac{1}{2}\|x-y\|^{2}$ for any $x, y \in \mathcal{X}$ \citep{mohri2018foundations}. Therefore, by plugging the above term, we obtain that
    \begin{align*}
        r -  f^r_t(v_t)  & \leq \frac{1}{\tau_t} (D_{\mathcal{R}}(v_t,\theta_t) - D_{\mathcal{R}}(v_t,\theta_{t+1}) -\frac{1}{2} \|\theta_{t+1} - \theta_t\|^2) \\
        & \quad + \frac{\tau_t}{2} \| \nabla\Psi_t(\theta_t)\|_*^2 + \frac{1}{2\tau_t} \|\theta_{t+1} - \theta_t\|^2 +  \Psi_t(\theta_t).
    \end{align*}
    By rearranging, we have
    \begin{align*}
        &\tau_t r  - \tau_t \Psi_t(\theta_t) - \frac{\tau_t^2}{2}\|\nabla\Psi_t(\theta_t)\|^2_* \leq D_{\mathcal{R}} (v_t,\theta_t) - D_{\mathcal{R}} (v_t,\theta_{t+1}) + \tau_t f^r_t(v_t) .
    \end{align*}
\end{proof}
Denote the instantaneous mistake by $M_t (w) = \mathbf{1}_{\hat{y}_t \neq y_t},$ and let $L_t (w) = \mathbf{1}_{\hat{y}_t = y_t, \Psi_t(w)\leq r}$ to be the indicator of the right decision but in the margin $r$, where $\mathbf{1}_{(\cdot)}$ is the indicator function. We then have the following relationship
\begin{lemma}\label{lem:instantaneousErrormulticlass}
    Take the same assumptions as Lemma~\ref{thm:labelErrorGeneralMulticlass}. For Algorithm~\ref{algorithm:MOAST}, let $\tau_t = 0$ if $ {f^r_t(\theta_t)}=0,$ then the following inequality holds for every $t$
    \begin{align*}
        & M_t Z_t \tau_t(r + |\Psi_t(\theta_t)| - \frac{\tau_t}{2}\|\nabla \Psi_t(\theta_t)\|^2_*)  + L_t Z_t \tau_t(r - |\Psi_t(\theta_t)| - \frac{\tau_t}{2}\|\nabla \Psi_t(\theta_t)\|^2_*) \\
        & \leq D_{\mathcal{R}}(v_t,\theta_t)-D_{\mathcal{R}}(v_t,\theta_{t+1}) + \tau_t f^r_t(v_t) 
    \end{align*}
    for $r>0.$
\end{lemma}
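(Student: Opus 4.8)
The plan is to mirror the proof of the binary counterpart, Lemma~\ref{lem:instantaneousError}, replacing the scalar quantity $y_tH_t(\theta_t)$ by the multiclass margin $\Psi_t(\theta_t)$ and using Lemma~\ref{lem:firstInequationMulticlass} in place of the scalar recursion. Since $\Psi_t$ is itself real-valued and $-\Psi_t$ is convex with $\|\nabla\Psi_t(\theta_t)\|_*\le G$ by Assumption~\ref{amp:convexitymulticlass}, every structural ingredient used in the binary argument is available.

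First I would fix the sign of $\Psi_t(\theta_t)$ on the two relevant events. Because the pseudolabel is the score-maximizing class, $\hat y_t=\arg\max_{s\in\mathcal{Y}}H_t^s(\theta_t)$, on $\{M_t=1\}=\{\hat y_t\neq y_t\}$ we have $H_t^{y_t}(\theta_t)\le\max_{s\neq y_t}H_t^s(\theta_t)$, i.e.\ $\Psi_t(\theta_t)\le 0$, so there $\Psi_t(\theta_t)=-|\Psi_t(\theta_t)|$; symmetrically, on $\{L_t=1\}\subseteq\{\hat y_t=y_t\}$ the true class attains the maximal score, so $\Psi_t(\theta_t)\ge 0$ and $\Psi_t(\theta_t)=|\Psi_t(\theta_t)|$ there. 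I would then multiply the inequality of Lemma~\ref{lem:firstInequationMulticlass} by the nonnegative weight $M_tZ_t$ and insert the first identity, multiply it by $L_tZ_t$ and insert the second identity, and add the two resulting inequalities. Using $M_tL_t=0$ (a round is never simultaneously a mistake round and a correct-but-in-margin round), the left-hand side becomes exactly the one in the claim and the right-hand side becomes $(M_t+L_t)Z_t\big(D_{\mathcal{R}}(v_t,\theta_t)-D_{\mathcal{R}}(v_t,\theta_{t+1})+\tau_tf_t^r(v_t)\big)$ with $(M_t+L_t)Z_t\in\{0,1\}$.

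It then remains to drop this $\{0,1\}$-valued prefactor. I would check that whenever $(M_t+L_t)Z_t=0$ the bracket vanishes as well: if $Z_t=0$ then Algorithm~\ref{algorithm:MOAST} leaves $\theta_{t+1}=\theta_t$ and never performs the aggressive update, so $\tau_t$ may be taken to be $0$; and if $M_t+L_t=0$ then $\hat y_t=y_t$ together with $\Psi_t(\theta_t)>r$ give $f_t^r(\theta_t)=\max\{0,r-\Psi_t(\theta_t)\}=0$, hence $\tau_t=0$ by the stated convention and again $\theta_{t+1}=\theta_t$. In both cases $D_{\mathcal{R}}(v_t,\theta_t)-D_{\mathcal{R}}(v_t,\theta_{t+1})+\tau_tf_t^r(v_t)=0$, so multiplying it by $(M_t+L_t)Z_t$ leaves it unchanged, which is exactly the stated inequality.

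I do not anticipate a real obstacle: no idea beyond the binary proof is needed, the single point requiring care being the last case analysis --- in particular the inclusion $\{M_t+L_t=0\}\subseteq\{f_t^r(\theta_t)=0\}$, which lets the convention ``$\tau_t=0$ when $f_t^r(\theta_t)=0$'' force $\tau_t=0$ on that event and hence makes the prefactor on the right-hand side removable.
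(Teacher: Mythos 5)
Your proposal is correct and follows essentially the same route as the paper: multiply the recursion of Lemma~\ref{lem:firstInequationMulticlass} by the indicators $M_tZ_t$ and $L_tZ_t$, use the sign of $\Psi_t(\theta_t)$ on each event, and then remove the $(M_t+L_t)Z_t$ prefactor by noting that on its complement $\tau_t=0$ and $\theta_{t+1}=\theta_t$, so the right-hand side vanishes anyway. Your explicit verification that $\Psi_t(\theta_t)\le 0$ on $\{M_t=1\}$ and $\Psi_t(\theta_t)\ge 0$ on $\{L_t=1\}$ is a detail the paper's multiclass proof leaves implicit (it is spelled out only in the binary case), but it is exactly the intended argument.
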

\begin{proof}
    From Lemma~\ref{lem:firstInequationMulticlass}, we know that
    \begin{align*}
        &\tau_t r  - \tau_t \Psi_t(\theta_t) - \frac{\tau_t^2}{2}\|\nabla\Psi_t(\theta_t)\|^2_* \leq D_{\mathcal{R}} (v_t,\theta_t) - D_{\mathcal{R}} (v_t,\theta_{t+1}) + \tau_t f^r_t(v_t) .
    \end{align*}
    Therefore, we can obtain
    \begin{align*}
        & M_t Z_t \tau_t(r + |\Psi_t(\theta_t) | - \frac{\tau_t}{2}\|\nabla \Psi_t(\theta_t) \|^2_*)  + L_t Z_t \tau_t(r - |\Psi_t(\theta_t) | - \frac{\tau_t}{2}\|\nabla \Psi_t(\theta_t) \|^2_*) \\
        & \leq M_t Z_t (D_{\mathcal{R}}(v_t,\theta_t)-D_{\mathcal{R}}(v_t,\theta_{t+1}) + \tau_t f^r_t(v_t)) + L_t Z_t (D_{\mathcal{R}}(v_t,\theta_t)-D_{\mathcal{R}}(v_t,\theta_{t+1}) + \tau_t f^r_t(v_t))\\
        & = (M_t + L_t) Z_t (D_{\mathcal{R}}(v_t,\theta_t)-D_{\mathcal{R}}(v_t,\theta_{t+1}) + \tau_t f^r_t(v_t)).
    \end{align*}
    From Algorithm~\ref{algorithm:MOAST}, we know that if $Z_t=0,$ then $\tau_t = 0, \theta_t = \theta_{t+1}.$ And if $M_t + L_t =0,$ we get $\Psi_t(\theta_t) \geq r,$ then $\tau_t=0, \theta_t = \theta_{t+1}.$ Therefore, we have
    \begin{align*}
        (M_t + L_t) Z_t (D_{\mathcal{R}}(v_t,\theta_t)-D_{\mathcal{R}}(v_t,\theta_{t+1}) + \tau_t f^r_t(v_t)) = D_{\mathcal{R}}(v_t,\theta_t)-D_{\mathcal{R}}(v_t,\theta_{t+1}) + \tau_t f^r_t(v_t).
    \end{align*}
    We finally get
    \begin{align*}
        & M_t Z_t \tau_t(r + |\Psi_t(\theta_t))| - \frac{\tau_t}{2}\|\nabla \Psi_t(\theta_t)\|^2_*)  + L_t Z_t \tau_t(r - |\Psi_t(\theta_t)| - \frac{\tau_t}{2}\|\nabla \Psi_t(\theta_t)\|^2_*) \\
        & \leq D_{\mathcal{R}}(v_t,\theta_t)-D_{\mathcal{R}}(v_t,\theta_{t+1}) + \tau_t f^r_t(v_t). 
    \end{align*}
\end{proof}
Next, we give a similar result as Lemma~\ref{lem:PA2InstantaneousError} of binary case.
\begin{lemma}\label{lem:PA2InstantaneousErrorMulticlass}
    Take the same assumptions as Lemma~\ref{thm:labelErrorGeneralMulticlass}. Then for Algorithm~\ref{algorithm:MOAST}, let $\tau_t = \min\{C,\frac{f^r_t(\theta_t)}{\| \nabla \Psi_t(\theta_t)\|^2_*}\}.$ then the following inequality holds
    \begin{align*}
       & \min \{C,\frac{r}{G^2}\}\frac{1}{2}M_t Z_t (r + p_t ) \leq D_{\mathcal{R}} (v_t,\theta_t) - D_{\mathcal{R}} (v_t,\theta_{t+1}) +  \tau_t f_t^r(v_t) ,
    \end{align*}
    for $r\leq R.$
\end{lemma}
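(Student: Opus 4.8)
Here is how I would prove Lemma~\ref{lem:PA2InstantaneousErrorMulticlass}.

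The plan is to transcribe the proof of the binary general case (Lemma~\ref{lem:PA2InstantaneousError}) into the multiclass setting of Algorithm~\ref{algorithm:MOAST}, the only genuinely new ingredient being a comparison between the true‑label margin $\Psi_t(\theta_t)$ and the pseudolabel confidence $p_t$ that actually drives the Bernoulli query parameter. Throughout write $f^r_t(\theta)=\max\{0,r-\Psi_t(\theta)\}$, so that $\tau_t=\min\{C,\,f^r_t(\theta_t)/\|\nabla\Psi_t(\theta_t)\|^2_*\}\le f^r_t(\theta_t)/\|\nabla\Psi_t(\theta_t)\|^2_*$.

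First I would invoke Lemma~\ref{lem:instantaneousErrormulticlass}, which gives for each $t$ that $M_tZ_t\tau_t\bigl(r+|\Psi_t(\theta_t)|-\tfrac{\tau_t}{2}\|\nabla\Psi_t(\theta_t)\|^2_*\bigr)+L_tZ_t\tau_t\bigl(r-|\Psi_t(\theta_t)|-\tfrac{\tau_t}{2}\|\nabla\Psi_t(\theta_t)\|^2_*\bigr)\le D_{\mathcal{R}}(v_t,\theta_t)-D_{\mathcal{R}}(v_t,\theta_{t+1})+\tau_t f^r_t(v_t)$. Since $\tfrac{\tau_t}{2}\|\nabla\Psi_t(\theta_t)\|^2_*\le\tfrac12 f^r_t(\theta_t)$, I argue case by case exactly as in Lemma~\ref{lem:PA1InstantaneousError}: on a round with $L_t=1$ we have $\hat y_t=y_t$, so $\Psi_t(\theta_t)=p_t\in[0,r]$ and $f^r_t(\theta_t)=r-\Psi_t(\theta_t)$, which makes the $L_t$‑summand nonnegative and lets me drop it; on a round with $M_t=1$ we have $\Psi_t(\theta_t)\le0$, so $f^r_t(\theta_t)=r+|\Psi_t(\theta_t)|$ and hence $r+|\Psi_t(\theta_t)|-\tfrac{\tau_t}{2}\|\nabla\Psi_t(\theta_t)\|^2_*\ge\tfrac12\bigl(r+|\Psi_t(\theta_t)|\bigr)$. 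Combining these (both sides vanish when $M_t=0$), I get $\tfrac{\tau_t}{2}M_tZ_t\bigl(r+|\Psi_t(\theta_t)|\bigr)\le D_{\mathcal{R}}(v_t,\theta_t)-D_{\mathcal{R}}(v_t,\theta_{t+1})+\tau_t f^r_t(v_t)$, which mirrors the binary derivation with $y_tH_t$ replaced by $\Psi_t$.

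The genuinely multiclass step is to trade $|\Psi_t(\theta_t)|$ for $p_t$, since it is $p_t$, not the true‑label margin, that appears in the target inequality and in the query probability. On a mistake round, $\hat y_t=\arg\max_{s\in\mathcal{Y}}H^s_t(\theta_t)\neq y_t$, so $\max_{s\neq y_t}H^s_t(\theta_t)=H^{\hat y_t}_t(\theta_t)$ and therefore $|\Psi_t(\theta_t)|=H^{\hat y_t}_t(\theta_t)-H^{y_t}_t(\theta_t)$; moreover $H^{y_t}_t(\theta_t)\le\max_{s\neq\hat y_t}H^s_t(\theta_t)$ because $y_t$ is a class distinct from $\hat y_t$, whence $p_t=H^{\hat y_t}_t(\theta_t)-\max_{s\neq\hat y_t}H^s_t(\theta_t)\le|\Psi_t(\theta_t)|$. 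Since $r+p_t\ge0$, this upgrades the previous inequality to $\tfrac{\tau_t}{2}M_tZ_t(r+p_t)\le D_{\mathcal{R}}(v_t,\theta_t)-D_{\mathcal{R}}(v_t,\theta_{t+1})+\tau_t f^r_t(v_t)$.

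Finally I would lower‑bound $\tau_t$ on mistake rounds: there $f^r_t(\theta_t)=r+|\Psi_t(\theta_t)|\ge r$ and $\|\nabla\Psi_t(\theta_t)\|_*\le G$ by Assumption~\ref{amp:convexitymulticlass}, so $f^r_t(\theta_t)/\|\nabla\Psi_t(\theta_t)\|^2_*\ge r/G^2$ and hence $\tau_t\ge\min\{C,r/G^2\}$; since both sides vanish when $M_t=0$, we obtain $M_t\tau_t\ge M_t\min\{C,r/G^2\}$ unconditionally. Substituting this (using $r+p_t\ge0$) yields $\min\{C,r/G^2\}\tfrac12 M_tZ_t(r+p_t)\le D_{\mathcal{R}}(v_t,\theta_t)-D_{\mathcal{R}}(v_t,\theta_{t+1})+\tau_t f^r_t(v_t)$, which is the claim. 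I expect the only delicate point to be the $|\Psi_t(\theta_t)|\ge p_t$ comparison on mistake rounds; everything else is a mechanical port of the binary argument, and the statement then feeds into the proof of Lemma~\ref{thm:labelErrorGeneralMulticlass} exactly as Lemma~\ref{lem:PA2InstantaneousError} feeds into Lemma~\ref{thm:labelErrorGeneral}.
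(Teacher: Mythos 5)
Your proof is correct and follows essentially the same route as the paper's: invoke Lemma~\ref{lem:instantaneousErrormulticlass}, drop the nonnegative $L_t$ summand using $\tau_t\|\nabla\Psi_t(\theta_t)\|_*^2\le f^r_t(\theta_t)$, lower-bound $M_t\tau_t$ by $M_t\min\{C,\,r/G^2\}$ on mistake rounds, and replace $|\Psi_t(\theta_t)|$ by $p_t$ via $p_t\le|\Psi_t(\theta_t)|$ when $\hat y_t\neq y_t$. You even state the stepsize comparison in the correct direction ($M_t\tau_t\ge M_t\min\{C,\,r/G^2\}$, whereas the paper's proof writes it with a ``$\le$'' typo) and keep $\tau_t f^r_t(v_t)$ on the right-hand side, which matches the lemma statement exactly rather than the slightly looser $C f^r_t(v_t)$ the paper ends with.
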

\begin{proof}
    First, according to Lemma~\ref{lem:instantaneousErrormulticlass}, we have
    \begin{align*}
        & M_t Z_t \tau_t(r + |\Psi_t(\theta_t)| - \frac{\tau_t}{2}\|\nabla \Psi_t(\theta_t)\|^2_* )\quad + L_t Z_t \tau_t(r - |\Psi_t(\theta_t)| - \frac{\tau_t}{2}\|\nabla \Psi_t(\theta_t)\|^2_*) \\
        &  \leq D_{\mathcal{R}}(v_t,\theta_t)-D_{\mathcal{R}}(v_t,\theta_{t+1}) + \tau_t f^r_t(v_t).
    \end{align*}
    Since we take 
    \begin{equation*}
        \tau_t =  \min\{C,\frac{f^r_t (\theta_t)}{\|\nabla \Psi_t(\theta_t)\|^2_*}\}\leq  f^r_t (\theta_t)/ \|\nabla \Psi_t(\theta_t)\|^2_*.
    \end{equation*}
    Similar to Lemma~\ref{lem:PA1InstantaneousError}, we have
    \begin{align*}
       & \frac{\tau_t}{2}M_t Z_t (r + |\Psi_t(\theta_t)| ) \leq D_{\mathcal{R}} (v_t,\theta_t) - D_{\mathcal{R}} (v_t,\theta_{t+1}) +  \tau_t f_t^r(v_t).
    \end{align*}
    Since we know that
    \begin{align*}
        M_t\tau_t =  M_t\min\{C,\frac{f^r_t (\theta_t)}{\|\nabla \Psi_t(\theta_t)\|^2_*}\}\leq  M_t\min \{{C},\frac{r}{G^2}\}.
    \end{align*}
    Since $\tau_t\leq C.$ We therefore have
    \begin{align*}
       & \min \{C,\frac{r}{G^2}\}\frac{1}{2}M_t Z_t (r + |\Psi_t(\theta_t)| ) \leq D_{\mathcal{R}} (v_t,\theta_t) - D_{\mathcal{R}} (v_t,\theta_{t+1}) +  C f_t^r(v_t) .
    \end{align*}
    By the definition, we could infer that $p_t\leq |\Psi_t(\theta_t)|.$ Because if $\hat{y}=y$ then $p_t = |\Psi_t(\theta_t)|,$ and if $\hat{y}\neq y$ then $H_t^y(\theta_t) \leq H_t^{s_t}(\theta_t),$ where $s_t= \max_{s_t\neq\hat{y}_t, s_t\in \mathcal{Y}}H_t^{s_t}(\theta_t)$, which leads to 
    \begin{equation*}
        p_t = H_t^{\hat{y}} (\theta_t) - H_t^{s_t}(\theta_t) \leq H_t^{\hat{y}} (\theta_t) - H_t^y(\theta_t)= |\Psi_t(\theta_t)|.
    \end{equation*} 
    Thus we obtain
    \begin{align*}
       \min \{C,\frac{r}{G^2}\}\frac{1}{2}M_t Z_t (r + p_t ) & \leq \min \{C,\frac{r}{G^2}\}\frac{1}{2}M_t Z_t (r + |\Psi_t(\theta_t)| ) \\
       & \leq D_{\mathcal{R}} (v_t,\theta_t) - D_{\mathcal{R}} (v_t,\theta_{t+1}) +  C f_t^r(v_t) .
    \end{align*}
\end{proof}
Within the above lemmas, we are now ready to prove the Theorem~\ref{thm:labelErrorGeneralMulticlass}.
\begin{proof}[Proof of Lemma~\ref{thm:labelErrorGeneralMulticlass}]
    First by the proof of Lemma~1, we have
    \begin{align*}
        & \sum_{t=1}^T D_{\mathcal{R}} (v_t,\theta_t) - D_{\mathcal{R}} (v_t,\theta_{t+1})  \leq  \epsilon_v +  \gamma C_T.
    \end{align*}
    Then 
    \begin{align*}
        & \mathbb{E}[\sum_{t=1}^T M_t]  = \frac{1}{r} \mathbb{E}[\sum_{t=1}^T M_t Z_t (r + p_t ) ] \\
        & = \frac{2}{r^2} \max\{\frac{r}{C},{G^2}\} \mathbb{E}[\sum_{t=1}^T \min \{C,\frac{r}{G^2}\}\frac{1}{2}M_t Z_t (r + p_t )]\\ 
        & \leq \frac{2}{r^2} \max\{\frac{r}{C},{G^2}\} \mathbb{E}[\sum_{t=1}^T D_{\mathcal{R}} (v_t,\theta_t) - D_{\mathcal{R}} (v_t,\theta_{t+1})+ \sum_{t=1}^T C f_t^r(v_t)]\\ 
        & \leq \frac{2}{r^2} \max\{\frac{r}{C},{G^2}\}  (\epsilon_v + \gamma C_T + C \sum_{t=1}^T l_t^r(v_t))\\
        & = \frac{2G^2}{\sigma^2} (\gamma C_T + \epsilon_v + \frac{\sigma}{G^2} T \alpha^* ) ,
    \end{align*}
    where $r = \sigma,C = {\sigma}/{G^2}$. The second inequality comes from $l_t^r(v_t) = \mathbb{E}[f_t^r(v_t)]$, and the last inequality comes from $l_t^r(v_t) \leq l_t^R(v_t) \leq \alpha^*.$ We thus end the proof.
\end{proof}

\subsubsection{Regret Bound}
The regret bound analysis is actually the same as the binary case, since the Proposition~\ref{prop:TotalVariation} and Theorem~\ref{thm:generalBound} do not depend on the number of class. For contentedness, we present the result as follows.
\begin{theorem}[Regret Bound]\label{thm:ExpectedRegretNoiseGradientGeneralMulticlass}
  Under the same conditions and parameters in Lemma~\ref{thm:labelErrorGeneralMulticlass}. Algorithm~\ref{algorithm:MOAST} achieves the following regret bound
    \begin{align*}
      &\text{\rm D-Regret}^{\text{OSAMD}}(\{P_t\},T) \leq    \frac{4(\eta G^4 + G^3 D)}{\sigma^2}  (\gamma C_T + \epsilon_v + \frac{\sigma}{G^2} T \alpha^* ) + 2(LD+G)^2\eta T + \frac{\epsilon_w + \gamma D}{\eta} + 4\sqrt{\frac{\gamma D T F V_T }{\eta}}.
    \end{align*}
\end{theorem}
The proof is also the same as Theorem~4. From this, we know that our result still works in the multiclass case.

\section{Experimental Details}\label{supp:additionalexperiment}
In this section, we provide experimental details in Section~6.

\textbf{Datasets~~}
We provide detailed descriptions of our datasets as follows:
\begin{enumerate}[\hspace{0em}1.]
    \item \emph{Rotating Gaussian}: We simulate a non-stationary environment with continual domain shift. We use two Gaussian distributions with center points $(5,0)$ and $(15,0),$ and covariance matrix $3 I$ ($I$ denotes identity matrix), to represent class $1$ and $-1.$ We let the center points averagely rotate from $0\degree$ to $180\degree$ counterclockwise in $2000$ time steps, and in each time, we sample one data instance. Therefore, every data sample comes from a different domain. All the time, we keep $P(Y = 1) = P(Y = -1) = 1/2.$ 
    \item \emph{Rotating MNIST}: We randomly select and shuffle 35000 images from the original MNIST dataset, using the first 10,000 images with no rotation as the source dataset. We averagely rotate the next 25,000 images from $0\degree$ to $90\degree$ counterclockwise to be the target dataset with a continually changing domain.
    \item \emph{Portraits}: It is a realistic dataset, which contains 37,921 photos of high school seniors labeled by gender across many years. This real dataset suffers from a natural continual domain shift, including covariate shift and label shift, as shown in previous works~\citep{ginosar2015century,kumar2020understanding}. We downsample all the images to 32x32, and do no other preprocessing.  We take the first 2000 images as the source domain. We use the next 16000 images as target data with a continually changing domain, and test the online adaptation.
    \item \emph{Cover-Type}: It is a realistic dataset from the UCI repository. This goal is to predict the forest Cover-Type at a particular location given 54 features~\citep{blackard1999comparative}. The original Cover-Type dataset contains 581012 samples and has 7 type classes to be predicted. In our experiment, we leave the examples in the first two classes (which compose the majority of the dataset, have 500k samples in total) and sort the examples by increasing horizontal distance to the water body. Then we split the data into a source domain (first 50K examples), an intermediate domain (next 400K examples), and a target domain (final 50K examples). This dataset setting follows the setting in~\citet{kumar2020understanding}.
\end{enumerate}

\textbf{Baselines~~} We provide a detailed introduction about the baselines we compare. Since we are the first to study the OACA setting, no specific baseline is suitable for this setting. To demonstrate the efficacy of our design, we compare with the following baselines:
\begin{enumerate}[\hspace{0em}1.]
    \item \textit{Passive-aggressive active (PAA) learning:} The design of our aggressive model is similar to online active learning (OAL) algorithms. To demonstrate the advantage of the online teacher-student structure, we compare with one typical algorithm PAA. Although OAL is a well-studied topic of statistical learning, most recently proposed methods are not suitable for implementation in deep learning settings. For instance, the second-order OAL algorithm \citep{hao2017second} is designed only for the linear model, and requires additional computation cost of the second-order matrix. More recently, \citet{zhang2018online,zhang2019online} studied OAL with class imbalance, but provided no additional improvement in the balance cases. Therefore, comparing with PAA is sufficient to show the advantage against online active learning.
    \item \textit{Online mirror descent with all labels (OMD (all)):} Our theory shows that the regret of OSAMD is aligned with the lower bound for online learning with full labels. To verify the theoretical results, we compare with online mirror descent with all labels, which has been shown to attain the lower bound~\citep{besbes2015non,jadbabaie2015online}. By observing whether the regret or accumulated loss of OSAMD is aligned with OMD (all), we could empirically verify the theory.
    \item \textit{Online mirror descent with uniform sampled labels (OMD (partial)):} Online mirror descent with uniform sampled labels is the naive way to deal with the OACA problem. By comparing it with OMD (all), we could know whether the naive method can solve this problem. By comparing it with OSAMD, we can demonstrate the advantage of our sophisticated design.
\end{enumerate}
Next, we introduce the baselines for the ablation study. Recall the online teacher-student structure consists of self-adaptation and active query, we then compare with the following baselines to show the efficacy of each component:
\begin{enumerate}[\hspace{0em}1.]
    \item \textit{OSAMD without Self-adaptation:} To evaluate the efficacy of self-adaptation, we run a OMD with the same active queries as OSAMD. By comparing it with OSAMD, we could empirically observe the efficacy of the design of self-adaptation.
    \item \textit{OSAMD without Active-query:} To evaluate the efficacy of self-adaptation, we replace the active queries with uniform sampled labels in OSAMD. By comparing it with OSAMD, we could empirically observe the efficacy of the design of active-query.
\end{enumerate}

\textbf{Model and Parameters setting~~} We provide our model and parameters setting as follows:
\begin{enumerate}[\hspace{0em}1.]
    \item \emph{Models}: 
    \begin{itemize}
        \item For Rotating Gaussian, we set objective function to be the svm loss $f(w;x,y)=\max\{0,1-y w^T x\} + C\|w\|_2^2$ with penalty parameter $0.2,$ the soft prediction is $H(w) = w^T x.$
        \item For Rotating MNIST and Portraits, we design the same neural network feature extractor with two conv layers. We use filter size of 5×5, stride of 2×2, 64 output channels, and relu activation for each layer. After the final convolution layer, we add a dropout layer with a probability of 0.5 and a batchnorm layer after dropout. 
        The extracted features are then flattened and fed into fully connected layers with 2 and 10 outputs, respectively, for Portraits and Rotating MNIST. Each of the output neurons is matched with a specific prediction class.
        \item For Cover-Type, we used a two hidden layer feedforward. Each linear hidden layer contains 30 neurons. Dropout layer with a probability of 0.5 is added before the final fully connected layer. The final output is activated by softmax.
    \end{itemize} 
    \item \emph{Parameters}: 
    \begin{itemize}
        \item For Rotating Gaussian, the step size $\eta$ is set to be $0.01$ for both OMD and OSAMD. We set active controller $\sigma=0.35,$ and aggressive step size $\tau_t = \min\{1, \max\{0,1-y_t \theta_t^T x_t\}/\|x_t\|^2_2\}$ for OSAMD. We use $l_2$ norm as the regularizer $\mathcal{R}$, and initialize all the models with $[0.4,0,-4].$ For the implicit gradient update of self adaptation, we run 20 inner gradient descent loops to approximate the optimal.
        \item For Rotating MNIST, the step size $\eta$ is set to be $0.000005$ for both OMD and OSAMD (abbreviation for MOSAMD). We set active controller $\sigma=0.2,$ and aggressive step size $\tau_t = \min\{0.006, 0.0027*\max\{0,1-y_t \Psi_t(\theta_t)\}\}$ for OSAMD, where $\Psi_t(\theta_t)$ is the margin function of the deep learning model. $l_2$ norm is used as the regularizer $\mathcal{R}$ and all the models are initialized with a model pre-trained with the source data, i.e., the first 10000 images. For the implicit gradient update of self adaptation, we run 10 inner gradient descent loops to approximate the optimal.
        \item For Portraits, the step size $\eta$ is set to be $0.000001$ for both OMD and OSAMD. We set active controller $\sigma=0.15,$ and aggressive step size $\tau_t = \min\{0.0025, 0.0012*\max\{0,1-y_t H_t(\theta_t)\}\}$ for OSAMD, where $H_t(\theta_t)$ is the output of the deep learning model. $l_2$ norm is used as the regularizer $\mathcal{R}$ and all the models are initialized with a model pre-trained with the source data, i.e., the first 2000 images. For the implicit gradient update of self adaptation, we run 20 inner gradient descent loops to approximate the optimal.
        \item For Cover-Type, the step size $\eta$ is set to be $0.0000015$ for both OMD and OSAMD. We set active controller $\sigma=0.005,$ and aggressive step size $\tau_t = \min\{0.02, 0.01*\max\{0,1-y_t H_t(\theta_t)\}\}$ for OSAMD, where $H_t(\theta_t)$ is the output of the deep learning model. $l_2$ norm is used as the regularizer $\mathcal{R}$ and all the models are initialized with a model pre-trained with the source data, i.e., the first 50K examples. For the implicit gradient update of self adaptation, we run 5 inner gradient descent loops to approximate the optimal.
    \end{itemize}
\end{enumerate}
\textbf{Implementation:}
\begin{enumerate}[\hspace{0em}1.]
    \item \emph{Set Up:} The training and evaluation of models are realized with PyTorch (\url{https://pytorch.org}). We repeat every experiment over 10 times, and report the mean performance across independent runs. We also present the confidence intervals to eschew the experimental randomness.
    \item \emph{Computation Resources:} We have run the simulation on a single Intel(R) Xeon(R) E5-2650 CPU, and the deep learning experiments on a single 16GB GeForce GTX 1080 Ti GPU.
\end{enumerate}

\section{Social Impact}
For the social impact, as a study on a general learning problem, our work will not incur ethical issues by itself. However, ethical issues may arise if our learning method is improperly applied to some application fields - just as any other general learning method if it is misused.

\end{document}